\documentclass{article} 
\usepackage{iclr2026_conference,times}

\usepackage{amsmath,amsfonts,bm}

\def\eqref#1{equation~\ref{#1}}

\def\1{\bm{1}}

\def\vzero{{\bm{0}}}

\def\va{{\bm{a}}}
\def\vb{{\bm{b}}}

\def\vh{{\bm{h}}}

\def\vx{{\bm{x}}}
\def\vy{{\bm{y}}}
\def\vz{{\bm{z}}}

\def\mW{{\bm{W}}}

\DeclareMathAlphabet{\mathsfit}{\encodingdefault}{\sfdefault}{m}{sl}
\SetMathAlphabet{\mathsfit}{bold}{\encodingdefault}{\sfdefault}{bx}{n}

\def\sC{{\mathbb{C}}}
\def\sD{{\mathbb{D}}}

\newcommand{\R}{\mathbb{R}}

\usepackage{hyperref}
\usepackage{graphicx}
\usepackage{subcaption}
\usepackage{booktabs}
\usepackage{enumitem}
\usepackage{tabularx}
\usepackage{multirow}
\usepackage{url}
\usepackage{amssymb}
\usepackage[autostyle=true]{csquotes}

\usepackage{needspace}

\usepackage{amsthm}
\theoremstyle{plain}
\newtheorem{proposition}{Proposition}[section]
\newtheorem{corollary}[proposition]{Corollary}

\title{Learning Concept Bottleneck Models from Mechanistic Explanations}

\author{\textbf{Antonio De Santis }\normalfont{\textsuperscript{1,2}}\thanks{Work done while visiting at MIT CSAIL. Correspondence to \texttt{antonio.desantis@polimi.it}.}\quad\quad
\textbf{Schrasing Tong}\textsuperscript{2}\quad\quad
\textbf{Marco Brambilla}\textsuperscript{1}\quad\quad
\textbf{Lalana Kagal}\textsuperscript{2} \\
 \\
\textsuperscript{1}Politecnico di Milano \quad \textsuperscript{2}MIT CSAIL\\
}

\iclrfinalcopy 
\begin{document}

\maketitle

\begin{abstract}
Concept Bottleneck Models (CBMs) aim for ante-hoc interpretability by learning a bottleneck layer that predicts interpretable concepts before the decision. State-of-the-art approaches typically select which concepts to learn via human specification, open knowledge graphs, prompting an LLM, or using general CLIP concepts. However, concepts defined a-priori may not have sufficient predictive power for the task or even be learnable from the available data. As a result, these CBMs often significantly trail their black-box counterpart when controlling for information leakage. To address this, we introduce a novel CBM pipeline named Mechanistic CBM (M-CBM), which builds the bottleneck directly from a black-box model’s own learned concepts. These concepts are extracted via Sparse Autoencoders (SAEs) and subsequently named and annotated on a selected subset of images using a Multimodal LLM. For fair comparison and leakage control, we also introduce the Number of Contributing Concepts (NCC), a decision-level sparsity metric that extends the recently proposed NEC metric. Across diverse datasets, we show that M-CBMs consistently surpass prior CBMs at matched sparsity, while improving concept predictions and providing concise explanations. Our code is available at \url{https://github.com/Antonio-Dee/M-CBM}.

\end{abstract}

\section{Introduction}
\label{intro}
As AI systems become increasingly complex and embedded in high-stakes applications such as healthcare, autonomous driving, and defense, there is a growing demand for models that not only perform well but are also transparent and interpretable.
To obtain explanations for AI decisions, we can generally take two approaches: (i) utilize post-hoc methods that try to gain insights into how black-box models produce their outputs, or (ii) develop inherently transparent models that can explain their decisions by design (i.e., ante-hoc explainability)~\citep{Xu}.
A promising ante-hoc approach to explainability is Concept Bottleneck Models (CBMs), which are trained to first predict an intermediate set of interpretable concepts and then use these concepts to predict the final output. Recent practice typically instantiates this concept set a-priori, either specified by human experts~\citep{koh20}, based on knowledge graphs~\citep{posthoc-cbm}, by prompting an LLM~\citep{labo,oikarinen2023labelfree,vlgcbm}, or using general concepts extracted from pre-trained vision-language models~\citep{Rao2024Discover}. However, concepts defined a-priori may not have sufficient predictive power for the target task or even be learnable from the available data. As a result, state-of-the-art CBMs substantially underperform their black-box counterpart when controlling for information leakage. Beyond performance, a further reason not to fix concepts a-priori is that modern ML systems often equal or exceed human expertise, creating an opportunity to use interpretability to learn from machines. For example, ~\citet{learningfrommachine} extracted concepts learned by the chess engine AlphaZero~\citep{alphazero} and were able to teach them to grandmasters. Furthermore, mechanistic interpretability has recently made significant progress in extracting concepts learned by black-box models, in particular via Sparse Autoencoders (SAEs) ~\citep{bricken2023monosemanticity}. Motivated by this, we ask whether CBMs built directly from a model’s own learned concepts can serve as interpretable approximations of their black-box counterparts. Because these concepts originate in the backbone, we expect them to be easier to learn and to have better predictive power. To test this, we develop a novel CBM pipeline, which we refer to as Mechanistic CBM (M-CBM), and compare it to state-of-the-art CBMs in both task accuracy and its ability to learn concepts, showing significant improvements.

\section{Related Work}
\label{related_work}

\paragraph{Concept-based Explanations.}
Early approaches for explainable AI typically rely on saliency~\citep{gradcam} or attribution maps~\citep{integratedgradients} that show which part of the input (e.g., regions or pixels of an image) contribute the most to a decision. By contrast, concept-based methods aim to provide explanations in terms of higher-level, human-understandable concepts (e.g., stripes for a zebra). A seminal contribution to the field was TCAV~\citep{kim2018interpretability}, a method that investigates a model's sensitivity to a user-defined concept by collecting a set of example images representing that concept. Later, \citet{desantis2025visualtcavconceptbasedattributionsaliency} extended TCAV with per-instance concept attributions and saliency maps indicating where the concept is recognized. However, both methods have practical limitations as they require users to manually collect concept examples. To address this, unsupervised approaches have also been proposed~\citep{ace,ice,craft,inv} to automatically discover influential concepts. 
These methods typically perform some form of clustering of a model's activations to extract groups of semantically similar inputs or cropped patches that correspond to a concept. However, with this approach, achieving completeness (i.e., extracting a concept set sufficient to recover the model’s prediction) remains a nontrivial task~\citep{completeness}.

\paragraph{Mechanistic Interpretability.} Mechanistic interpretability (MI) aims to comprehensively \emph{reverse-engineer} deep networks by converting their neurons and weights into interpretable features and algorithms. 
A central challenge to this is \emph{polysemanticity}, i.e., neurons often respond to unrelated features, so they cannot be mapped one-to-one with concepts~\citep{olah2020zoom}. This could allow networks to learn far more features than there are neurons, which is known as the \emph{superposition} hypothesis~\citep{elhage2022toymodelssuperposition}. Recently, \citet{bricken2023monosemanticity} showed this can be addressed post-hoc by disentangling features via Sparse Autoencoders (SAEs) that learn a sparse, overcomplete dictionary of monosemantic features. Given their effectiveness in both language~\citep{gao2025scaling} and vision~\citep{gorton2024the,thasarathan2025universal}, we also adopt SAEs for concept extraction in our pipeline. Another emerging trend in MI is \emph{automated interpretability}, i.e., using LLMs to automate the generation of natural language descriptions for the functional role of network components. This was first applied to explain language model neurons~\citep{bills2023language}, but then also proved effective to explain vision models~\citep{shaham2024multimodal}. We also use a similar approach to assign names to concepts extracted via SAEs. MI has also made progress in dissecting models into interpretable circuits (e.g., identifying algorithmic sub-structures within deep networks) via masking or patching procedures~\citep{conmy2023towards}. Those circuit-level analyses are currently not being used in our pipeline, but integrating them could be a promising future work.

\paragraph{Concept Bottleneck Models.}
Concept Bottleneck Models (CBMs) are self-explaining neural networks that learn a set of intermediate human-understandable concepts to solve a task. The term was first introduced by~\citet{koh20}, who trained CBMs using datasets featuring concept annotations. Later, \citet{posthoc-cbm} relaxed this requirement with post-hoc CBMs that learn a Concept Bottleneck Layer (CBL) using Concept Activation Vectors (CAVs)~\citep{kim2018interpretability}, only requiring manual selection of representative examples for each concept. Furthermore, when using a CLIP~\citep{clip} backbone, they could learn concepts directly from text sourced from the ConceptNet~\citep{conceptnet} knowledge graph. \citet{labo} later showed benefits in generating the concept set with LLMs. \citet{oikarinen2023labelfree} extended this paradigm also to non-CLIP backbones using CLIP-Dissect~\citep{clip-dissect} to map concept embeddings in CLIP to any backbone. A known problem, however, that exists across all CBMs is \emph{information leakage}, i.e., the fact that the CBL inadvertently encodes hidden class-relevant patterns beyond the concept semantics, which can be quickly learned by the final predictor to improve its accuracy~\citep{HavasiLeakage}. This issue is quite serious, as \citet{rndwords} even showed that replacing concepts with random words can achieve similar accuracy. Information leakage also results in unsatisfying explanations, in which the most important concepts contribute significantly less than the sum of all other concepts, making the model basically a black-box. Furthermore, while for classical CBMs, leakage can at least be quantified using metrics based on ground-truth concept labels~\citep{HavasiLeakage,impurity}, this is not trivial for CBMs that learn concepts automatically.
To address this, \citet{vlgcbm} introduced the Number of Effective Concepts (NEC) as a metric to measure and control how many concepts CBMs use to make a prediction, effectively reducing information leakage. In this work, we also follow this idea and use the Number of Contributing Concepts (NCC), a generalization of NEC, to control for leakage and explanation conciseness. More details on NEC and NCC are provided in Section~\ref{sec:ncc}. \citet{vlgcbm} also introduced VLG-CBM, a CBM pipeline that uses GroundingDINO~\citep{gdino}, an open-vocabulary object detector, to automatically annotate a dataset with LLM-generated concepts. The CBL is then trained on these annotations in a multilabel setting. However, leakage still arises from the annotation being class-conditioned, as we show in Section~\ref{sec:baselines}. Another limitation of these CBM paradigms is that LLM-generated concept sets may not have sufficient predictive power for the target task or not even be learnable from the available data, requiring the inclusion of uninterpretable components in the bottleneck to recover accuracy~\citep{posthoc-cbm,decoupling,rescbm}. Sometimes concepts can also be non-visual~\citep{nonvisual} (e.g., \enquote{spicy}, \enquote{loud}), making explanations less transparent. Instead, we propose extracting and using the black-box model’s own learned concepts, rather than guessing with an LLM. A first step in this direction is DN-CBM~\citep{Rao2024Discover}, which learns concepts from CLIP with an SAE and uses its hidden layer as CBL, naming the concepts by selecting the nearest text embeddings to the decoder vector. However, this paradigm can only be applied with a CLIP backbone, limiting its accuracy across datasets, as we show in Section~\ref{sec:results}, and CLIP dependence can still introduce non-visual concepts.

\section{Methodology}
\label{methodology}
In this section, we introduce our methodology for transforming any black-box model into an interpretable-by-design CBM. Our approach, which we refer to as Mechanistic CBM (M-CBM), extracts human-interpretable concepts from a trained black-box model, assigns names and annotations using a Multimodal Large Language Model (MLLM), and then trains a sequential CBM~\citep{koh20} using these concepts. An overview of the whole pipeline is provided in Figure~\ref{fig:method}.

\begin{figure}[t]
    \centering
    \includegraphics[width=1\linewidth]{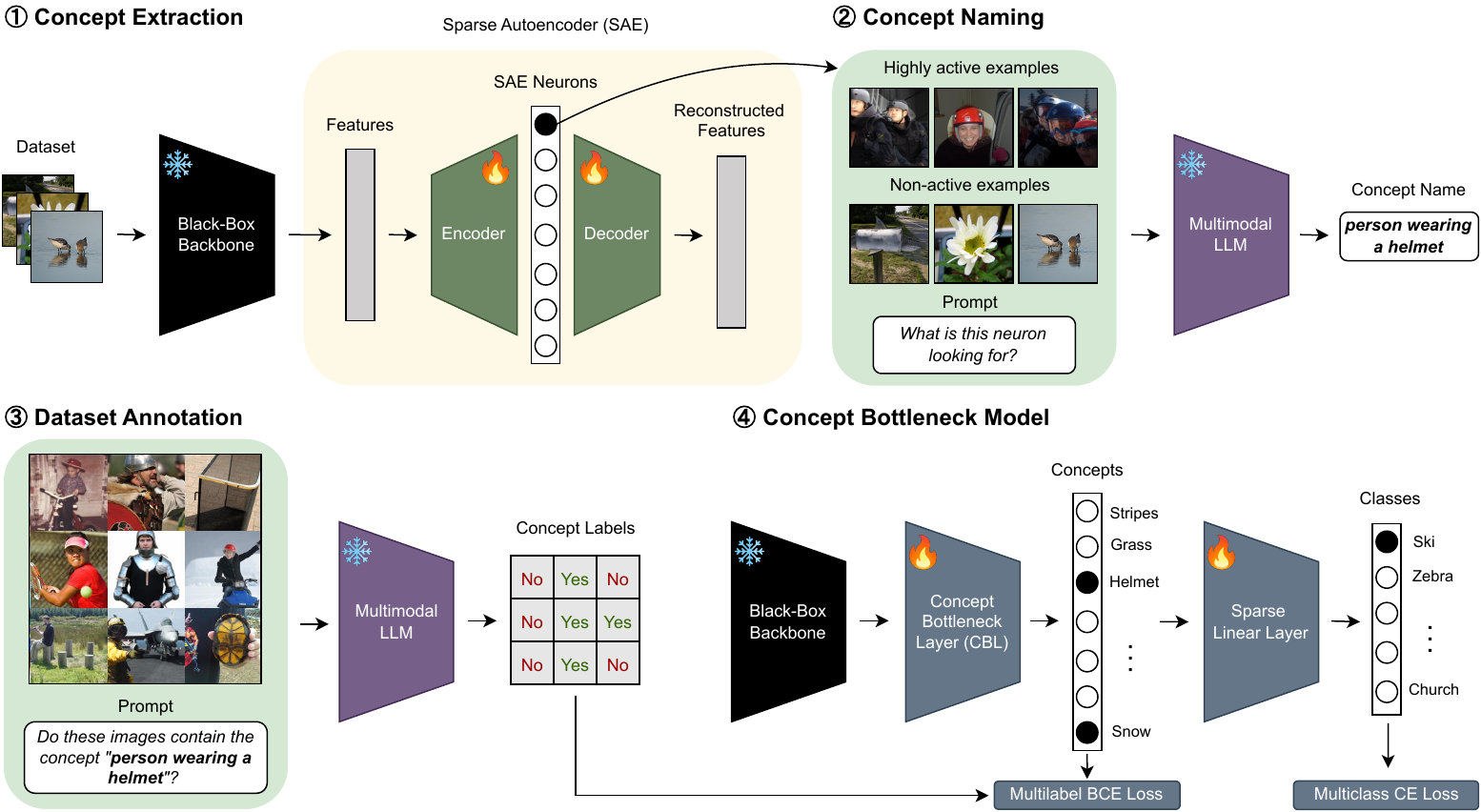}
    \caption{Overview of the M-CBM pipeline. (1) Given a trained black-box backbone, we extract its features and learn sparse, disentangled concept directions using a Sparse Autoencoder (SAE). (2) A Multimodal LLM is prompted with examples of highly activating and non-activating images to assign concept names to each SAE neuron. (3) The MLLM then annotates a subset of the dataset containing an equal split of active and non-active examples, indicating the presence or absence of each concept in selected images. (4) Using these concept annotations, we train a Concept Bottleneck Layer (CBL) and a sparse linear classifier to predict target classes from the learned concepts.}
    \label{fig:method}
\end{figure}

\paragraph{Concept Extraction.}
\label{cextraction}
Given a black-box backbone $\phi$ trained on an arbitrary dataset $\sD$, the first step of our methodology is to decompose the features learned by the model during training into a set of interpretable concepts. To achieve this, we use the Sparse Autoencoder (SAE) approach, which was recently popularized in the mechanistic interpretability literature~\citep{bereska2024mechanistic} and has proven effective to disentangle model features into interpretable concepts for both vision~\citep{gorton2024the,thasarathan2025universal} and language models~\citep{bricken2023monosemanticity,huben2024sparse}.

An SAE is a neural network trained to reconstruct its input features while enforcing sparsity in the hidden representation (see step \textcircled{1} in Figure~\ref{fig:method}). In our case, the input features are the activations $\va^{(i)} = \phi(\vx^{(i)}) \in \R^n$ of the backbone $\phi$ for each sample $\vx^{(i)}$ in the training set $\sD$. 
Following \citet{bricken2023monosemanticity}, the SAE subtracts an input bias $\vb_D$ and then passes the resulting vector to an encoder with weights $\mW_E$, bias $\vb_E$, and ReLU activation, obtaining the hidden layer $\vh \in \R^m$:
\[
\vh = \mathrm{ReLU}\left( \mW_E^\top (\va - \vb_D) + \vb_E \right)
\]
In the sparse hidden layer $\vh$, ideally, each neuron learns to recognize a distinct concept. A decoder with weights $\mW_D$ and bias $\vb_D$ then maps $\vh$ back to the reconstructed features $\hat{\va}$:
\[
\hat{\va} = \mW_D^\top \vh + \vb_D
\]
where $\mW_E \in \R^{n \times m}$, $\mW_D \in \R^{m \times n}$, and typically for large datasets $m \gg n$ to account for the superposition hypothesis \citep{elhage2022toymodelssuperposition}, i.e., the fact that neural networks tend to learn more concepts than the neurons they have. 
The input and output biases $\vb_D$ are opposite in sign and equal in magnitude. While \citet{bricken2023monosemanticity} train SAEs with expansion factors (defined as $m/n$) ranging from $1x$ to $256x$, we avoid going above $4x$ to keep the annotation step computationally feasible.
To train the SAE, we minimize the following objective:
\begin{equation}
\mathcal{L}_\mathrm{SAE} = \frac{1}{|\sD|} \sum_{i=1}^{|\sD|} \| \va^{(i)} - \hat{\va}^{(i)} \|_2^2 + \lambda_\mathrm{SAE} \| \vh^{(i)} \|_1
\end{equation}
where $\lambda_\mathrm{SAE} > 0$ is a hyperparameter that controls the strength of the sparsity penalty on the hidden representation.
We also monitor the average $\ell_0$ norm (i.e., the number of non-zero activations) to ensure $\ell_0 \ll n$, as recommended by~\citet{bricken2023monosemanticity}.

SAE training often leaves many neurons in the hidden layer $\vh$ dead (never activated for any training sample) or nearly dead (activate only very rarely). To ensure that our set of candidate concepts is both meaningful and computationally efficient for subsequent annotation, we perform a filtering step to remove such neurons. To define the threshold for identifying nearly dead neurons, we measure, for each unit in $\vh$, the number of training samples for which it is active. We then select a cutoff value such that removing all units below this threshold does not reduce the recovered cross-entropy loss of the black-box model, defined as $1 - \frac{\mathcal{L}_{\mathrm{BB}}(\hat{\va}) - \mathcal{L}_{\mathrm{BB}}(\va)}{\mathcal{L}_{\mathrm{BB}}(\vzero) - \mathcal{L}_{\mathrm{BB}}(\va)}$, by more than a tolerance of $\sim1\%$. This procedure ensures that only neurons with negligible contribution to predictive performance are pruned. This metric was also used to evaluate SAE quality in prior work~\citep{bricken2023monosemanticity,gated,gao2025scaling}. More details on SAEs in Appendix~\ref{saedetails}.

\paragraph{Concept Naming.}
\label{cnaming}
After pruning, each remaining neuron in the SAE hidden layer $\vh$ is treated as a candidate concept, where we denote by $h_j$ the $j$-th hidden SAE neuron. To assign human-interpretable names, we adopt an automated procedure inspired by recent work on mechanistic interpretability of language model neurons~\citep{bills2023language}. 
For each candidate concept, we first select a set of inputs $\vx \in \sD$ that maximally activate the corresponding neuron $h_j$. For these inputs, we also highlight the spatial regions that contribute the most to the activation, similarly to \cite{shaham2024multimodal}. To compute these concept saliency maps, we use the method introduced by \citet{desantis2025visualtcavconceptbasedattributionsaliency} (i.e., weighted average of feature maps using $\mW_D$ as concept weights and followed by ReLU). To provide a contrastive signal, we additionally sample a set of non-activating examples, of which half are drawn at random from $\sD$, and half are selected as the most cosine similar to the activating examples to enhance discrimination of fine-grained visual features. 
The paired examples are provided to an MLLM, GPT-4.1 in our experiments, which is prompted to produce a concise natural-language description of the concept that the neuron represented by $h_j$ is responding to. At this stage, we also explicitly instruct the model not to use class names as concepts and  re-try if it does not adhere. Step \textcircled{2} of Figure~\ref{fig:method} shows an example of what the MLLM receives as input. In our experiments, we used 10 activating examples and 10 non-activating ones.

Finally, since we do not want duplicate or semantically equivalent concepts, we perform a merging step similar to \cite{oikarinen2023labelfree}, in which we embed all proposed textual names using a pre-trained embedding model and merge those with very high cosine similarity (i.e., above $0.98$). We use OpenAI's \emph{text-embedding-3-large} in our experiments. To make the embeddings context-aware, we also wrap each concept name in the following template before inserting it into the embedding model:
\enquote{This is a visual concept in the context of \emph{\{domain\}}: \emph{\{concept\}}}. The variable \emph{\{concept\}} contains the concept name, while \emph{\{domain\}} specifies the dataset domain (e.g., bird species, skin lesions).
For simplicity, Figure~\ref{fig:method} omits the merging step.

\paragraph{Dataset Annotation.}
With concept names assigned, we proceed to build a partially annotated dataset to train the Concept Bottleneck Layer (CBL). This step is necessary because a concept name is only a hypothesis, rather than a faithful description of the corresponding SAE neuron's functional role. Since such hypotheses are often difficult to validate~\citep{sharkey2025open}, we do not consider it ideal to use the SAE hidden layer directly as a bottleneck when interpretability is the primary goal.

Let $\sC=\{c_1,\dots,c_K\}$ denote the final set of concepts. For each concept $c_k$, the goal is to obtain binary presence/absence labels on a subset of images $\vx \in \sD$. Since exhaustive annotation of the full dataset is not computationally feasible at the time of writing this paper, we annotate up to $1000$ samples per concept.
The annotation procedure is performed by prompting the MLLM with batches of $25$ images arranged in a $5\times5$ grid, which is much cheaper than one image at a time, but also slightly less accurate (see Appendix \ref{app:annotation}). The model is asked to indicate, for each of the $25$ grid images, whether the concept is present or absent. See step \textcircled{3} of Figure~\ref{fig:method} for a high-level overview of the annotation procedure. Each call also includes a grid of the top-$25$ most activating images for the corresponding SAE neuron, which serve as a reference together with the textual concept name.
To select the subset of images for annotation, we first select up to $500$ active samples per concept. The active set is defined as all inputs for which $h_j > 0$. From this set, we select samples whose activation lies above the 95th percentile of the set. If fewer than $500$ samples exceed this percentile, we take the top-$500$ activations overall within the active set. If the neuron has fewer than $500$ active samples in total, we take all available examples, rounding the number down to the nearest multiple of $25$ to match the batch annotation protocol. For merged neurons, activations are normalized across the group and treated as a single unit when computing percentiles.
We then select an equal number of non-active samples, of which half are drawn uniformly at random and half are chosen as the most cosine similar to the active samples, similarly to the naming procedure. Furthermore, to avoid biasing concepts toward particular classes, both active and non-active sets are stratified across class labels. Each batch of 25 images also contains a balanced mix of active and non-active examples.
At the end of this annotation step, we obtain a set of around $1000$ annotated samples for each concept, containing both presence and absence cases across both training and test data. An image may be annotated for more than one concept or for none. 
Formally, for each image $\vx^{(i)}\in\sD$, the annotation procedure creates a ternary vector of concept labels $\vz^{(i)}\in\{-1,0,1\}^K$ with the following entries:
\begin{equation}
z^{(i)}_k \;=\;
\begin{cases}
1 & \text{if $c_k$ is annotated as \emph{present} in $\vx^{(i)}$}\\
0 & \text{if $c_k$ is annotated as \emph{absent} in $\vx^{(i)}$}\\
-1 & \text{if $c_k$ is \emph{not annotated} for $\vx^{(i)}$}
\end{cases}
\end{equation}
\paragraph{Concept Bottleneck Model.}
\label{sec:cbm}
After generating the concept labels, we proceed with training a sequential CBM~\citep{koh20}. As shown in step \textcircled{4} of Figure~\ref{fig:method}, the CBM has three components: (i) a frozen backbone $\phi$ that maps an input image to a feature vector, (ii) a Concept Bottleneck Layer (CBL) $g$ that predicts the presence of $K$ named concepts from those features in a multi-label setting, and (iii) a sparse linear classifier $f$ that predicts the class from the concept outputs.

For each input $\vx^{(i)}$ the frozen backbone produces $n$-dimensional features $\va^{(i)} = \phi(\vx^{(i)}) \in \R^n$. The CBL $g:\R^n \!\to\! \R^K$ takes these features as input and outputs concept logits, then a sigmoid produces probabilities $\hat{\vz}^{(i)}=\sigma(g(\va^{(i)}))\in[0,1]^K$.
From the annotation pipeline, each image carries a ternary concept vector $\vz^{(i)}\in\{-1,0,1\}^K$ indicating present (1), absent (0), or not annotated ($-1$). Since not every image-concept pair is labeled, we train $g$ only on the entries we know. Let $\Omega \;=\; \{(i,k)\,:\, z^{(i)}_k\in\{0,1\}\}$
be the set of annotated pairs. The CBL is optimized to minimize a masked Binary Cross-Entropy (BCE) loss that averages over $\Omega$:
\begin{equation}
\label{eq:cbl-loss}
\mathcal{L}_{\mathrm{CBL}}
= \frac{1}{|\Omega|}
  \sum_{(i,k)\in\Omega}
  \mathrm{BCE}\hspace{0.15em}\big(\hat z^{(i)}_k,\, z^{(i)}_k\big)
\end{equation}
Entries with $z^{(i)}_k=-1$ are effectively ignored in the loss computation. Therefore, images without any concept annotation (all entries $-1$) are not used to train the CBL. Furthermore, since positives are often rarer than negatives, we weight each concept in the BCE by the ratio of its class imbalance. 

To map concepts to classes, we follow prior work \citep{vlgcbm,posthoc-cbm,oikarinen2023labelfree} and train a sparse linear classifier on concept logits (i.e., CBL’s pre-sigmoid outputs), optimized using the GLM–SAGA solver~\citep{wong21b}. Since GLM–SAGA assumes standardized input features, we $z$-normalize (zero mean and unit variance) the concept logits and use these to predict the classes. 
With $g$ frozen, we define a fully connected layer $f:\R^K \!\to\! \R^C$ with weights $\mW_F\in\R^{K\times C}$ and bias $\vb_F\in\R^C$, where $C$ is the number of output classes, and minimize the following Cross-Entropy (CE) loss with an elastic-net~\citep{elastic} penalty:
\begin{equation}
\label{eq:final-sparse}
\mathcal{L}_\mathrm{CLF} = \frac{1}{|\sD|}
\sum_{i=1}^{|\sD|}
\mathrm{CE}\hspace{0.15em}\big(f\circ g \circ \phi(\vx^{(i)}),\, \vy^{(i)}\big)
\;+\; \lambda_\mathrm{CLF}\, R_\alpha
\end{equation}
where $\vy^{(i)}$ represents the one-hot ground-truth class label for sample $\vx^{(i)}$ and $R_\alpha=(1-\alpha)\,\tfrac{1}{2}\|\mW_F\|_2^2+\alpha\,\|\mW_F\|_1$ denotes the elastic-net penalty. Following \citet{wong21b}, we use $\alpha=0.99$, while $\lambda_\mathrm{CLF}$ is tuned to obtain a target sparsity.

\section{Number of Contributing Concepts (NCC)}
\label{sec:ncc}
Prior work has shown that sparse layers are more interpretable~\citep{wong21b,posthoc-cbm,oikarinen2023labelfree}, and \citet{vlgcbm} also showed that sparsity is inversely correlated with information leakage. They demonstrated that a dense linear classifier built on top of a random (i.e., untrained) CBL can recover black-box accuracy if the number of concepts $K$ approaches or exceeds the backbone feature dimension $n$, but this effect decreases with higher sparsity. Related studies \citep{rescbm,rndwords} similarly report that when the concept set is large enough (e.g., $K \gtrsim n/2$), dense linear classifiers can preserve black-box accuracy by re-estimating the backbone activations, therefore even using random words as concepts can match the accuracy obtained with concepts defined by LLMs or humans.

While high sparsity improves interpretability and limits leakage, it naturally tends to correlate with lower accuracy~\citep{wong21b,oikarinen2023labelfree,vlgcbm}, making CBM comparison incomplete if only accuracy is reported. To address this, \citet{vlgcbm} introduced an evaluation metric named NEC, which is defined as the average number (per-class) of non-zeros in the weights $\mW_F$ of the final layer $f$. They train $f$ at different regularization strengths $\lambda_\mathrm{CLF}$ and accuracies are compared at equal NEC. This is convenient for enabling a fair comparison between CBMs, but it also has limitations. Controlling NEC forces concise decision explanations, but it does so by linearly restricting the effective concept vocabulary as the number of classes decreases. For instance, with three classes, NEC=$5$ forces $K\leq15$ (or even $K=5$ for binary classification with single output head) after training so that on average predictions are explained by $\sim5$ concepts. However, in datasets with substantial intra-class diversity (e.g., peeled or in-field pineapples are the same class in ImageNet), 
a class may require a rich concept vocabulary (i.e., larger $K$) to cover its different contexts, even though only a subset of them is needed to predict an individual image. 

With this in mind, we introduce a generalization of NEC, named Number of Contributing Concepts (NCC), which does not impose a hard cap on $K$ but still enforces concise explanations by measuring sparsity at decision-level using concept contributions rather than weights count. To measure the contribution of concept $k$, for class $r$ and image $i$, we must consider the magnitude of both the concept logit $g(\va^{(i)})]_k$ and its weight $[\mW_F]_{k,r}$ towards class $r$. We then define the absolute contribution of a concept to a class as $u^{(i)}_{k,r} \;=\; \big|\, [g(\va^{(i)})]_k \, \cdot [\mW_F]_{k,r} \,\big|$. Ideally, we want the model to recognize a class with only a small subset of concepts that cover the vast majority of the total absolute contribution, or, in other words, explain the vast majority of the decision.  Let $u^{(i)}_{(s),r}$ denote the $s$-th largest absolute contributing concept, and fix a coverage level $\tau\in[0,1]$. We define NCC as:
\[
\mathrm{NCC}_\tau
=\frac{1}{|\sD|\,C}
\sum_{i=1}^{|\sD|}
\sum_{r=1}^{C}
\min\Big\{\kappa\in\{0,\dots,K\}:\;
\sum_{s=1}^{\kappa} u^{(i)}_{(s),r}
\;\ge\;
\tau \sum_{k=1}^{K} u^{(i)}_{k,r}
\Big\}
\]
While here we average NCC over all $C$ classes, it can also be alternatively computed considering only the model’s predicted class.
Intuitively, NCC$_\tau$ is the average number of concepts required to explain at least a $\tau$ fraction of the prediction of a class. For example, an NCC=$5$ with $\tau=0.95$, means that, on average, just 5 concepts explain $\geq95\%$ of the decision. For controlling NCC, we fix a $\tau$ and follow the approach of \citet{vlgcbm}, training $f$ at different $\lambda_\mathrm{CLF}$ and compare CBM accuracies at equal NCC levels. In practice, targeting a lower NCC generally means trading accuracy for explanation conciseness and vice versa.

\section{Experimental Setup}
\paragraph{Baselines.}
\label{sec:baselines}
We compare M-CBM with three state-of-the-art CBMs: LF-CBM~\citep{oikarinen2023labelfree}, VLG-CBM~\citep{vlgcbm}, and DN-CBM~\citep{Rao2024Discover}.
For VLG-CBM, we compare with a class-agnostic annotation variant, which we refer to as VLG-CBM$_{\text{CA}}$, rather than the original pipeline. In the original VLG-CBM, concepts are assigned to classes \emph{before} annotation and are annotated only on images of their assigned classes. While this design reduces annotation cost, coupling concepts to classes can introduce substantial information leakage. We verify this on CUB in Figure~\ref{fig:leakage}. Using random words as concepts, VLG-CBM reaches black-box level accuracy already at around NCC=$1.5$, showing that in this setup, performance is insensitive to both sparsity and concept semantics. Intuitively, this happens because learning a concept that is labeled as positive only on images of a class is nearly equivalent to learning that class directly. When we remove class conditioning by annotating each concept across all images (VLG-CBM$_{\text{CA}}$), accuracy drops substantially for both random and real concepts, and the expected interpretability–accuracy trade-off reappears. We performed the same experiment with our M-CBM, and the performance of substituting concepts with random words is similar to using random words in VLG-CBM$_{\text{CA}}$. However, when real concepts are used, our M-CBM outperforms VLG-CBM$_{\text{CA}}$ at high sparsity (NCC=3 to 5), while for low sparsity (NCC=10+), accuracy becomes similar to the random words variant due to information leakage. Further implementation details for this experiment are provided in Appendix~\ref{sec:random_details}.

\begin{figure*}[h!]
    \centering
    \begin{subfigure}[b]{0.31\linewidth}
        \centering
        \includegraphics[width=\linewidth]{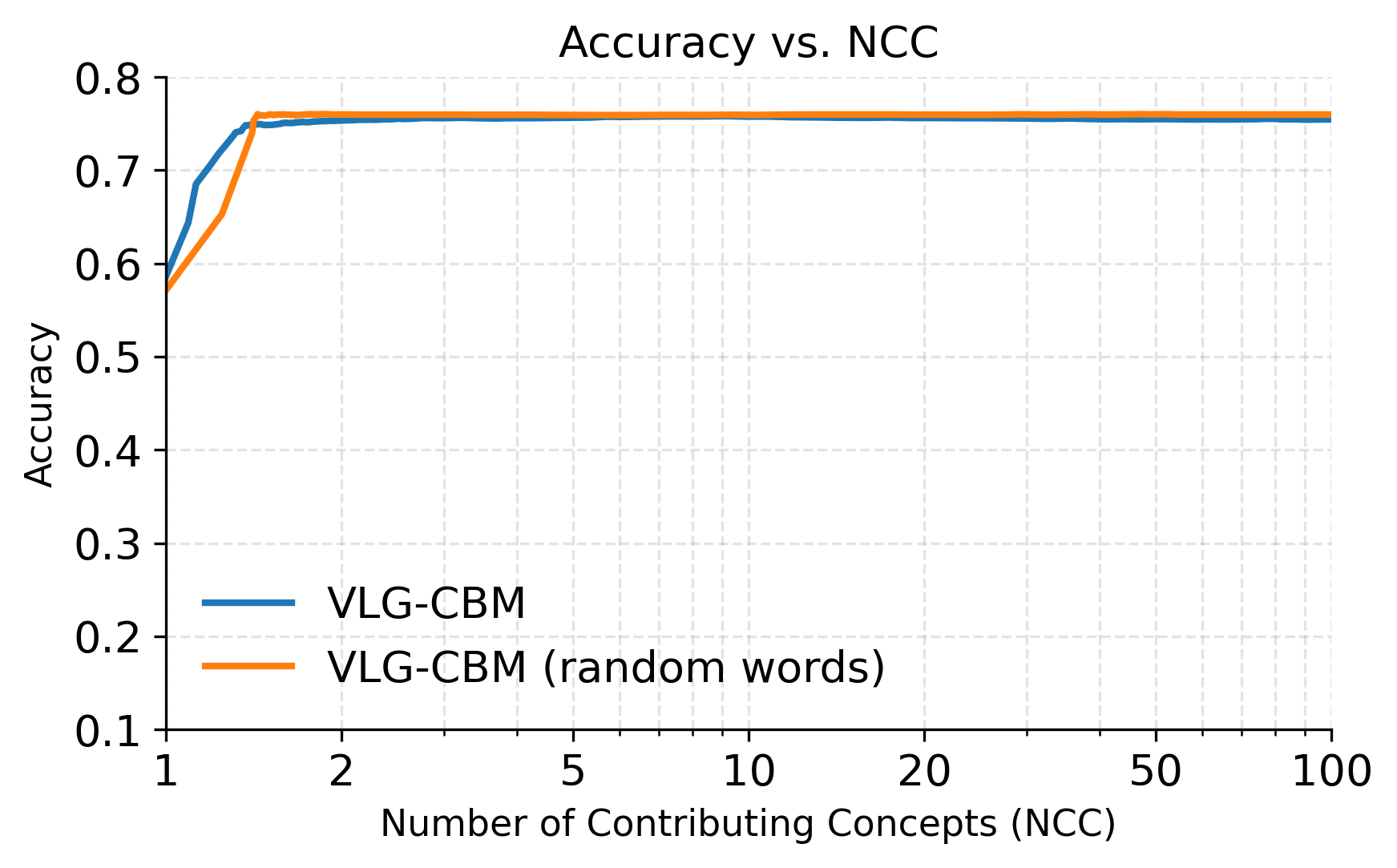}
        \caption{VLG-CBM}
    \end{subfigure}
    \hfill
    \begin{subfigure}[b]{0.31\linewidth}
        \centering
        \includegraphics[width=\linewidth]{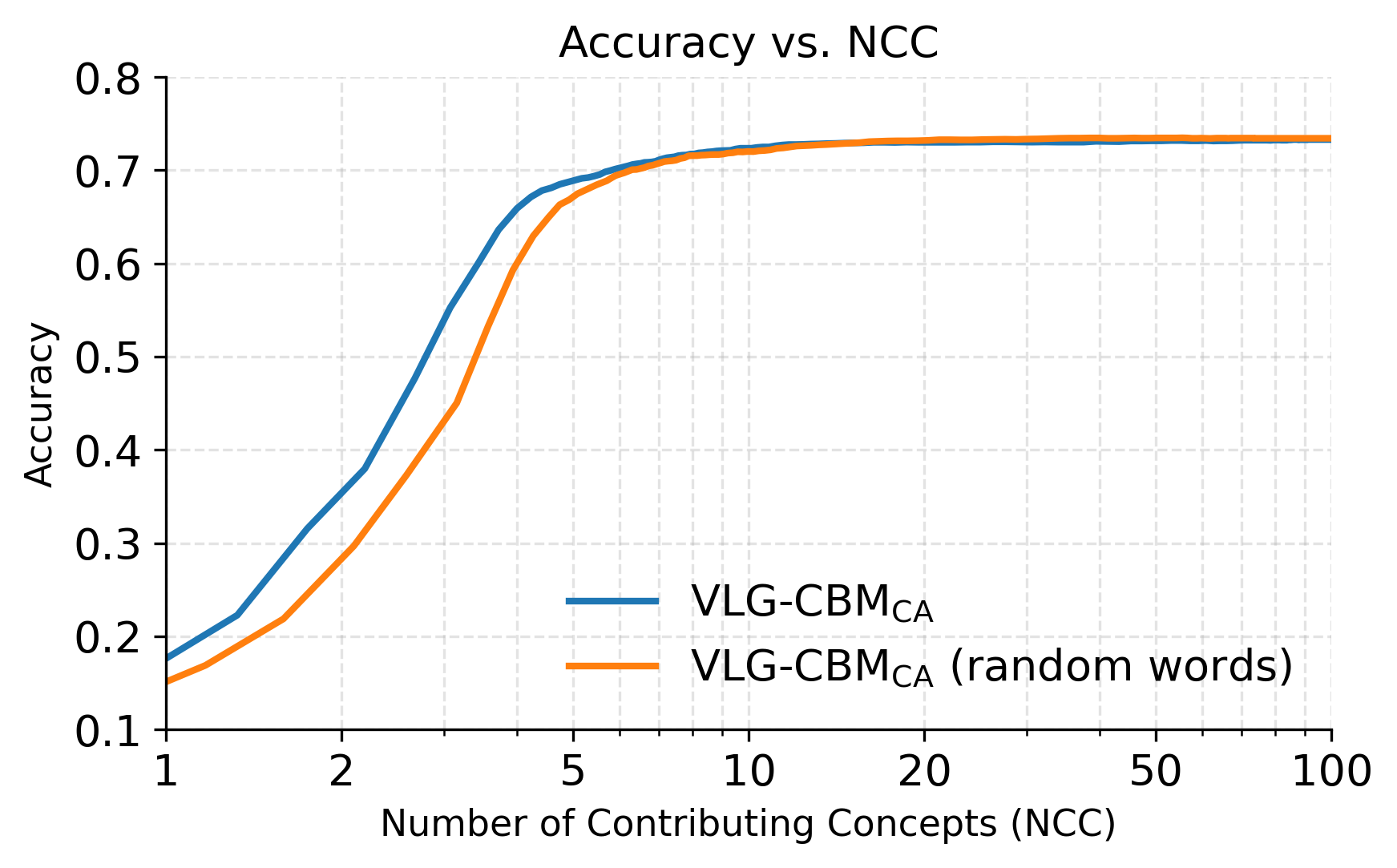}
        \caption{VLG-CBM$_{\text{CA}}$}
    \end{subfigure}
    \hfill
    \begin{subfigure}[b]{0.31\linewidth}
        \centering
        \includegraphics[width=\linewidth]{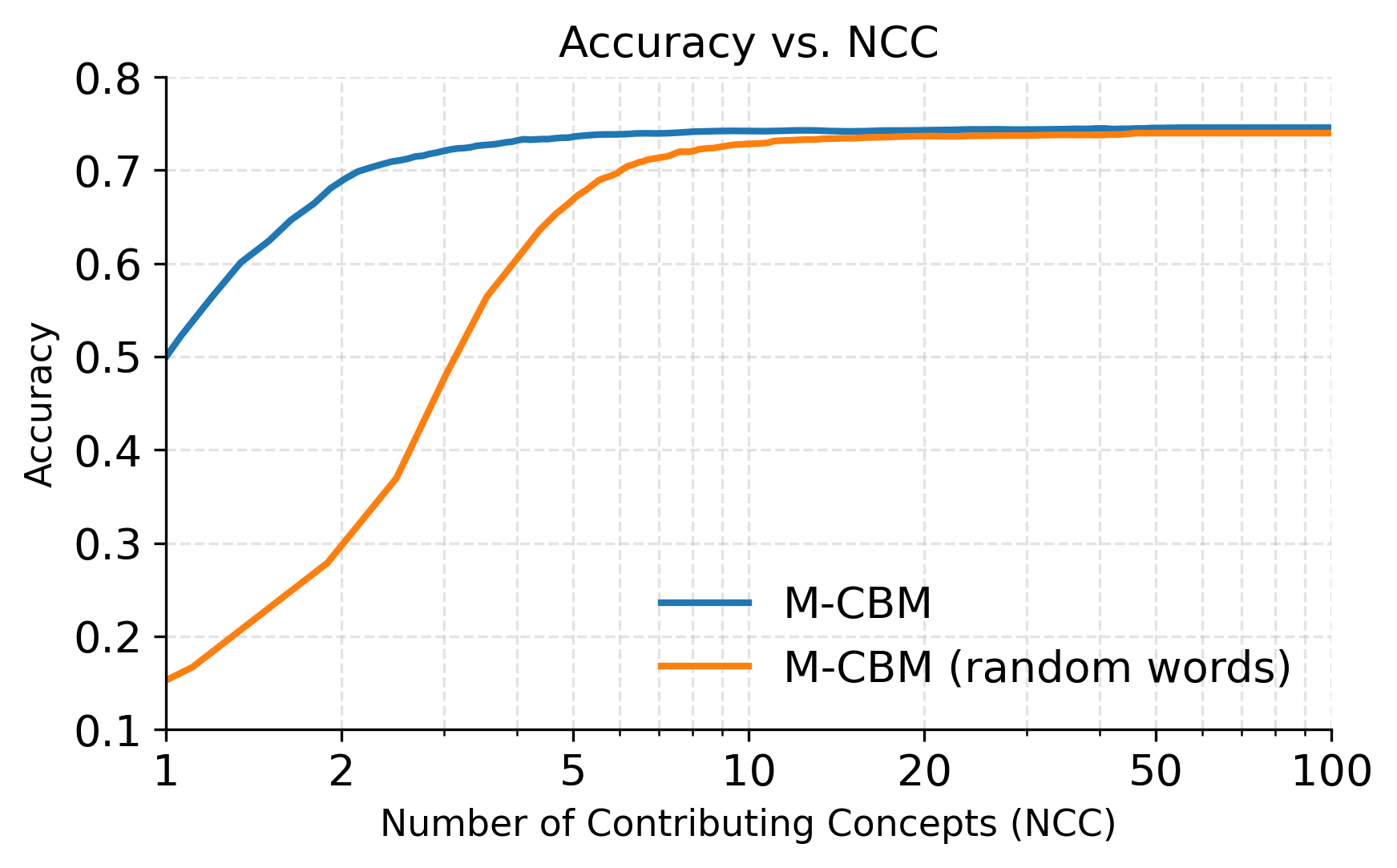}
        \caption{M-CBM (Ours)}
    \end{subfigure}
    \caption{Accuracy vs NCC ($\tau = 0.95)$ on CUB. (a) With class-conditioned annotation, VLG-CBM reaches near black-box accuracy with NCC=1.5 (i.e., using only 1 to 2 concepts per prediction). The same happens using random concept names, showing evidence of leakage. (b) Making annotation class-agnostic (VLG-CBM$_{\text{CA}}$) restores the accuracy–interpretability trade-off, with real concepts slightly beating random words at low NCC. (c) M-CBM outperforms both VLG-CBM$_{\text{CA}}$ and the random baselines at low NCC, while leakage is significant for both methods as NCC increases.
    }
    \label{fig:leakage}
\end{figure*}

\paragraph{Setup.}
We evaluate on three standard image classification datasets that vary in domain and class count: CUB~\citep{WahCUB_200_2011}, ISIC2018~\citep{codella2019skinlesionanalysismelanoma,tschandl2018ham10000}, and ImageNet~\citep{imagenet}. CUB contains $\sim\!6k$ training images and $\sim\!5.8k$ test images of $200$ fine-grained bird species. As backbone for this dataset, we use the pre-trained ResNet18 from \href{https://pypi.org/project/pytorchcv}{\textit{pytorchcv}}.
ISIC2018 contains dermatoscopic images of pigmented lesions categorized in 7 classes, split into $\sim\!10k$ train, $193$ validation, and $\sim\!1.5k$ test. Given a high class imbalance, we report both accuracy and balanced accuracy for this dataset. Given the lack of public pre-trained models, we train a ResNet50 (weighting each class by its imbalance ratio) and use it as a backbone. ImageNet includes $1k$ classes with $\sim\!1.3$M training and $50k$ test images for general image classification. As backbone, we use the pre-trained ResNet50 from \href{https://pypi.org/project/torchvision}{\textit{torchvision}}. Furthermore, for ImageNet and CUB, we extract $10\%$ from the train and use it as a validation set.
Regarding DN-CBM, since it only supports a CLIP backbone, we evaluate it using both the ResNet50 and ViT-B/16 backbones.
As discussed in Section~\ref{sec:ncc}, we compare under the same NCC. We use $\tau=0.95$ and measure accuracies at NCC=5 and NCC=avg, with the latter being the average of the levels: 5, 10, 15, 20, 25, 30.

\paragraph{Compute Resources.} We trained all neural components (SAE, CBL, and GLM-SAGA) on an HPC cluster using an NVIDIA H200 on a multi-core node (32 cores and 512GB of RAM). On CUB and ISIC2018, each stage takes 5-20 minutes, while 5-10 hours for ImageNet. The dominant step in terms of cost and runtime is the annotation with GPT-4.1 API, which takes around 2 minutes and costs USD~0.14 per concept. Concept naming was lighter, taking around 10-20 seconds and USD~0.02 per concept, while concept merging costs were negligible. These costs scale linearly with the concept number, which was 278, 73, and 2648, respectively for CUB, ISIC2018, and ImageNet.

\section{Results and Discussion}
\paragraph{Accuracy Comparison.}
\label{sec:results}
We report results in Table~\ref{tab:MainAccResults}. Our M-CBM consistently achieves the highest accuracy across datasets and NCC values. An expected interpretability-accuracy trade-off is also visible across all methods, as accuracy always increases when NCC is higher (i.e., explanations are less concise). DN-CBM consistently performs poorly, especially at NCC=5, indicating that a small subset of generic CLIP concepts may be insufficient to predict a class across datasets. \mbox{VLG-CBM$_\text{CA}$} shows better accuracy than LF-CBM and DN-CBM, but annotating per-concept the entire dataset with GroundingDINO makes it computationally prohibitive at ImageNet scale ($\sim\!300$ GPU-days). In contrast, M-CBM uses SAE activations to pre-select candidate images per concept, so that we only need to annotate $\sim\!1k$ images per concept. We exclude class-conditioned VLG-CBM from the comparison because, due to leakage, it is effectively a black-box (see Section~\ref{sec:baselines}).

\begin{table}[th!]
\caption{Accuracy comparison at NCC=5 and NCC=avg with best model in bold. The results for \mbox{M-CBM} are averaged over 3 seeds with same annotations. N/A denotes computationally unfeasible.}
\centering
\scalebox{0.833}{
\begin{tabular}{l|cc|cc|cc|cc}
\toprule \midrule
\multicolumn{1}{l|}{Dataset} & \multicolumn{2}{c|}{CUB} & \multicolumn{4}{c|}{ISIC2018} & \multicolumn{2}{c}{ImageNet} \\
\midrule
\multicolumn{1}{l|}{Metrics}   & \multicolumn{2}{c|}{Accuracy} & \multicolumn{2}{c|}{Accuracy} & \multicolumn{2}{c|}{Balanced Accuracy} & \multicolumn{2}{c}{Accuracy} \\
\midrule \midrule
\multicolumn{1}{l|}{Black-box} & 
\multicolumn{2}{c|}{76.67\%} &
\multicolumn{2}{c|}{79.37\%} &
\multicolumn{2}{c|}{75.37\%} &
\multicolumn{2}{c}{76.15\%} \\
\midrule \midrule
Sparsity  & NCC=5 & NCC=avg & NCC=5 & NCC=avg & NCC=5 & NCC=avg & NCC=5 & NCC=avg \\
\midrule \midrule
LF-CBM                & 58.08\% & 71.09\% & 61.44\% & 67.55\% & 64.29\% & 67.30\% & 62.20\% & 69.08\% \\
DN-CBM$_{\text{RN}}$               & 38.21\% & 48.98\% & 35.38\% & 54.61\% & 39.85\% & 52.85\% & 46.71\% & 57.24\% \\
DN-CBM$_{\text{ViT}}$                & 48.12\% & 66.19\% & 43.92\% & 56.08\% & 42.47\% & 53.56\% & 60.23\% & 69.98\% \\
VLG-CBM$_{\text{CA}}$ & 69.12\% & 72.25\% & 64.55\% & 72.61\% & 64.63\% & 70.80\% & N/A & N/A \\
\begin{tabular}[c]{@{}l@{}}\textbf{M-CBM}\\ \textbf{(Ours)}\end{tabular} & 
\begin{tabular}[c]{@{}r@{}}\textbf{73.70\%}\\ $\pm$ 0.13\%\end{tabular} & 
\begin{tabular}[c]{@{}r@{}}\textbf{74.18\%}\\ $\pm$ 0.06\%\end{tabular} & 
\begin{tabular}[c]{@{}r@{}}\textbf{72.75\%}\\ $\pm$ 0.10\%\end{tabular} & 
\begin{tabular}[c]{@{}r@{}}\textbf{75.51\%}\\ $\pm$ 0.08\%\end{tabular} & 
\begin{tabular}[c]{@{}r@{}}\textbf{70.14\%}\\ $\pm$ 0.09\%\end{tabular} & 
\begin{tabular}[c]{@{}r@{}}\textbf{71.54\%}\\ $\pm$ 0.05\%\end{tabular} & 
\begin{tabular}[c]{@{}r@{}}\textbf{72.18\%}\\ $\pm$ 0.21\%\end{tabular} & 
\begin{tabular}[c]{@{}r@{}}\textbf{73.64\%}\\ $\pm$ 0.15\%\end{tabular} \\
\midrule
\bottomrule
\end{tabular}
}
\label{tab:MainAccResults}
\end{table}

\paragraph{Evaluating Concept Prediction.}
\label{evalconcept}
We assess how well each method can learn its own concepts by also annotating the test set. Because these labels are not ground truth, high scores do not guarantee that the model is learning the concepts as intended, but only that they are at least internally consistent and learnable. Especially for ISIC2018, we found that LLM-generated concept sets are often non-visual (e.g., “warm to the touch”) or not in the data (e.g., “medical report”). Since M-CBM uses concepts extracted from the backbone, we expect some benefits in the concept predictions, which is what we see in Table~\ref{tab:concepts}. Another factor that could contribute to the lower performance is the capability of GroundingDINO to annotate correctly, which may be inferior to asking GPT-4.1, especially for medical images. However, due to a lack of ground truth, this remains challenging to quantify.

\begin{table}[th!]
\caption{ROC-AUC evaluation of concept predictions on test set. Each method is evaluated on its own concepts. We report the macro-average across concepts and the average of the worst 10\%.}

\label{tab:concepts}
\centering
\scalebox{1}{
\begin{tabular}{l|cc|cc|cc}
\toprule \midrule
\multicolumn{1}{l|}{Dataset} & \multicolumn{2}{c|}{CUB} & \multicolumn{2}{c|}{ISIC2018} & \multicolumn{2}{c}{ImageNet} \\
\midrule
\multicolumn{1}{l|}{\multirow{2}{*}{Metrics}}
  & \multicolumn{2}{c|}{ROC-AUC} & \multicolumn{2}{c|}{ROC-AUC} & \multicolumn{2}{c}{ROC-AUC} \\
\cmidrule[\lightrulewidth](l{0pt}r{0pt}){2-7}

  & Macro & Worst-10\% & Macro & Worst-10\% & Macro & Worst-10\% \\
\midrule \midrule
VLG-CBM$_{\text{CA}}$   & 62.03\% & 45.60\% & 73.37\% & 52.92\% & N/A & N/A \\
\textbf{M-CBM (Ours)}   & \textbf{90.04\%} & \textbf{79.05\%} & \textbf{80.57\%} & \textbf{66.98\%} & \textbf{88.90\%} & \textbf{78.36\%} \\
\midrule
\bottomrule
\end{tabular}
}
\end{table}

\paragraph{Explanations.}
We illustrate the behavior of M-CBMs through global (class-level) and local (instance-level) explanations, using models at NCC=5.
Using the final layer weights $\mW_F$, we can visualize how concepts globally contribute to classes. In Figure~\ref{fig:gexpl}, we show these weights using Sankey diagrams, with \enquote{NOT concept} indicating a negative weight. For clarity, we include only concepts with $|W_F|>0.1$. On ImageNet, the model’s behavior aligns with intuition. The classes \enquote{Modem} and \enquote{Radio} share concepts related to ports/switches and antennas, while they are mainly differentiated by the presence of indicator lights for class \enquote{Modem} versus control knobs for class \enquote{Radio}. On ISIC2018, the model learns a richer concept set for \enquote{Melanocytic nevus} than for \enquote{Dermatofibroma}, which could be explained by the large class imbalance. Still, the few concepts learned for \enquote{Dermatofibroma} seem reasonable considering dermatological literature~\citep{dermatofibroma}. Some minor concepts for \enquote{Melanocytic nevus}, such as skin-tone–related terms, are less clear. This likely arises from the concept naming (step \textcircled{2}), where visually highlighting the concept (in this case, the skin around the nevus) in the image can introduce mild artifacts that GPT-4.1 over-interpreted. 
CBMs can also explain individual predictions by showing, for an input $x^{(i)}$, the contribution of concepts to a class $r$. This contribution is computed directly by multiplying the logit of the $k$-th concept $g(\va^{(i)})]_k$ with its corresponding weight $[\mW_F]_{k,r}$ towards class $r$. Concepts with a negative logit are indicated as \enquote{NOT concept}. We show two examples in Figure~\ref{fig:lexpl}, including a correct CUB prediction and a misclassification on ISIC, where the model incorrectly sees \enquote{clustered blue-gray ovoid nests}, leading to a \enquote{Basal Cell Carcinoma} prediction. Zeroing this concept flips the decision to the correct class. In both cases, we see that the decision is largely explained by the top 4-5 concepts. More examples of explanations are provided in Appendix \ref{sec:localexpl} and \ref{sec:baselines}.

\begin{figure*}[t]
    \centering
    \begin{subfigure}[b]{0.47\linewidth}
        \centering
        \includegraphics[width=\linewidth]{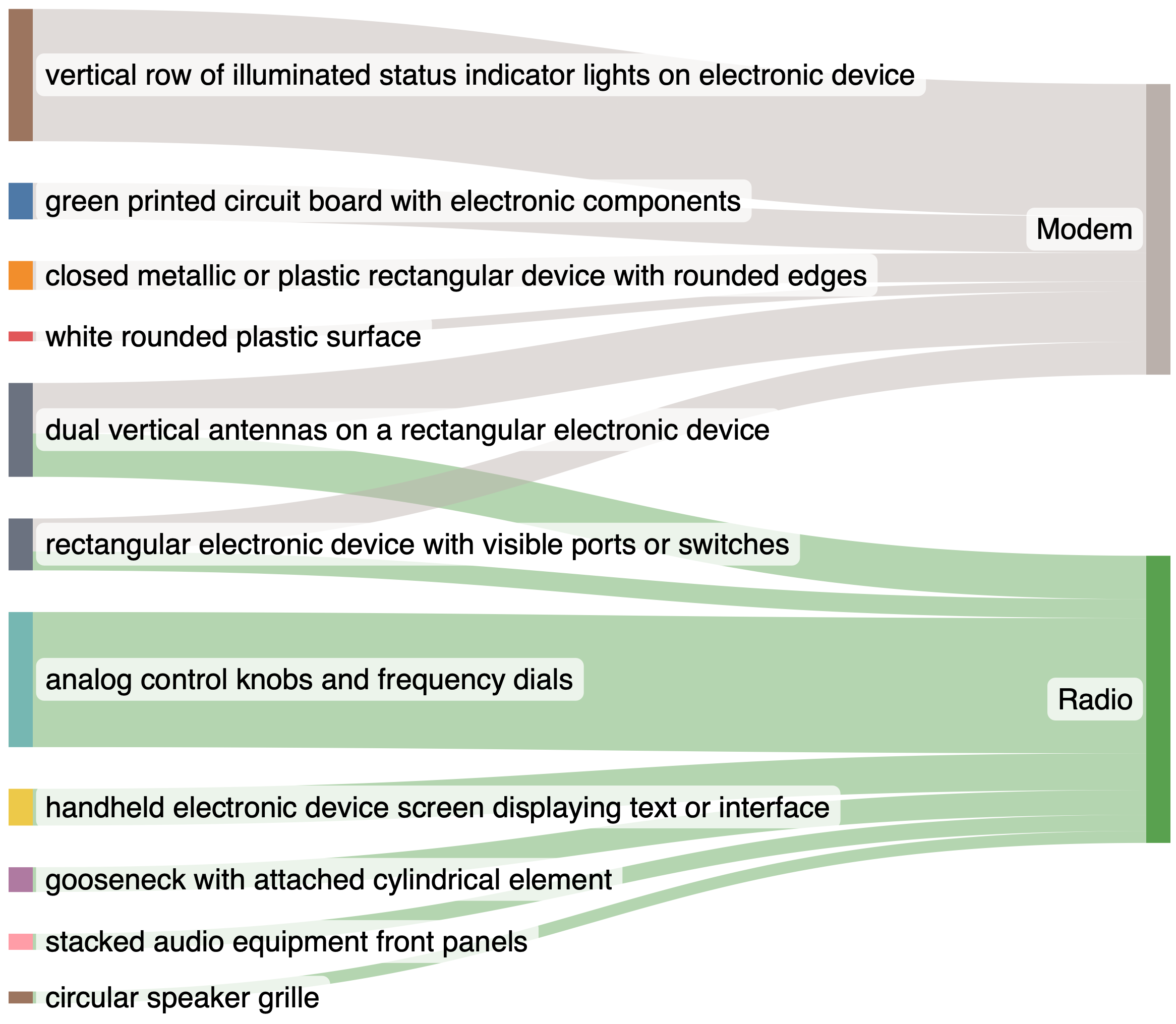}
        \caption{ImageNet}
    \end{subfigure}
    \hfill
    \begin{subfigure}[b]{0.47\linewidth}
        \centering
        \includegraphics[width=\linewidth]{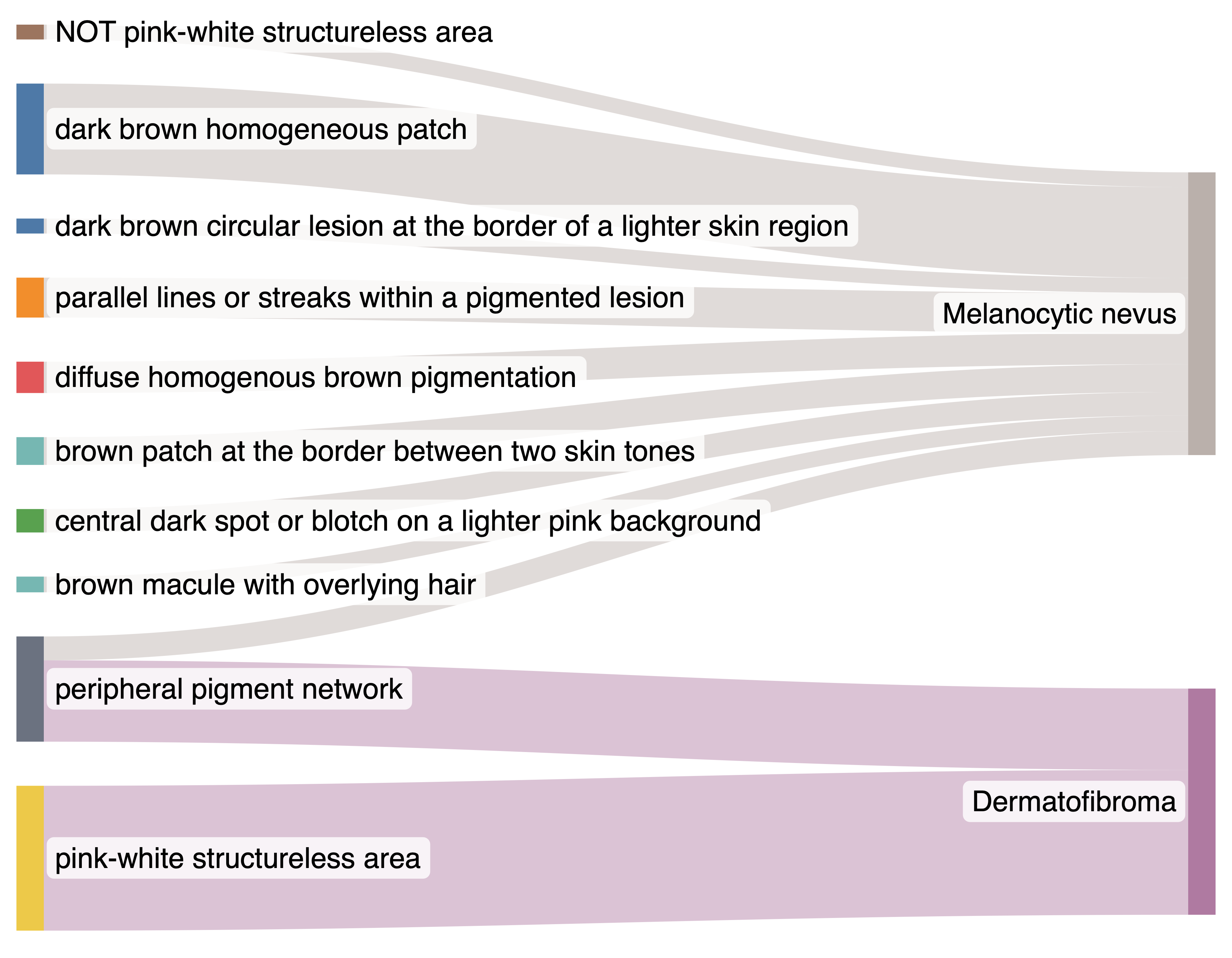}
        \hspace{1.2em}
        \caption{ISIC 2018}
    \end{subfigure}
    \caption{Sankey plots of concept–class weights of our M-CBM at NCC=5. Concepts on the left and classes on the right. Concepts with negative weights are labeled as \enquote{NOT concept}.}
    \label{fig:gexpl}
\end{figure*}

\begin{figure*}[h!]
    \centering
    \begin{subfigure}[b]{0.495\linewidth}
        \centering
        \includegraphics[width=\linewidth]{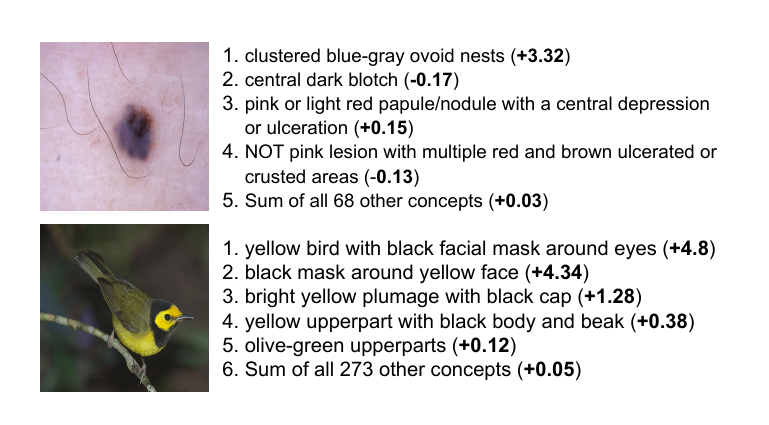}
        \caption{Correctly predicted \emph{Hooded Warbler}}
    \end{subfigure}
    \hfill
    \begin{subfigure}[b]{0.495\linewidth}
        \centering
        \includegraphics[width=\linewidth]{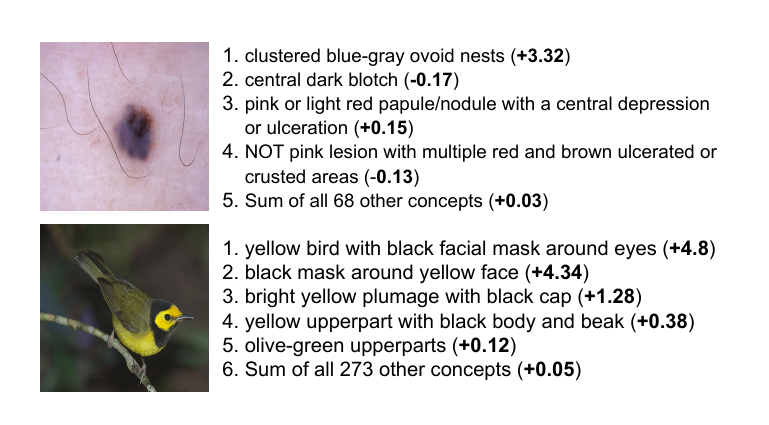}
        \caption{\emph{Melanocytic Nevus} wrongly predicted as \emph{BCC}}
    \end{subfigure}
    \caption{Per-image explanations of M-CBM at NCC=5 for a correct prediction in CUB (a) and a misclassification in ISIC 2018 (b). Concepts with negative logit are labeled as \enquote{NOT concept}. }
    \label{fig:lexpl}
\end{figure*}

\section{Conclusion and Limitations}
We presented Mechanistic Concept Bottleneck Models (M-CBMs), a novel paradigm for training CBMs using concepts learned directly from a black-box backbone and automatically annotated by an MLLM. With this approach, we substantially improve over the state-of-the-art, both in terms of task accuracy and concept predictions. We are also able to keep explanations concise by controlling final layer sparsity to achieve a target Number of Contributing Concepts (NCC). One limitation general to all CBMs is that we still lack a systematic way to assess whether concepts are learned as intended and not via spurious correlations. This is because the final layer is interpretable, but the concept prediction remains a black-box. Another limitation is that, while NCC allows us to control the accuracy–leakage trade-off, it is still not enough to eliminate leakage, as CBMs trained on random words still achieve much higher accuracy than we would expect from random chance. In future work, it may be interesting to investigate whether adding more bottleneck layers can mitigate this by making it harder for information to leak through.
Furthermore, compared to other baselines, \mbox{M-CBM} is less plug-and-play, requiring some supervision to ensure that concepts extracted via SAE are interpretable (see Appendix~\ref{saedetails}), and that the MLLM is providing high-quality annotations. Finally, the high computational cost of using MLLMs for annotations can also be considered a limitation, especially for large datasets. Despite these limitations, given that, due to computational constraints, we annotate only a small subset of images, there might be great potential for improvement with the advancements of MLLMs in both performance and efficiency. 

\subsubsection*{Acknowledgments}
Antonio De Santis was supported by the Progetto Rocca Doctoral Fellowship for his visit to MIT CSAIL. His doctoral scholarship is funded by the Italian Ministry of University and Research (MUR) under the National Recovery and Resilience Plan (NRRP), by Thales Alenia Space, and by the European Union (EU) under the NextGenerationEU project.

\bibliography{iclr2026_conference}
\bibliographystyle{iclr2026_conference}

\appendix
\section{Appendix Overview}
In the appendix, we provide:
\begin{enumerate}[label=\Alph*.]
  \setcounter{enumi}{1}
  \item Details on training SAEs
  \item Details on random words CBMs
  \item Summary of CBMs parameters and backbones
  \item Visualizations of CBL neurons
  \item More examples of explanations
  \item Additional experiments on dataset annotation
  \item Accuracies under NEC
  \item Visualizing SAE features
  \item Explanations compared with baselines
  \item Performance with an open-source MLLM
\end{enumerate}

\section{Details on training SAEs}
\label{saedetails}
In this section, we detail how we trained and evaluated SAEs for concept extraction. We did not find a single hyperparameter configuration that worked uniformly across datasets or backbones. Some dataset-specific adjustments were typically required. Following \citet{bricken2023monosemanticity}, we relied on a mix of quantitative and qualitative proxies to judge whether an SAE was “good enough” for downstream concept use.
Specifically, we tracked the following metrics:

\begin{enumerate}
    \item \textbf{L2 reconstruction loss}. We want the reconstruction loss to be low to ensure we extract a comprehensive set of concepts.
    \item \textbf{Average $\ell_0$}. We aim for a number significantly lower than the backbone dimensionality to ensure concepts are disentangled.
    \item \textbf{Feature density histogram}. It shows how many hidden neurons fire at different activation frequencies across the training set. In the ideal scenario, neurons are either dead or represent interpretable concepts, so we look for a histogram with two clusters, one with very low density representing dead or noisy features and one with higher density, which should represent the actual concepts.
    \item \textbf{Recovered cross-entropy loss}. We ideally want the extracted concepts to recover model performance, so we know they have predictive power.
    \item \textbf{Recovered accuracy}. Same as 4. For ISIC2018, we also consider balanced accuracy.
    \item \textbf{Manual inspection}. Inspecting top-activating images for random neurons in the high-density cluster to assess whether the learned concepts seem interpretable. Empirically, we found that when all other metrics are healthy, most concepts are interpretable, although this cannot be guaranteed.
\end{enumerate}

In Table \ref{tab:sae_hparams}, we provide the training hyperparameters for the SAEs used in the paper, while in Table \ref{tab:sae_metrics}, we show the results in terms of the evaluation metrics we monitored. In Figure \ref{fig:densityhist},  we provide the Feature Density Histograms for each SAE. We can see that low-density neurons are generally well separated from high-density neurons. Furthermore, most of the low-density neurons are dead, i.e., never activating. Some neurons are neither dead nor high-density, and these are typically noisy and not very important for the task. As explained in Section \ref{cextraction}, we perform a filtering step to remove these neurons before naming and annotation. In Figure \ref{fig:cutoff}, we show how choosing a different feature density cut-off impacts recovered loss, accuracy, and the number of neurons kept. We highlight the cut-off we used with a red star symbol. As we see, removing neurons with very low density has little impact on cross-entropy loss and accuracy. After pruning, recovered loss and accuracy for CUB become 89.40\% and 98.41\%. For ISIC2018, recovered loss and balanced accuracy become 99.41\% and 96.84\%. For ImageNet, recovered loss and accuracy become 97.63\% and 96.60\%.

\begin{table}[h]
\caption{Training hyperparameters for the SAEs used in the paper.}
\label{tab:sae_hparams}
\centering
\scalebox{1}{
\begin{tabular}{l|c|c|c}
\toprule \midrule
\multicolumn{1}{l|}{Hyperparameter} & \multicolumn{1}{c|}{CUB} & \multicolumn{1}{c|}{ISIC2018} & \multicolumn{1}{c}{ImageNet} \\
\midrule \midrule
Backbone layer dimension & 512 & 2048 & 2048 \\
Expansion factor & $1\times$ & $0.25\times$ & $4\times$ \\
Optimizer                   & Adam & Adam & Adam \\
Learning rate            & $1\times 10^{-4}$ & $1\times 10^{-4}$ & $1\times 10^{-3}$ \\
L1 coefficient ($\lambda_\mathrm{SAE}$) & $2\times 10^{-3}$ & $5\times 10^{-4}$ & $1\times 10^{-3}$ \\
Epochs                   & 1000 & 1000 & 1000 \\
Patience for early stopping  & 50 & 50 & 50 \\
\midrule
\bottomrule
\end{tabular}
}
\end{table}

\begin{table}[h]
\caption{Evaluation metrics for the SAEs used in the concept extraction phase (pre-pruning). These are computed on the validation set, except for $\ell_0$, which is computed on the training set.}
\label{tab:sae_metrics}
\centering
\scalebox{1}{
\begin{tabular}{l|c|c|c}
\toprule \midrule
\multicolumn{1}{l|}{Metric} & \multicolumn{1}{c|}{CUB} & \multicolumn{1}{c|}{ISIC2018} & \multicolumn{1}{c}{ImageNet} \\
\midrule \midrule
L2 reconstruction loss & 0.0231 & 0.0066 & 0.0462 \\
Average $\ell_0$ & 7.66 & 17.14 & 39.23 \\
Recovered loss (CE) & 89.49\% & 99.58\% & 97.74\% \\
Recovered accuracy & 98.39\% & acc: 96.08\%, bal. acc: 96.84\% & 96.68\% \\
\midrule
\bottomrule
\end{tabular}
}
\end{table}

\begin{figure*}[h!]
    \centering
    \begin{subfigure}[b]{0.325\linewidth}
        \centering
        \includegraphics[width=\linewidth]{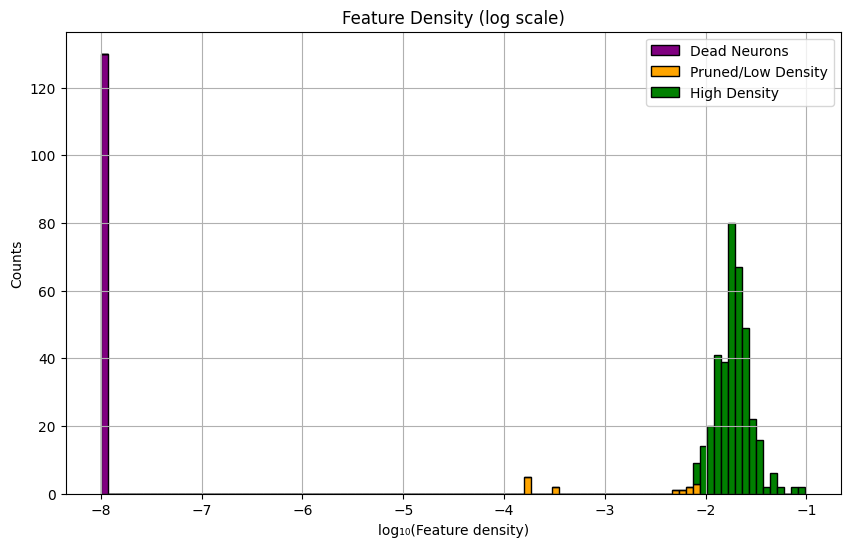}
        \caption{CUB}
    \end{subfigure}
    \begin{subfigure}[b]{0.325\linewidth}
        \centering
        \includegraphics[width=\linewidth]{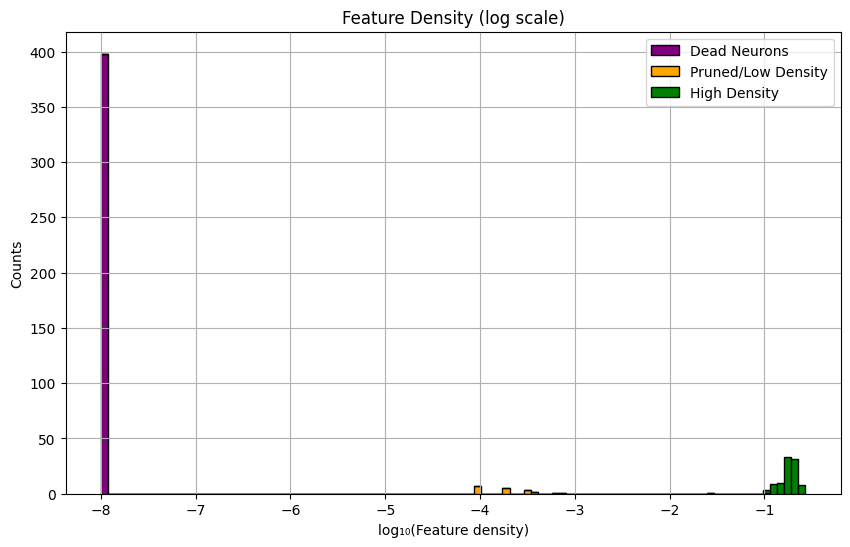}
        \caption{ISIC 2018}
    \end{subfigure}
    \begin{subfigure}[b]{0.325\linewidth}
        \centering
        \includegraphics[width=\linewidth]{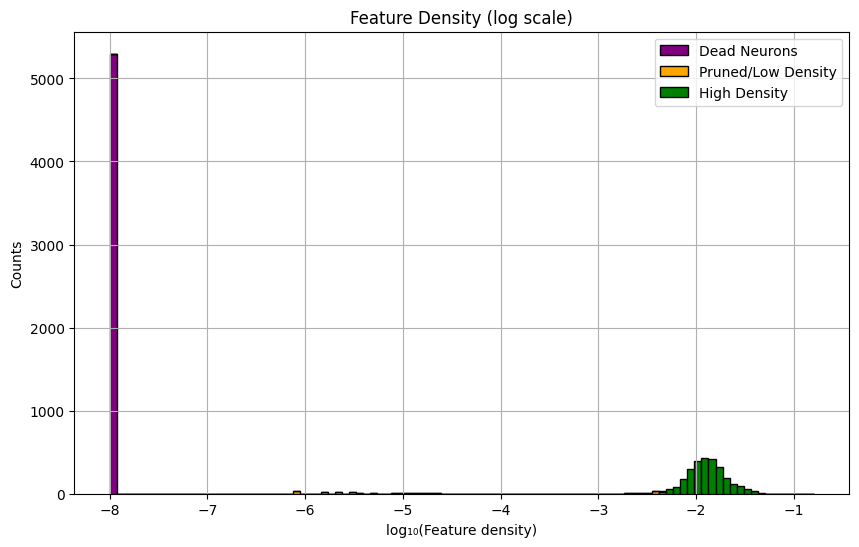}
        \caption{ImageNet}
    \end{subfigure}
    \caption{Feature density histogram for CUB, ISIC2018 and ImageNet. Purple indicates dead neurons (never active in the training set). Yellow indicates neurons that were pruned due to low density and little to no impact on recovered loss. Green indicates neurons that are kept as concepts for the subsequent steps.
    }
    \label{fig:densityhist}
\end{figure*}

\begin{figure*}[t]
    \centering
    \begin{subfigure}[b]{1\linewidth}
        \centering
        \includegraphics[width=\linewidth]{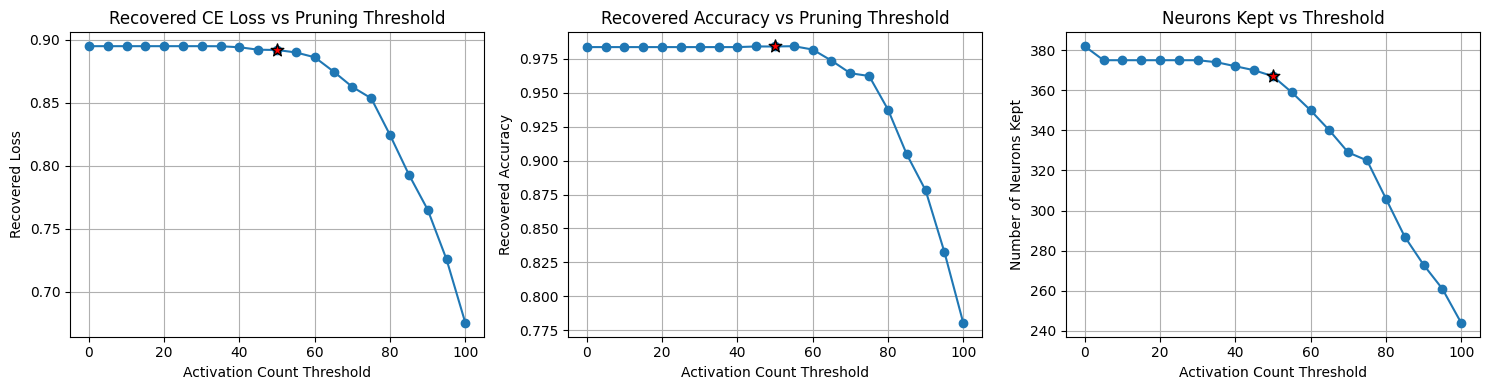}
        \caption{CUB}
    \end{subfigure}
    \begin{subfigure}[b]{1\linewidth}
        \centering
        \includegraphics[width=\linewidth]{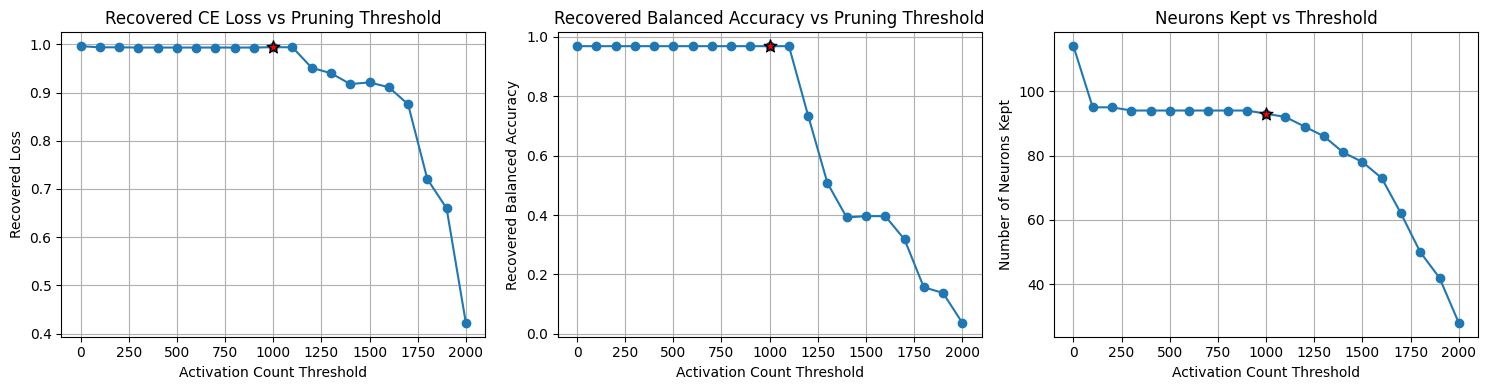}
        \caption{ISIC 2018}
    \end{subfigure}
    \begin{subfigure}[b]{1\linewidth}
        \centering
        \includegraphics[width=\linewidth]{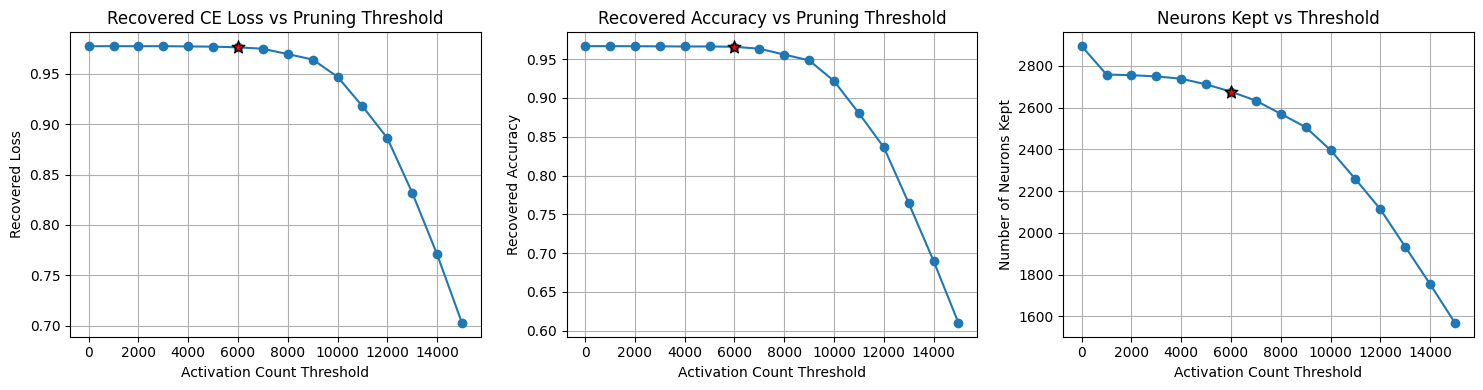}
        \caption{ImageNet}
    \end{subfigure}
    \caption{Effect of pruning by activation-count threshold for CUB, ISIC2018, and ImageNet. Higher thresholds typically reduce recovered performance, but when discarding low-density neurons, the reduction tends to be negligible. The point highlighted by a red star indicates the cutoff used in our experiments.}
    \label{fig:cutoff}
\end{figure*}

\section{Details on random words CBMs}
\label{sec:random_details}
In this section, we provide additional details on how we implemented the experiments with random words for VLG-CBM, VLG-CBM$_{\text{CA}}$, and M-CBM. For each method, we replace every concept name with a random, semantically meaningless text while preserving the cardinality of the original concept sets and their class-conditioned assignment for vanilla VLG-CBM. We draw words from the \href{https://www.nltk.org}{NLTK's words corpus}, filtered to lowercase alphabetic strings of length 3–8, and added the prefix “\texttt{bird }” so that the result is a short phrase like “\texttt{bird pizza}”. The prefix helps maintain minimum image relevance so that models like GroundingDINO or GPT4.1 are more likely to annotate the random concepts as positive in some of the images. For VLG-CBM, the annotation is done using their official \href{https://github.com/Trustworthy-ML-Lab/VLG-CBM}{codebase} without modifications, while for VLG-CBM$_{\text{CA}}$, we remove class conditioning and annotate each (random) concept across all images. Because GroundingDINO accepts at most 256 input tokens, we batch concept lists and run multiple passes until all concepts are processed. For M-CBM, we follow our standard pipeline but substitute random names before annotation. Furthermore, when annotating random concepts, we omit reference grids of top-activating images to avoid leaking information about the original concepts. Figure~\ref{fig:vlg_random_examples} illustrates how, under class-conditioned annotation, VLG-CBM can effortlessly predict correctly using only 1–2 random concepts.

\begin{figure*}[t]
    \centering
    \includegraphics[width=1\linewidth]{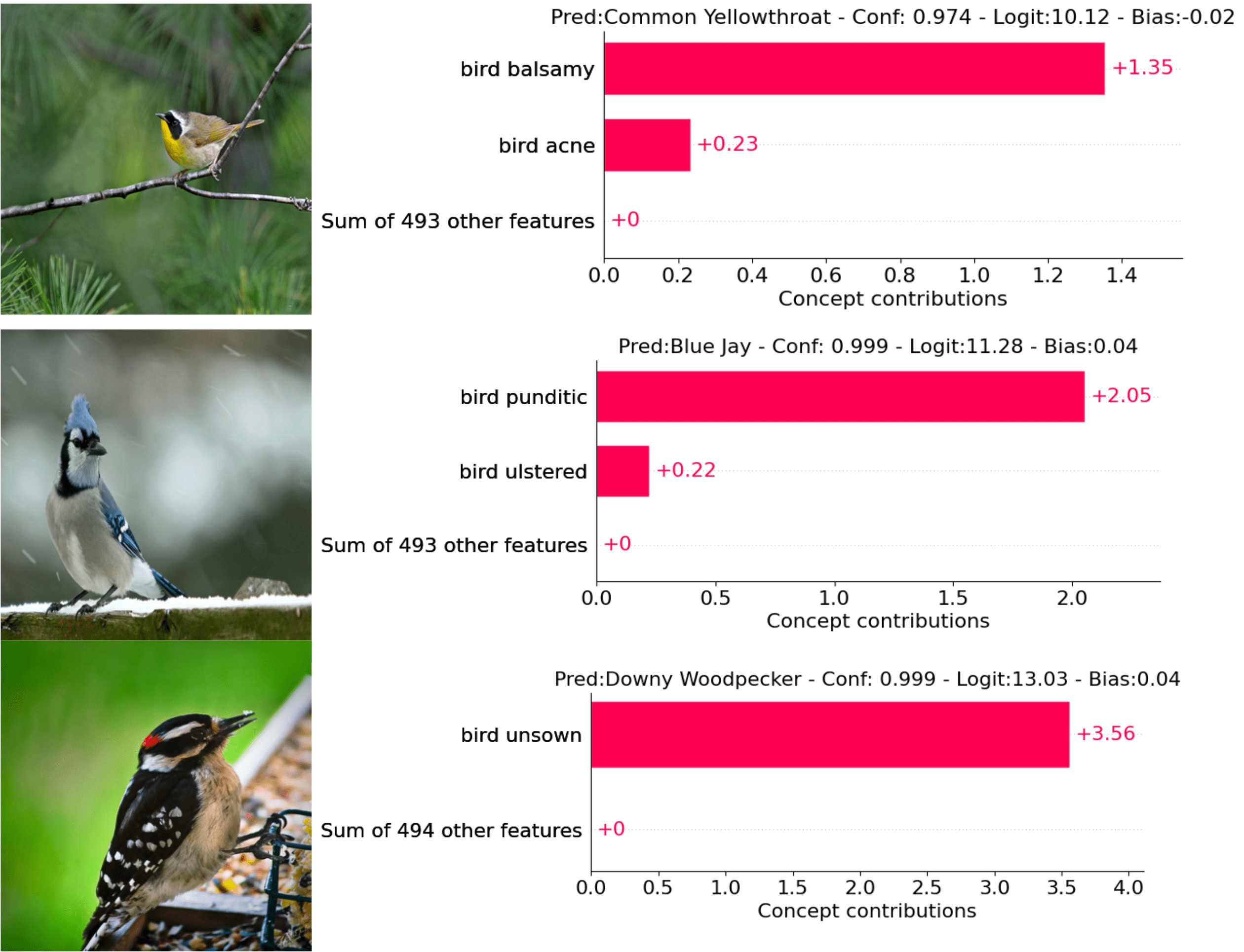}
    \caption{Examples of VLG-CBM explanations where replacing all concept names with random ones still yields correct predictions using just 1–2 concepts, illustrating how class-conditioned annotation can leak class-specific information unrelated to concept semantics.
    }
    \label{fig:vlg_random_examples}
\end{figure*}

\section{Summary of CBMs parameters and backbones}
\label{param_counts}

In this section, we report the number of parameters used at inference time for all methods and datasets. For each configuration, we decompose the total parameter count into (i) the pre-trained backbone, and
(ii) the CBL plus the final classifier, which we denote as CBM.
The CBL size $K$ corresponds to the dimensionality of the concept bottleneck (i.e., the total number of concepts).
It differs across methods, because each CBM constructs and filters its concept set using its own procedure.
Table~\ref{tab:ParamCounts} summarizes the parameter counts (in millions) for all methods considered in our main experiments.
Differences are largely dominated by the choice of backbone (e.g., CLIP ViT-B/16 vs ResNet-18/50), while the additional parameters introduced by the CBM head are relatively small in all cases.

\begin{table}
\caption{Inference-time parameter counts (in millions). Backbone counts include only the pretrained feature extractor. CBM counts include the CBL and the final classifier.}
\centering
\scalebox{0.93}{
\begin{tabular}{l l l r r r r}
\toprule
Method & Dataset & Backbone & Backbone (M) & CBL $K$ & CBM (M) & Total (M) \\
\midrule
LF-CBM                & CUB      & ResNet-18      & 11.69 & 208   & 0.15 & 11.84 \\
LF-CBM                & ISIC2018 & ResNet-50      & 25.56 & 35    & 0.07 & 25.63 \\
LF-CBM                & ImageNet & ResNet-50      & 25.56 & 4523  & 13.79 & 39.35 \\
DN-CBM$_{\text{RN}}$  & CUB      & CLIP RN50      & 38.30 & 8192  & 10.04 & 48.34 \\
DN-CBM$_{\text{RN}}$  & ISIC2018 & CLIP RN50      & 38.30 & 8192  & 8.45  & 46.75 \\
DN-CBM$_{\text{RN}}$  & ImageNet & CLIP RN50      & 38.30 & 8192  & 16.59 & 54.89 \\
DN-CBM$_{\text{ViT}}$ & CUB      & CLIP ViT-B/16  & 86.20 & 4096  & 2.92 & 89.12 \\
DN-CBM$_{\text{ViT}}$ & ISIC2018 & CLIP ViT-B/16  & 86.20 & 4096  & 2.13 & 88.33 \\
DN-CBM$_{\text{ViT}}$ & ImageNet & CLIP ViT-B/16  & 86.20 & 4096  & 6.20 & 92.40 \\
VLG-CBM$_{\text{CA}}$ & CUB      & ResNet-18      & 11.69 & 535   & 0.38 & 12.07 \\
VLG-CBM$_{\text{CA}}$ & ISIC2018 & ResNet-50      & 25.56 & 80    & 0.16 & 25.72 \\
\begin{tabular}[c]{@{}l@{}}\textbf{M-CBM} \textbf{(Ours)}\end{tabular}
                      & CUB      & ResNet-18      & \textbf{11.69} & \textbf{278}  & \textbf{0.20} & \textbf{11.89} \\
\begin{tabular}[c]{@{}l@{}}\textbf{M-CBM} \textbf{(Ours)}\end{tabular}
                      & ISIC2018 & ResNet-50      & \textbf{25.56} & \textbf{73}   & \textbf{0.15} & \textbf{25.71} \\
\begin{tabular}[c]{@{}l@{}}\textbf{M-CBM} \textbf{(Ours)}\end{tabular}
                      & ImageNet & ResNet-50      & \textbf{25.56} & \textbf{2648} & \textbf{8.07} & \textbf{33.63} \\
\bottomrule
\end{tabular}
}
\label{tab:ParamCounts}
\end{table}

\section{Visualizations of CBL neurons}
In Figures~\ref{fig:TopActivatingCUB}, \ref{fig:TopActivatingISIC}, and \ref{fig:TopActivatingImageNet} we show the top–5 activating test images for representative concepts on CUB, ISIC2018, and ImageNet, respectively. These visualizations qualitatively assess whether CBL concepts align with their intended semantics and, when paired with model explanations, help convey \emph{what} the model is actually seeing in the image that influences a prediction.

\begin{figure*}[ht!]
\centering
\begin{subfigure}{.49\linewidth}
    \centering
    \includegraphics[width=\linewidth]{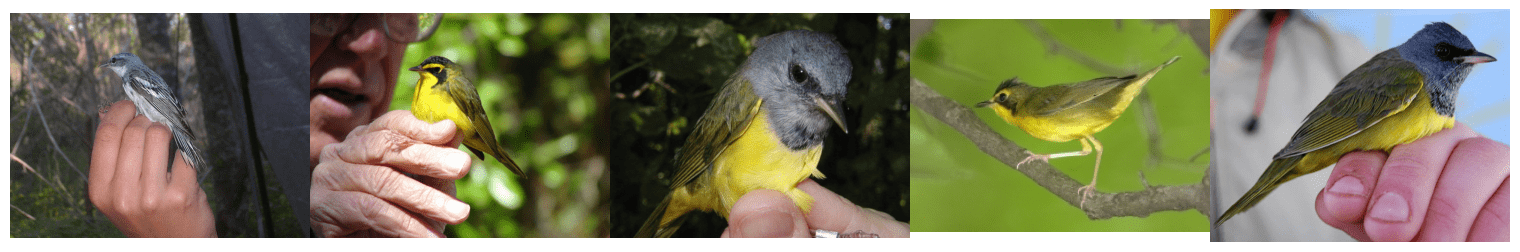}
    \caption{Bird held in human hand}
    \label{fig:sub1}
\end{subfigure}
\begin{subfigure}{.49\linewidth}
    \centering
    \includegraphics[width=\linewidth]{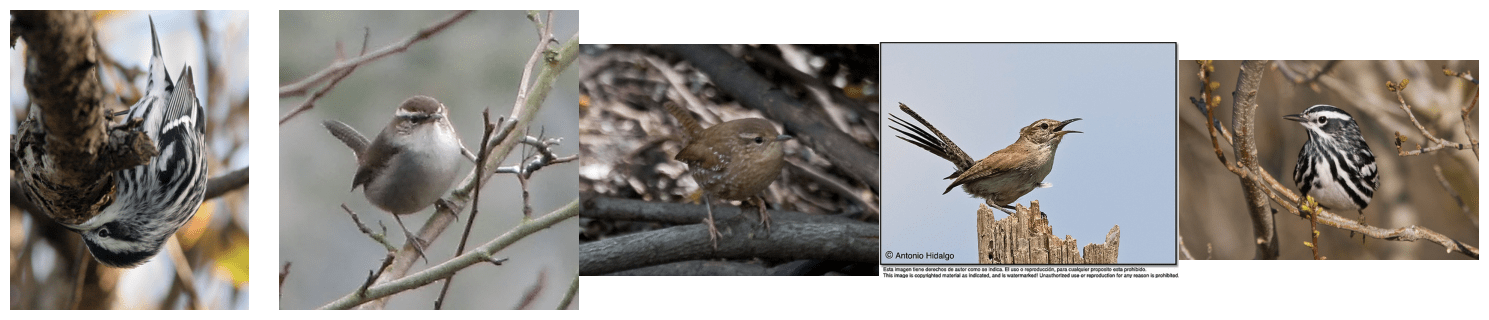}
    \caption{White eyebrow stripe}
    \label{fig:sub1}
\end{subfigure}
\begin{subfigure}{.49\linewidth}
    \centering
    \includegraphics[width=\linewidth]{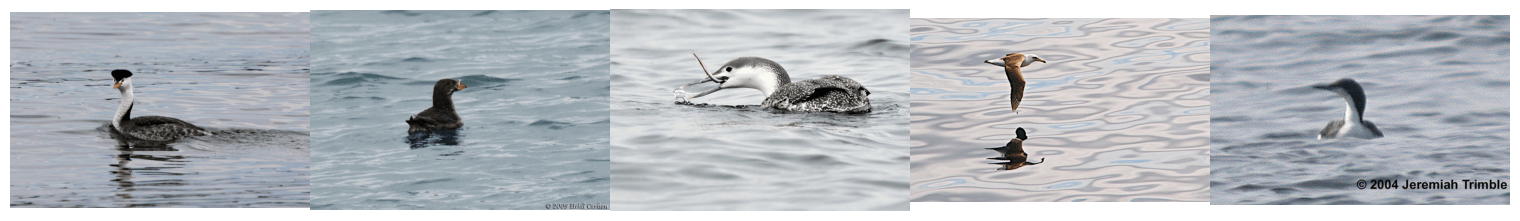}
    \caption{Birds on water}
    \label{fig:sub1}
\end{subfigure}
\begin{subfigure}{.49\linewidth}
    \centering
    \includegraphics[width=\linewidth]{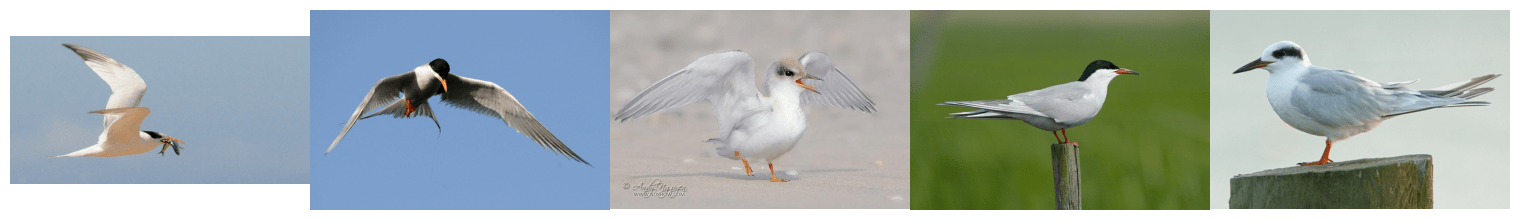}
    \caption{Black cap with white body}
    \label{fig:sub1}
\end{subfigure}
\begin{subfigure}{.49\linewidth}
    \centering
    \includegraphics[width=\linewidth]{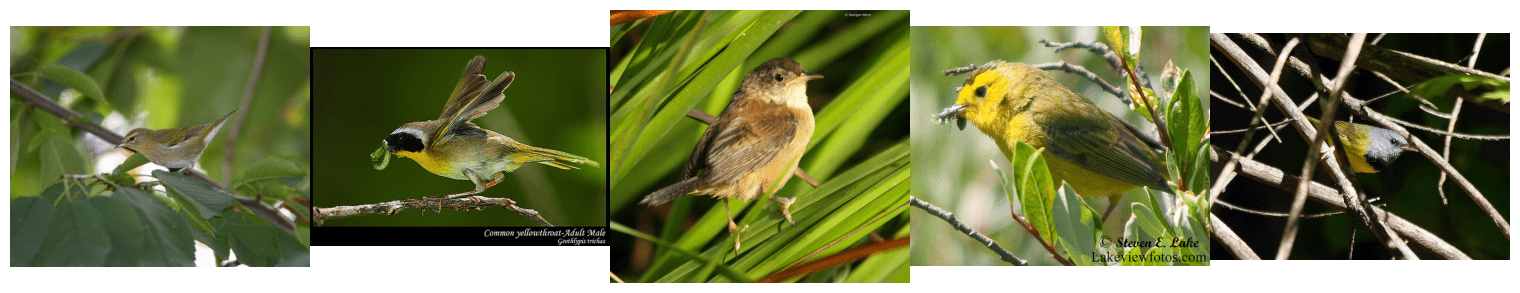}
    \caption{Bird in leafy background}
    \label{fig:sub1}
\end{subfigure}
\begin{subfigure}{.49\linewidth}
    \centering
    \includegraphics[width=\linewidth]{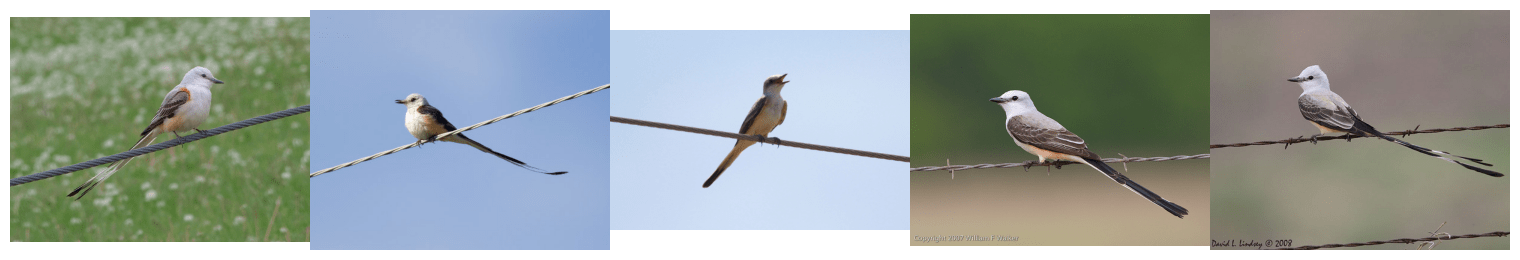}
    \caption{Very long forked tail}
    \label{fig:sub1}
\end{subfigure}
\begin{subfigure}{.49\linewidth}
    \centering
    \includegraphics[width=\linewidth]{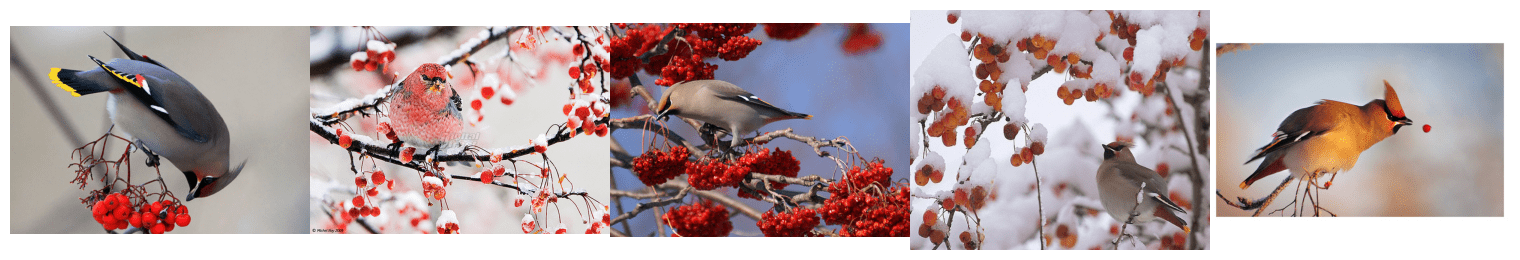}
    \caption{Red berries on branches}
    \label{fig:sub1}
\end{subfigure}
\begin{subfigure}{.49\linewidth}
    \centering
    \includegraphics[width=\linewidth]{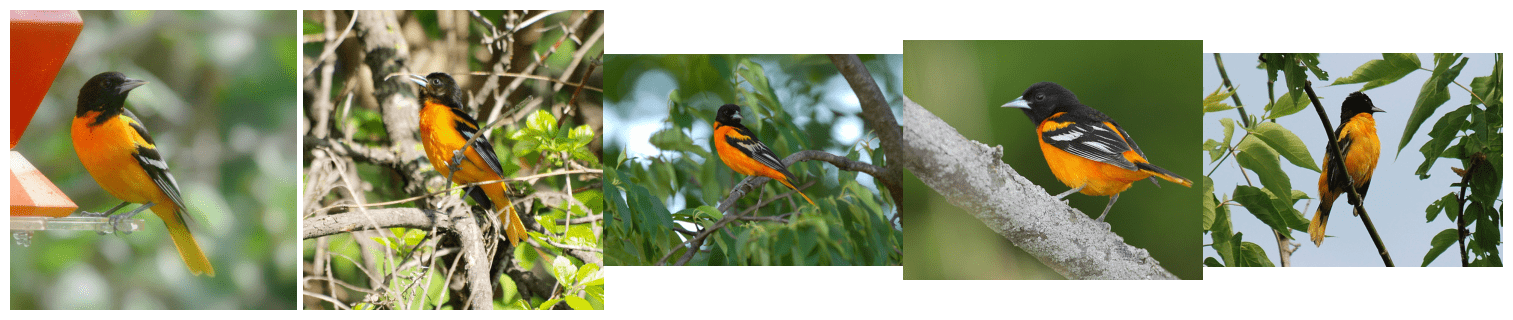}
    \caption{Black head with bright orange breast}
    \label{fig:sub1}
\end{subfigure}
\begin{subfigure}{.49\linewidth}
    \centering
    \includegraphics[width=\linewidth]{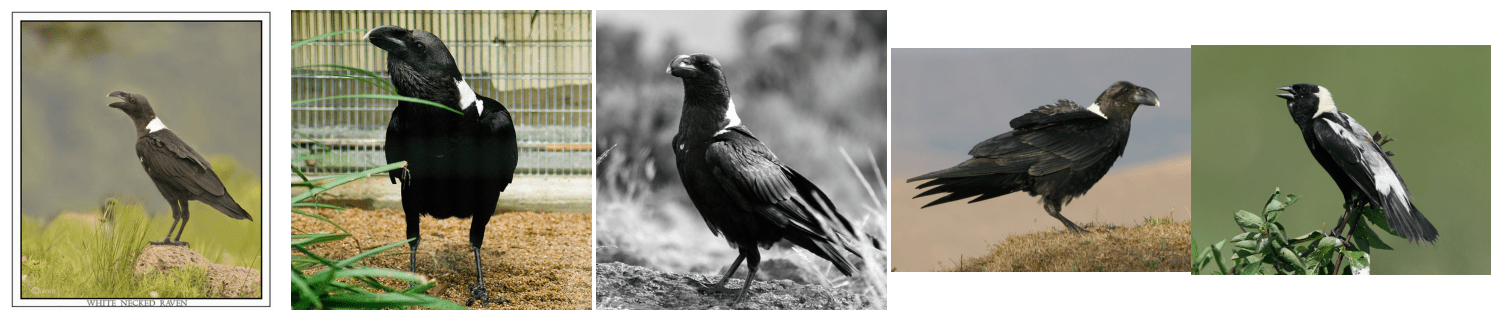}
    \caption{White patch on black back}
    \label{fig:sub1}
\end{subfigure}
\begin{subfigure}{.49\linewidth}
    \centering
    \includegraphics[width=\linewidth]{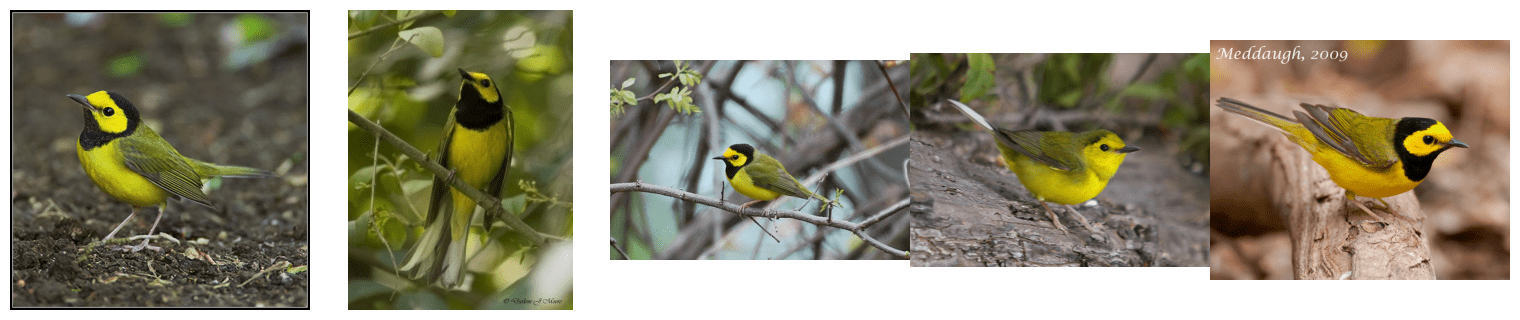}
    \caption{Black mask around yellow face}
    \label{fig:sub1}
\end{subfigure}
\begin{subfigure}{.49\linewidth}
    \centering
    \includegraphics[width=\linewidth]{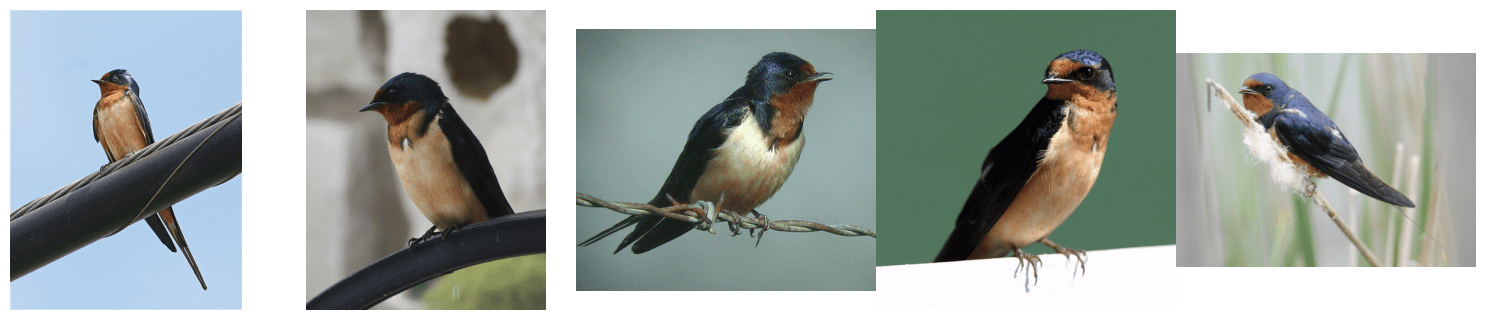}
    \caption{Reddish-brown throat patch}
    \label{fig:sub1}
\end{subfigure}
\begin{subfigure}{.49\linewidth}
    \centering
    \includegraphics[width=\linewidth]{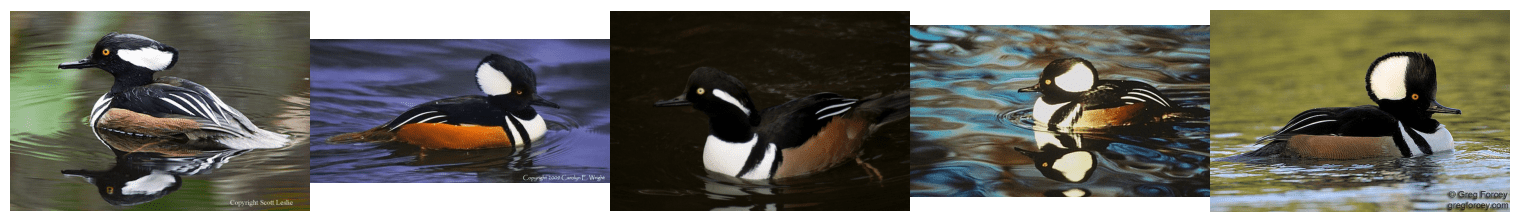}
    \caption{White fan-shaped head patch}
    \label{fig:sub1}
\end{subfigure}
\begin{subfigure}{.49\linewidth}
    \centering
    \includegraphics[width=\linewidth]{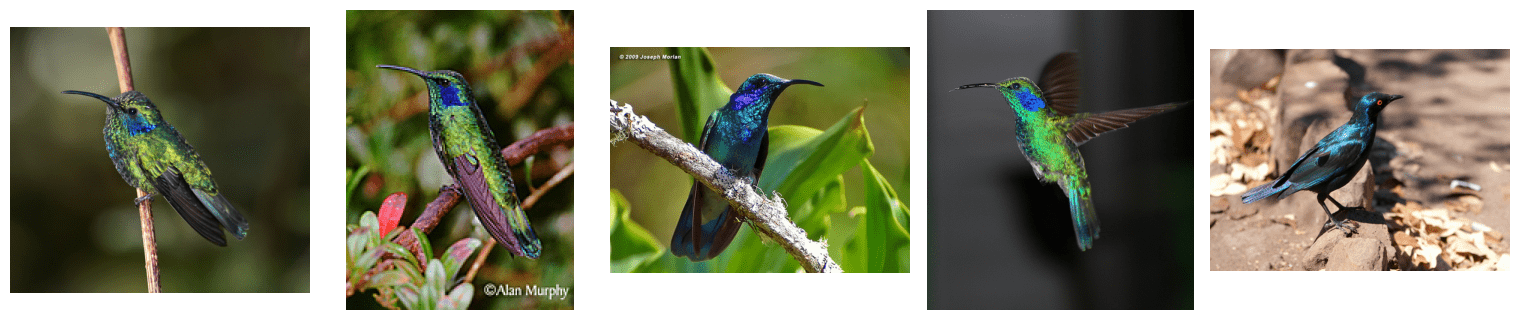}
    \caption{Iridescent green-blue plumage}
    \label{fig:sub1}
\end{subfigure}
\begin{subfigure}{.49\linewidth}
    \centering
    \includegraphics[width=\linewidth]{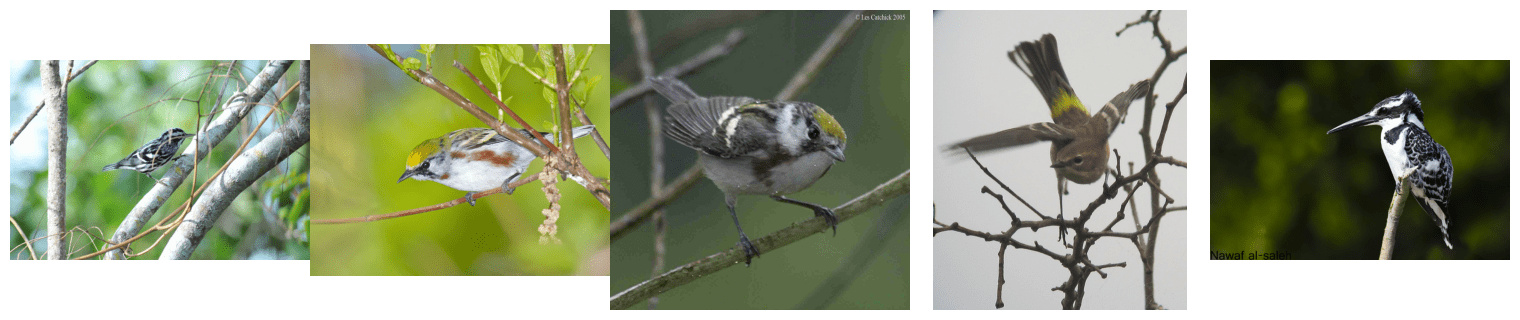}
    \caption{White wing bars on dark wings}
    \label{fig:sub1}
\end{subfigure}
\begin{subfigure}{.49\linewidth}
    \centering
    \includegraphics[width=\linewidth]{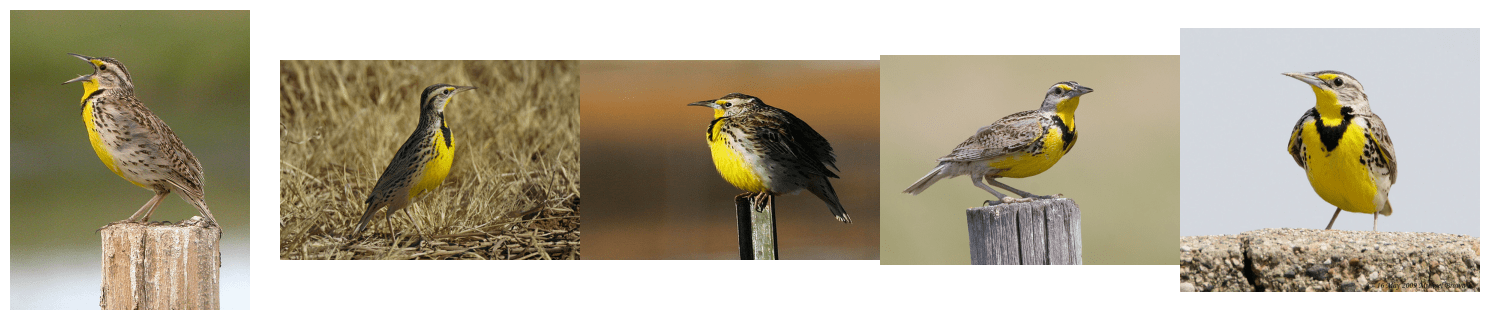}
    \caption{Yellow chest with black v-shaped mark}
    \label{fig:sub1}
\end{subfigure}
\begin{subfigure}{.49\linewidth}
    \centering
    \includegraphics[width=\linewidth]{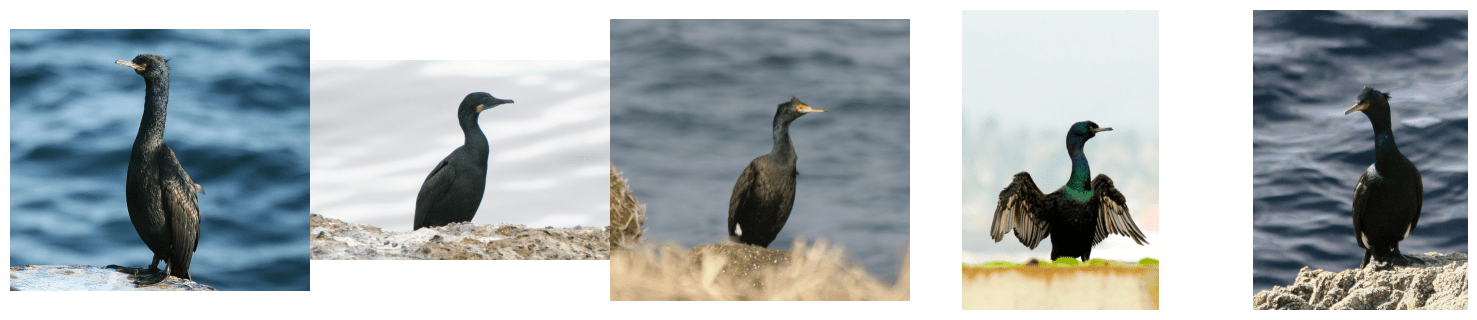}
    \caption{Dark, slender waterbird with long neck}
    \label{fig:sub1}
\end{subfigure}
\begin{subfigure}{.49\linewidth}
    \centering
    \includegraphics[width=\linewidth]{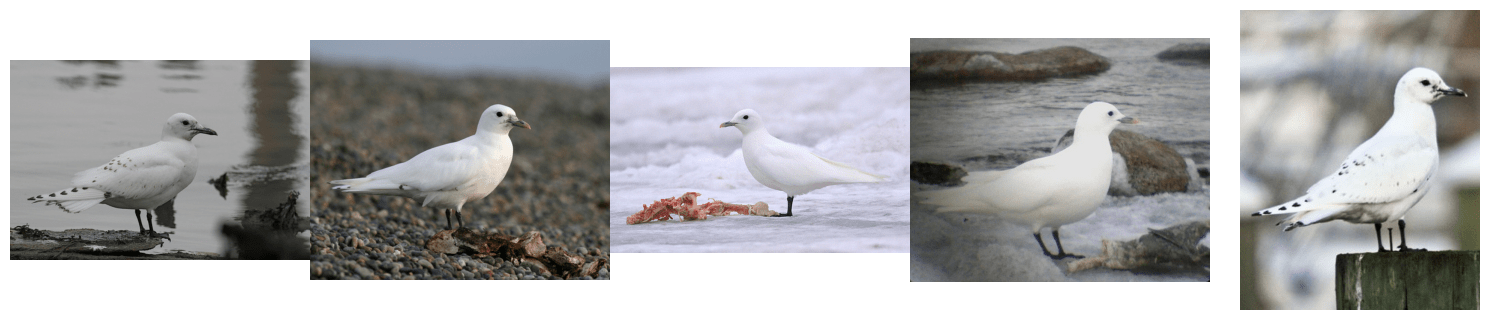}
    \caption{All-white plumage}
    \label{fig:sub1}
\end{subfigure}
\begin{subfigure}{.49\linewidth}
    \centering
    \includegraphics[width=\linewidth]{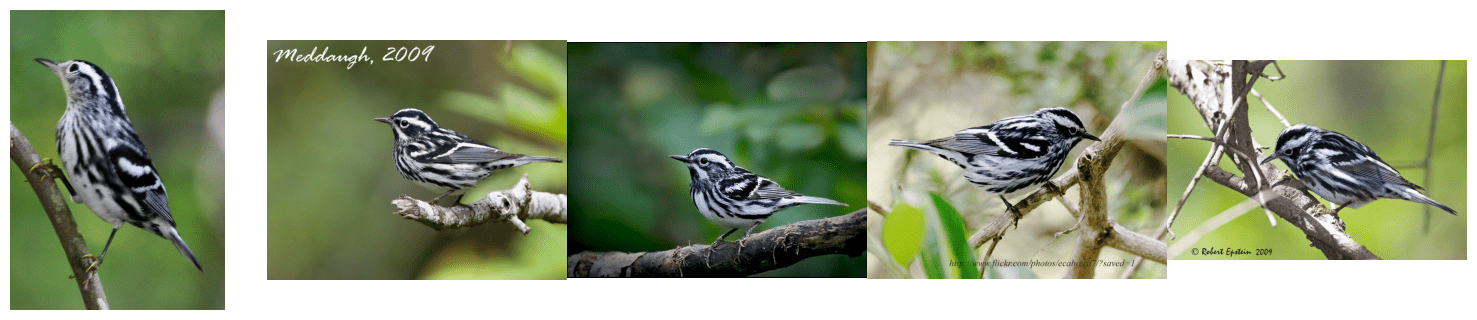}
    \caption{Black and white longitudinal stripes}
    \label{fig:sub1}
\end{subfigure}
\begin{subfigure}{.49\linewidth}
    \centering
    \includegraphics[width=\linewidth]{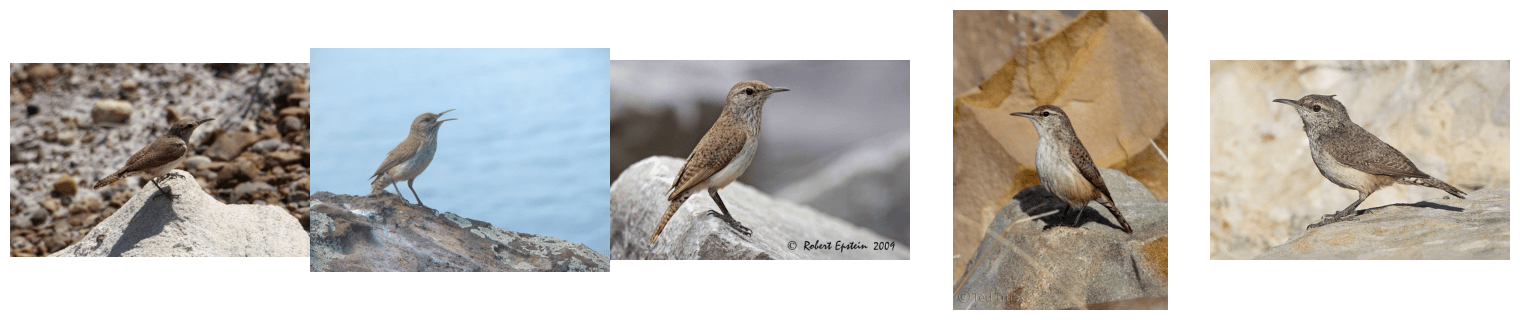}
    \caption{Birds perched on rocks}
    \label{fig:sub1}
\end{subfigure}
\begin{subfigure}{.49\linewidth}
    \centering
    \includegraphics[width=\linewidth]{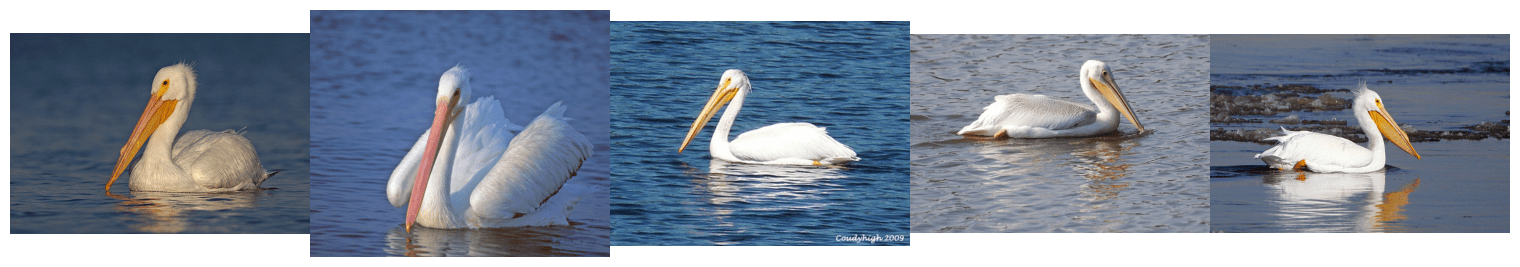}
    \caption{Large orange-yellow beak}
    \label{fig:sub1}
\end{subfigure}
\caption{Top-5 activating images for CUB concepts}
\label{fig:TopActivatingCUB}
\end{figure*}

\begin{figure*}[ht!]
\centering
\begin{subfigure}{.49\linewidth}
    \centering
    \includegraphics[width=\linewidth]{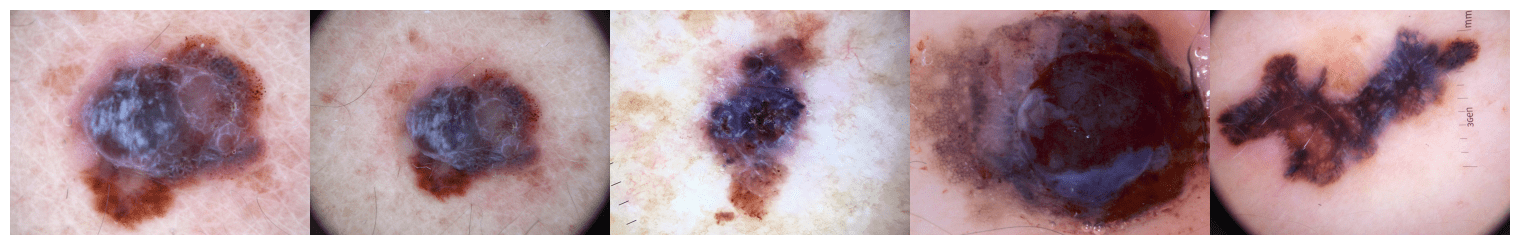}
    \caption{Blue-white veil}
    \label{fig:sub1}
\end{subfigure}
\begin{subfigure}{.49\linewidth}
    \centering
    \includegraphics[width=\linewidth]{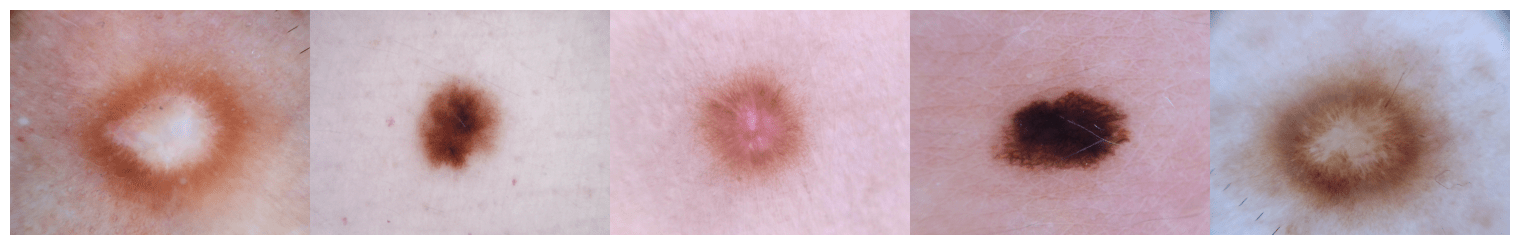}
    \caption{Peripheral pigment network}
    \label{fig:sub1}
\end{subfigure}
\begin{subfigure}{.49\linewidth}
    \centering
    \includegraphics[width=\linewidth]{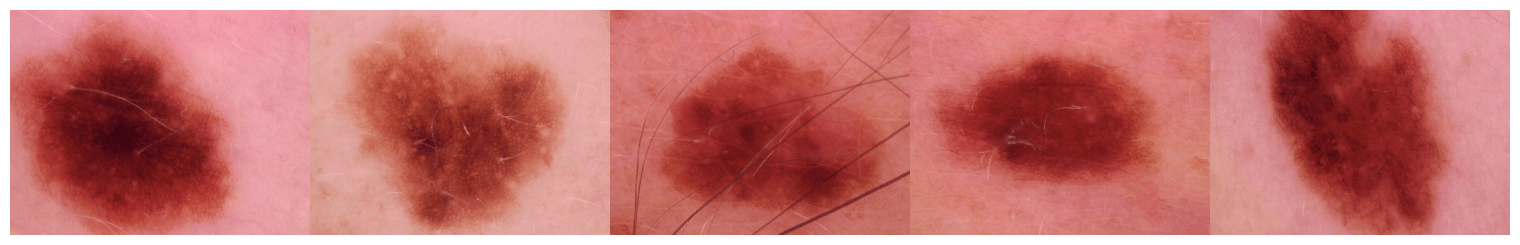}
    \caption{Dark brown homogeneous patch}
    \label{fig:sub1}
\end{subfigure}
\begin{subfigure}{.49\linewidth}
    \centering
    \includegraphics[width=\linewidth]{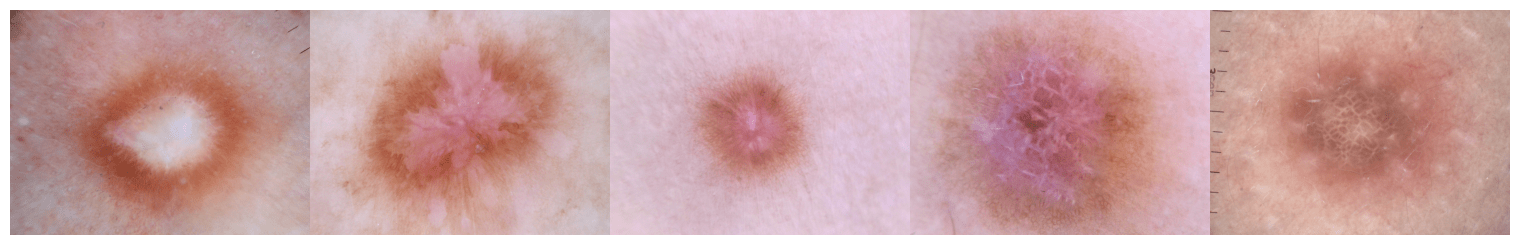}
    \caption{Pink-white structureless area}
    \label{fig:sub1}
\end{subfigure}
\begin{subfigure}{.49\linewidth}
    \centering
    \includegraphics[width=\linewidth]{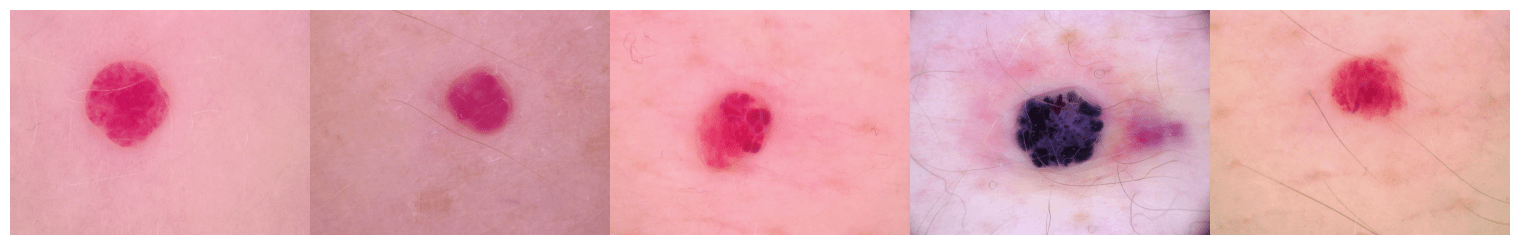}
    \caption{Red to purple homogeneous area}
    \label{fig:sub1}
\end{subfigure}
\begin{subfigure}{.49\linewidth}
    \centering
    \includegraphics[width=\linewidth]{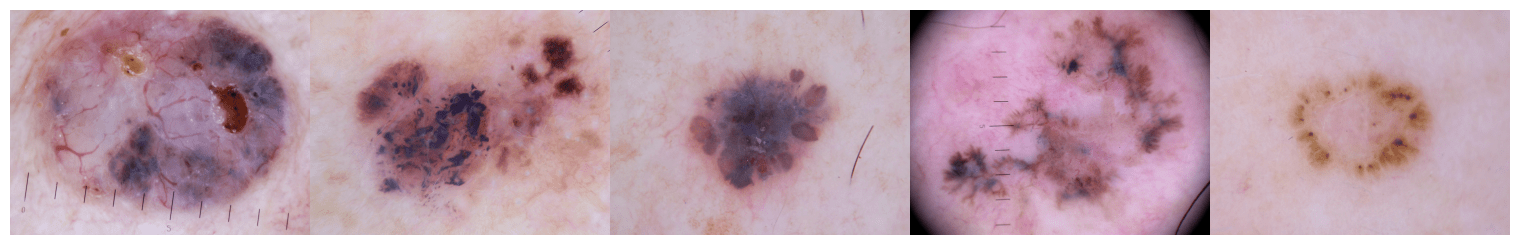}
    \caption{Clustered blue-gray ovoid nests}
    \label{fig:sub1}
\end{subfigure}
\begin{subfigure}{.49\linewidth}
    \centering
    \includegraphics[width=\linewidth]{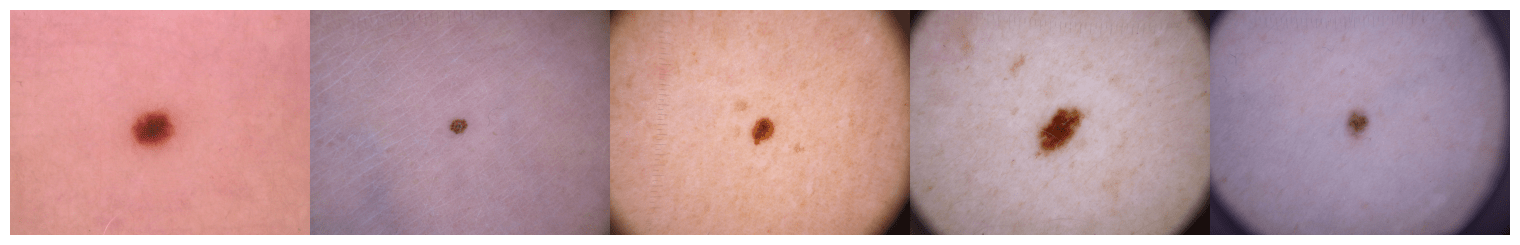}
    \caption{Small, dark brown dot or globule}
    \label{fig:sub1}
\end{subfigure}
\begin{subfigure}{.49\linewidth}
    \centering
    \includegraphics[width=\linewidth]{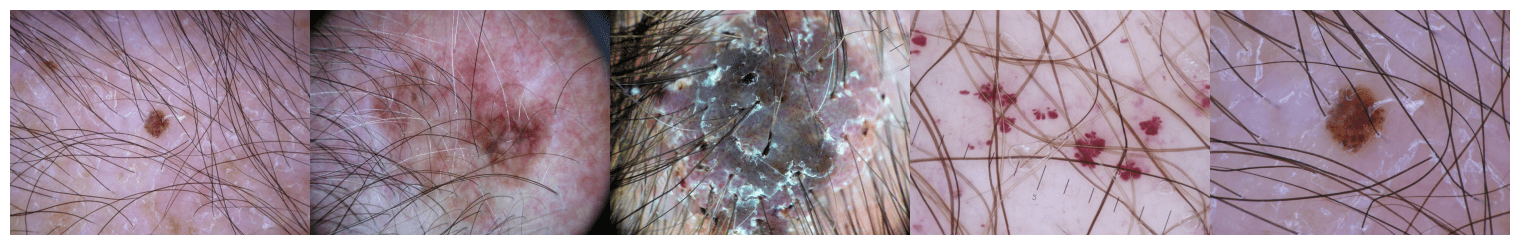}
    \caption{Scalp lesion with surrounding hair}
    \label{fig:sub1}
\end{subfigure}
\begin{subfigure}{.49\linewidth}
    \centering
    \includegraphics[width=\linewidth]{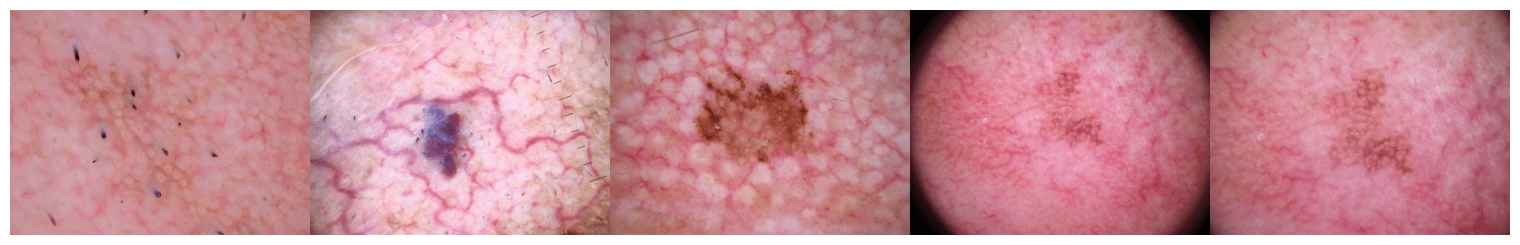}
    \caption{Clustered pink-to-red vascular network}
    \label{fig:sub1}
\end{subfigure}
\begin{subfigure}{.49\linewidth}
    \centering
    \includegraphics[width=\linewidth]{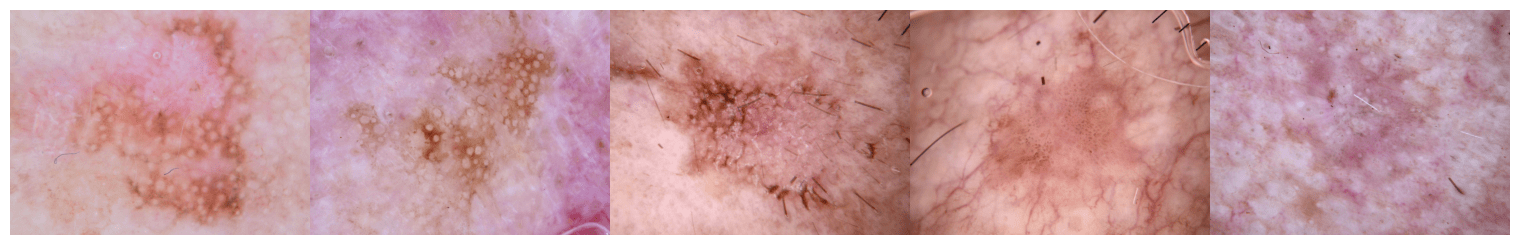}
    \caption{Pink scaly plaque with white and brown dots}
    \label{fig:sub1}
\end{subfigure}
\begin{subfigure}{.49\linewidth}
    \centering
    \includegraphics[width=\linewidth]{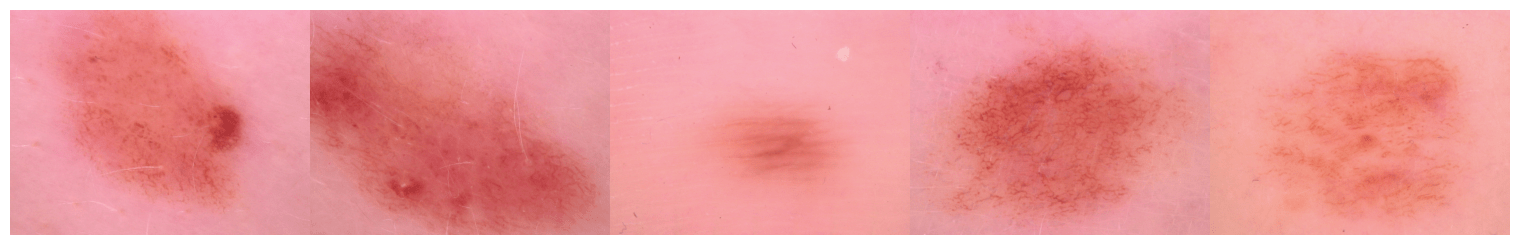}
    \caption{Elongated pink structure with homogeneous texture}
    \label{fig:sub1}
\end{subfigure}
\begin{subfigure}{.49\linewidth}
    \centering
    \includegraphics[width=\linewidth]{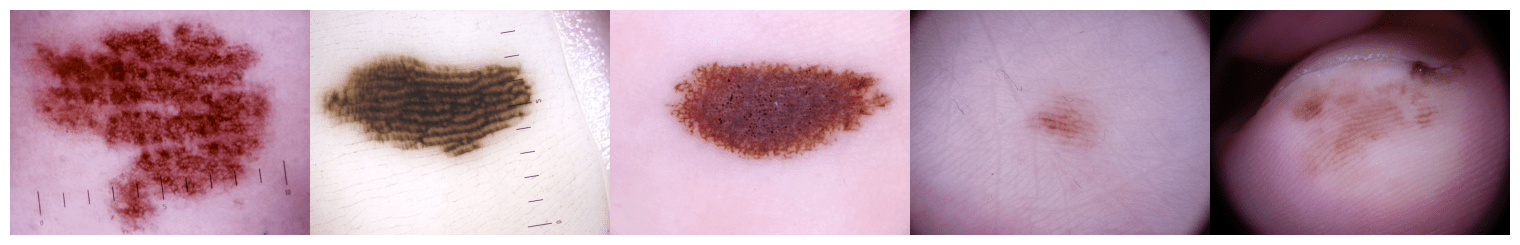}
    \caption{Parallel lines or streaks within a pigmented lesion}
    \label{fig:sub1}
\end{subfigure}
\begin{subfigure}{.49\linewidth}
    \centering
    \includegraphics[width=\linewidth]{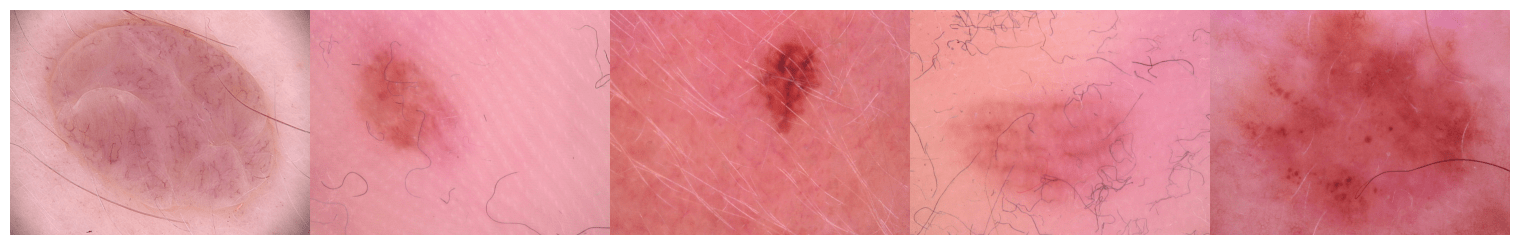}
    \caption{Pink-to-light red, structureless, scaly or erythematous plaque}
    \label{fig:sub1}
\end{subfigure}
\begin{subfigure}{.49\linewidth}
    \centering
    \includegraphics[width=\linewidth]{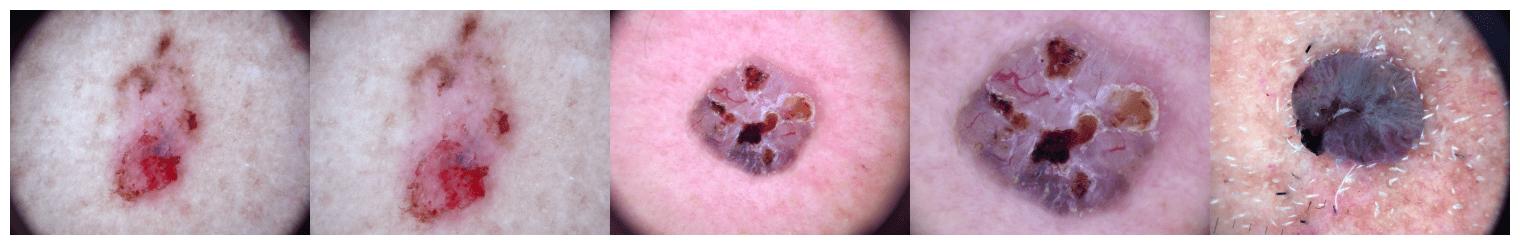}
    \caption{Pink or light red papule/nodule with a central depression or ulceration}
    \label{fig:sub1}
\end{subfigure}
\begin{subfigure}{.49\linewidth}
    \centering
    \includegraphics[width=\linewidth]{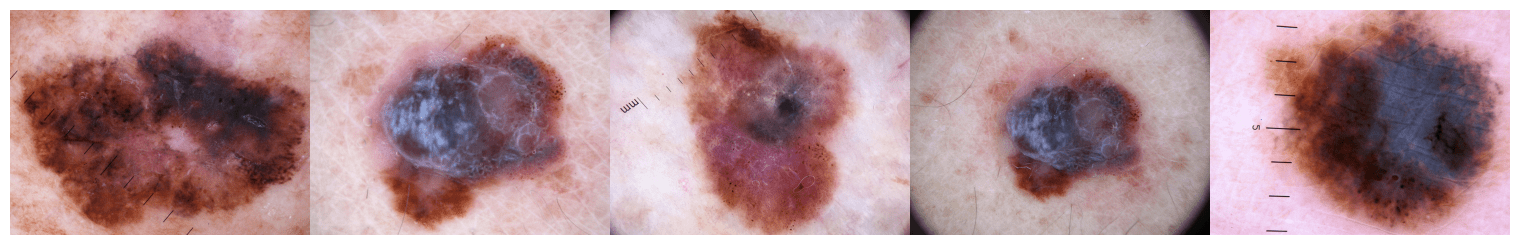}
    \caption{Atypical irregular dark brown to black blotches with asymmetric borders and variegated pigmentation}
    \label{fig:sub1}
\end{subfigure}
\begin{subfigure}{.49\linewidth}
    \centering
    \includegraphics[width=\linewidth]{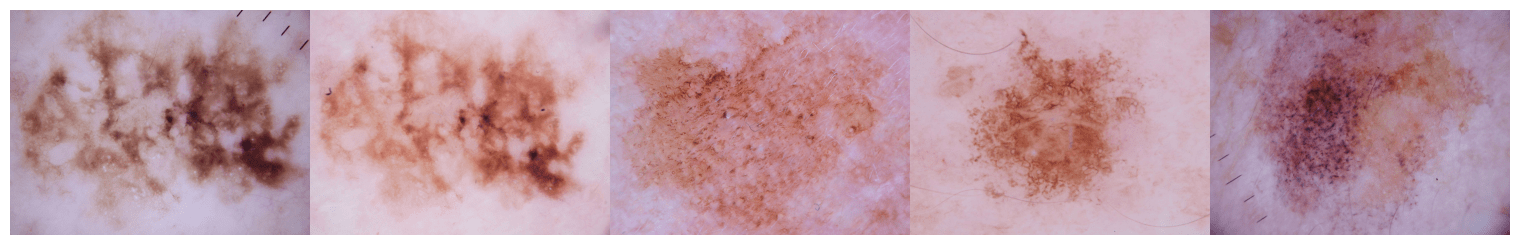}
    \caption{Pinkish-white structureless areas with multiple small circular or target-like brownish spots (milia-like cysts and comedo-like openings)}
    \label{fig:sub1}
\end{subfigure}
\begin{subfigure}{.49\linewidth}
    \centering
    \includegraphics[width=\linewidth]{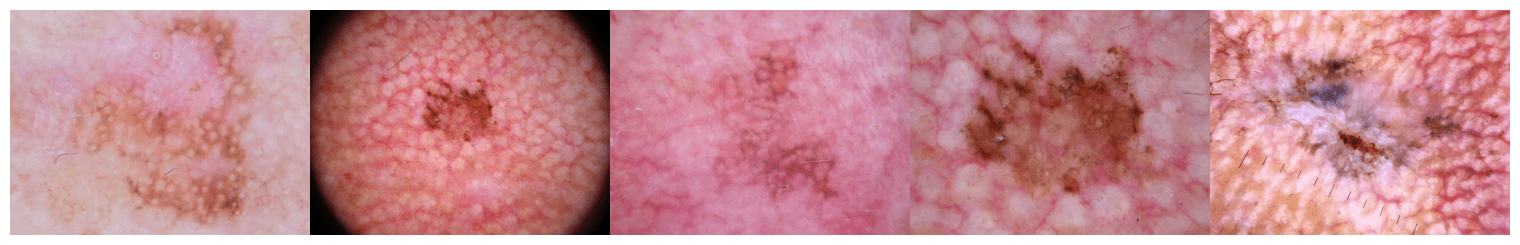}
    \caption{Clustered white-to-pink round structures (milky or keratotic dots) on a pink background}
    \label{fig:sub1}
\end{subfigure}
\begin{subfigure}{.49\linewidth}
    \centering
    \includegraphics[width=\linewidth]{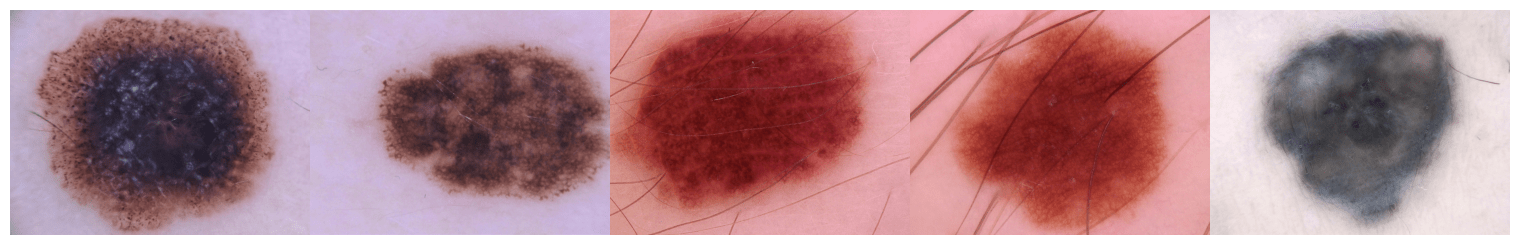}
    \caption{Dark, structureless central area with irregular, diffuse pigmentation}
    \label{fig:sub1}
\end{subfigure}
\begin{subfigure}{.49\linewidth}
    \centering
    \includegraphics[width=\linewidth]{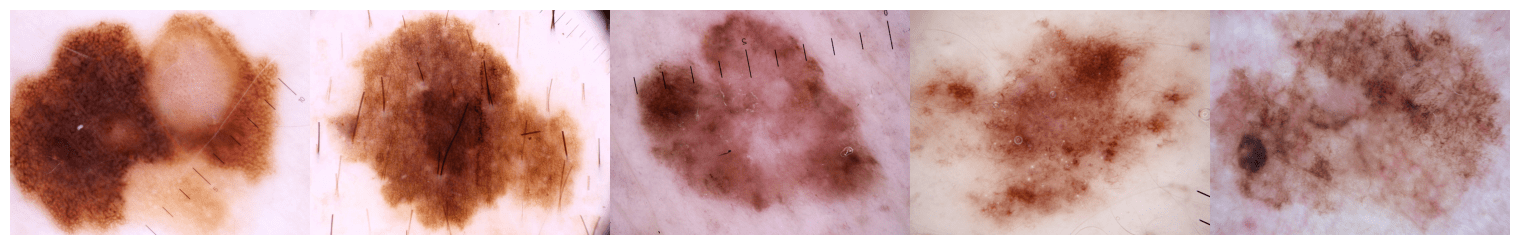}
    \caption{Irregular brown blotches}
    \label{fig:sub1}
\end{subfigure}
\begin{subfigure}{.49\linewidth}
    \centering
    \includegraphics[width=\linewidth]{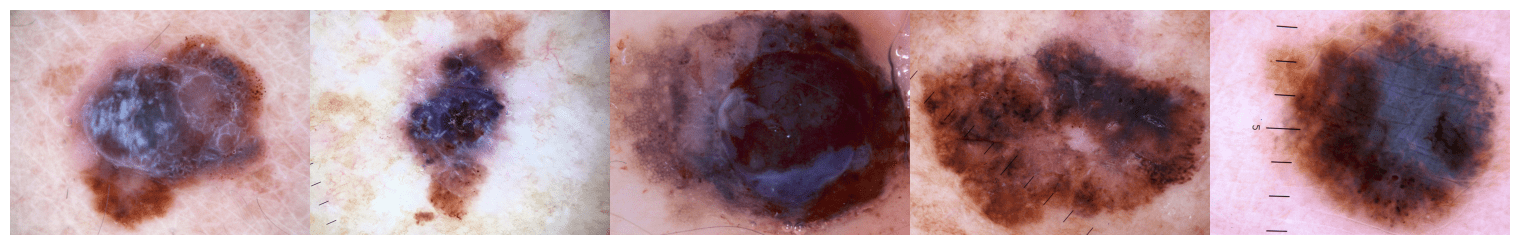}
    \caption{Asymmetric patch with variegated pigmentation}
    \label{fig:sub1}
\end{subfigure}
\caption{Top-5 activating images for ISIC2018 concepts}
\label{fig:TopActivatingISIC}
\end{figure*}

\begin{figure*}[ht!]
\centering
\begin{subfigure}{.49\linewidth}
    \centering
    \includegraphics[width=\linewidth]{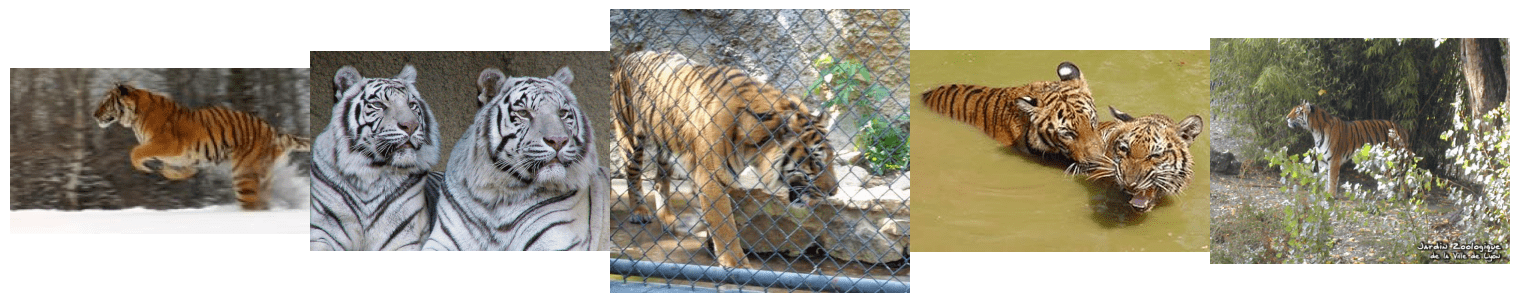}
    \caption{Tiger stripes}
    \label{fig:sub1}
\end{subfigure}
\begin{subfigure}{.49\linewidth}
    \centering
    \includegraphics[width=\linewidth]{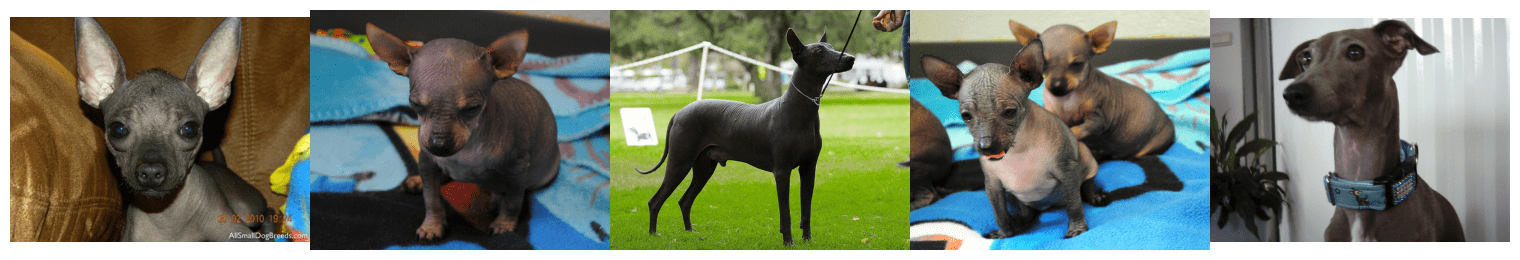}
    \caption{Hairless dog skin}
    \label{fig:sub1}
\end{subfigure}
\begin{subfigure}{.49\linewidth}
    \centering
    \includegraphics[width=\linewidth]{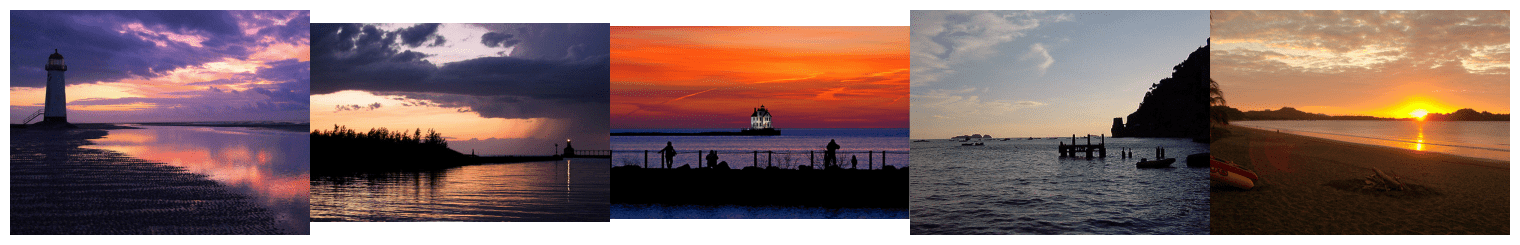}
    \caption{Sunset over water}
    \label{fig:sub1}
\end{subfigure}
\begin{subfigure}{.49\linewidth}
    \centering
    \includegraphics[width=\linewidth]{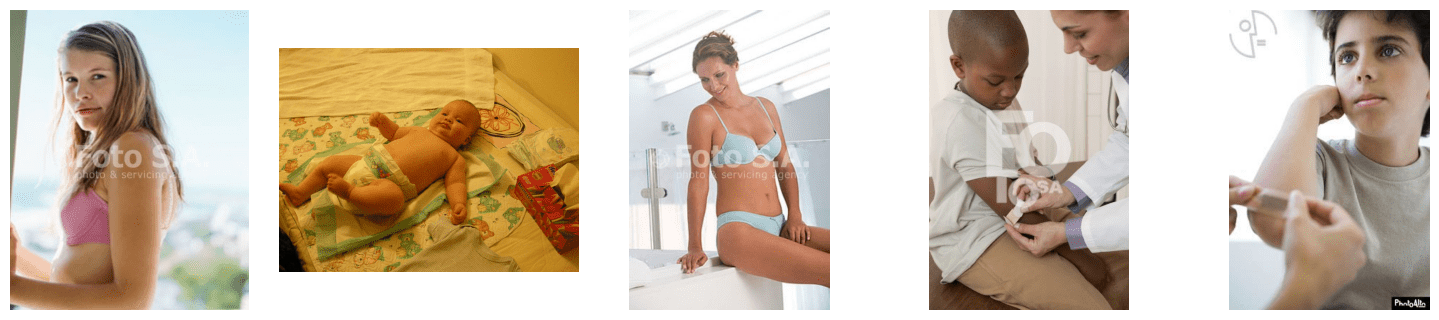}
    \caption{Exposed human skin}
    \label{fig:sub1}
\end{subfigure}
\begin{subfigure}{.49\linewidth}
    \centering
    \includegraphics[width=\linewidth]{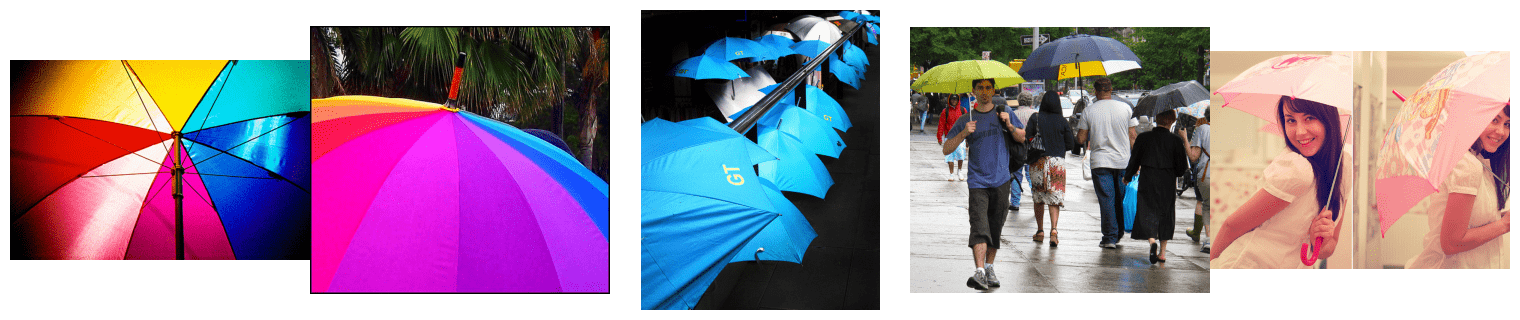}
    \caption{Colorful umbrella canopy}
    \label{fig:sub1}
\end{subfigure}
\begin{subfigure}{.49\linewidth}
    \centering
    \includegraphics[width=\linewidth]{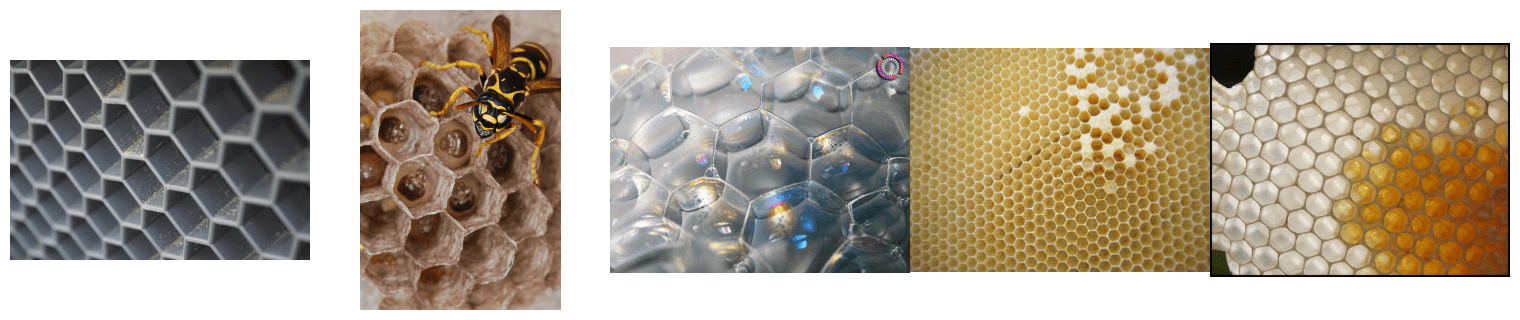}
    \caption{Hexagonal grid structure}
    \label{fig:sub1}
\end{subfigure}
\begin{subfigure}{.49\linewidth}
    \centering
    \includegraphics[width=\linewidth]{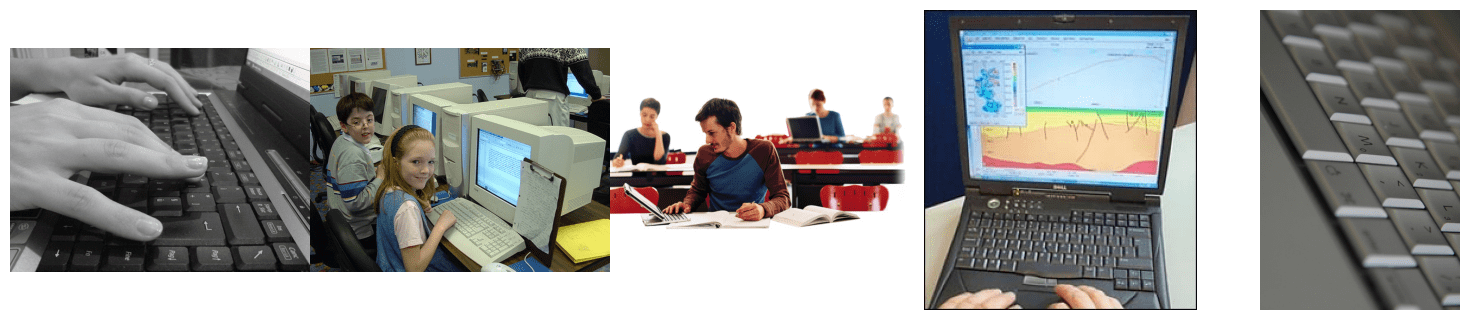}
    \caption{Hands using laptop keyboard}
    \label{fig:sub1}
\end{subfigure}
\begin{subfigure}{.49\linewidth}
    \centering
    \includegraphics[width=\linewidth]{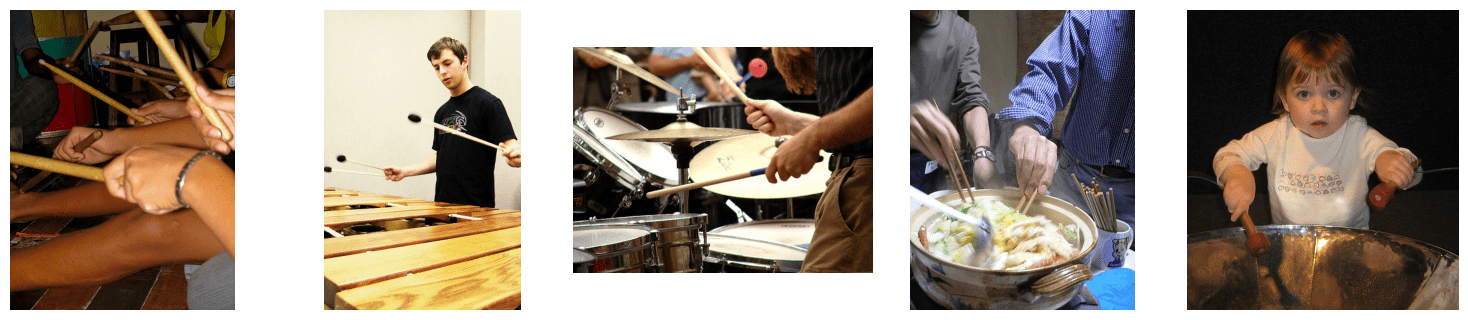}
    \caption{Wooden stick held by a hand}
    \label{fig:sub1}
\end{subfigure}
\begin{subfigure}{.49\linewidth}
    \centering
    \includegraphics[width=\linewidth]{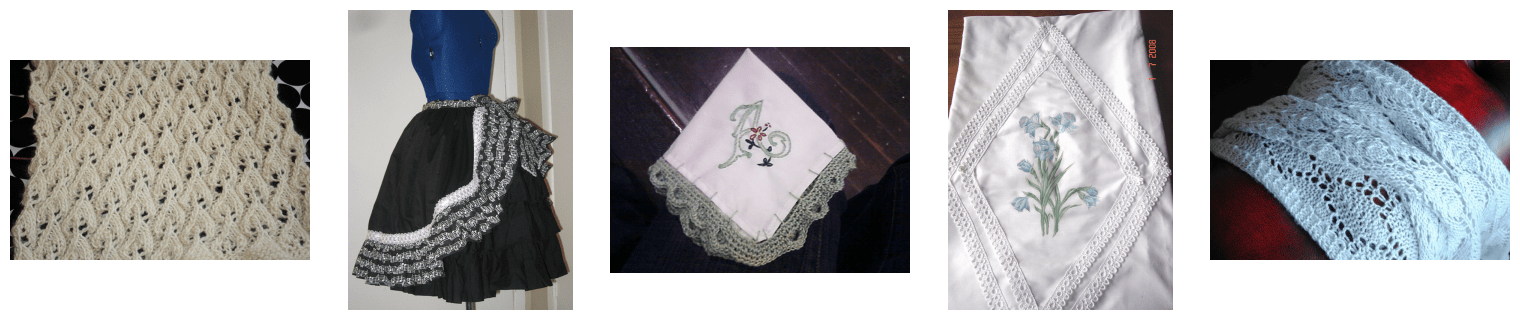}
    \caption{Intricate lace or crochet pattern}
    \label{fig:sub1}
\end{subfigure}
\begin{subfigure}{.49\linewidth}
    \centering
    \includegraphics[width=\linewidth]{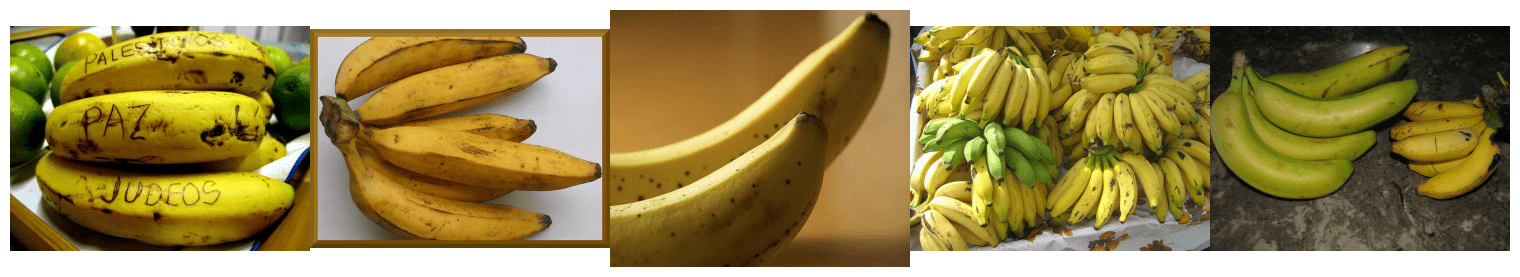}
    \caption{Yellow banana peel with brown spots}
    \label{fig:sub1}
\end{subfigure}
\begin{subfigure}{.49\linewidth}
    \centering
    \includegraphics[width=\linewidth]{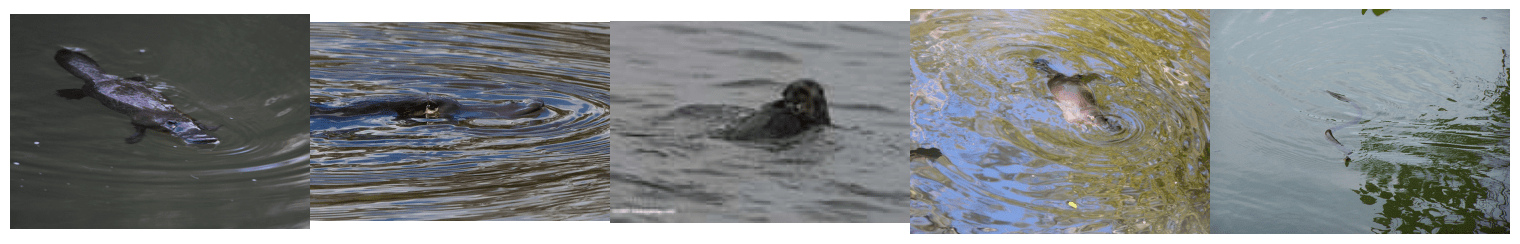}
    \caption{Animal swimming at the water surface}
    \label{fig:sub1}
\end{subfigure}
\begin{subfigure}{.49\linewidth}
    \centering
    \includegraphics[width=\linewidth]{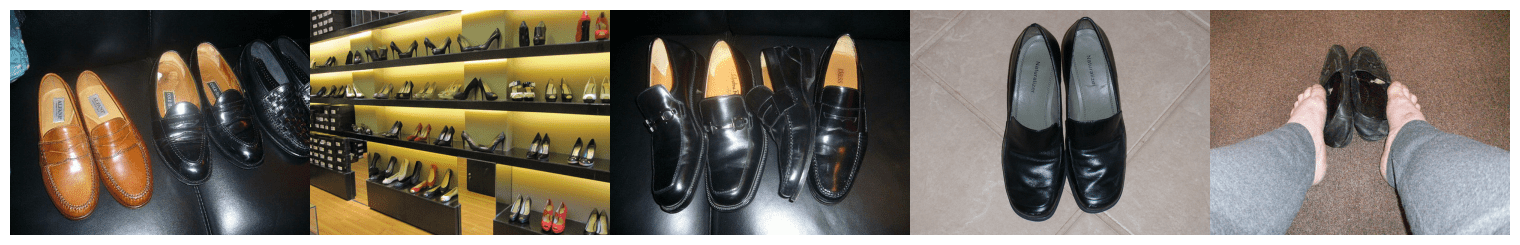}
    \caption{Pairs of footwear placed side by side}
    \label{fig:sub1}
\end{subfigure}
\begin{subfigure}{.49\linewidth}
    \centering
    \includegraphics[width=\linewidth]{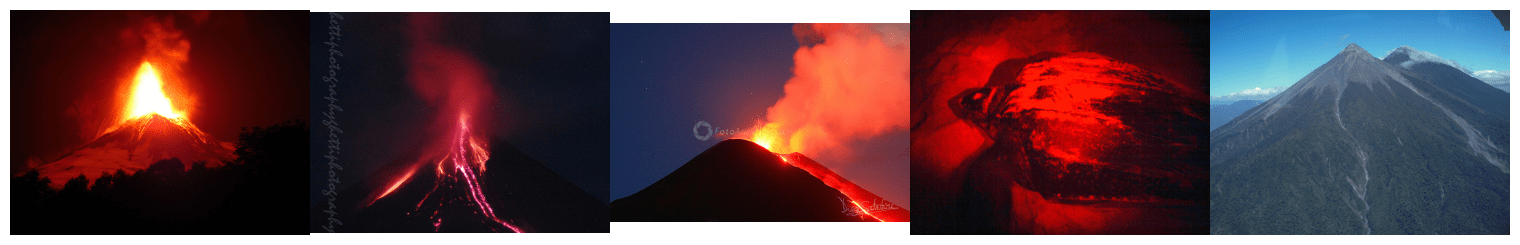}
    \caption{Erupting lava and glowing volcanic ejecta}
    \label{fig:sub1}
\end{subfigure}
\begin{subfigure}{.49\linewidth}
    \centering
    \includegraphics[width=\linewidth]{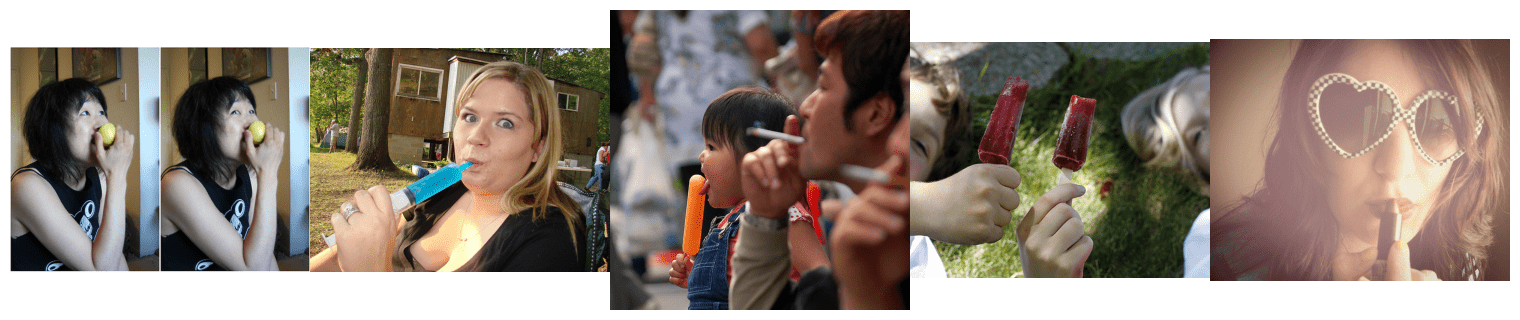}
    \caption{Person holding an object near their mouth}
    \label{fig:sub1}
\end{subfigure}
\begin{subfigure}{.49\linewidth}
    \centering
    \includegraphics[width=\linewidth]{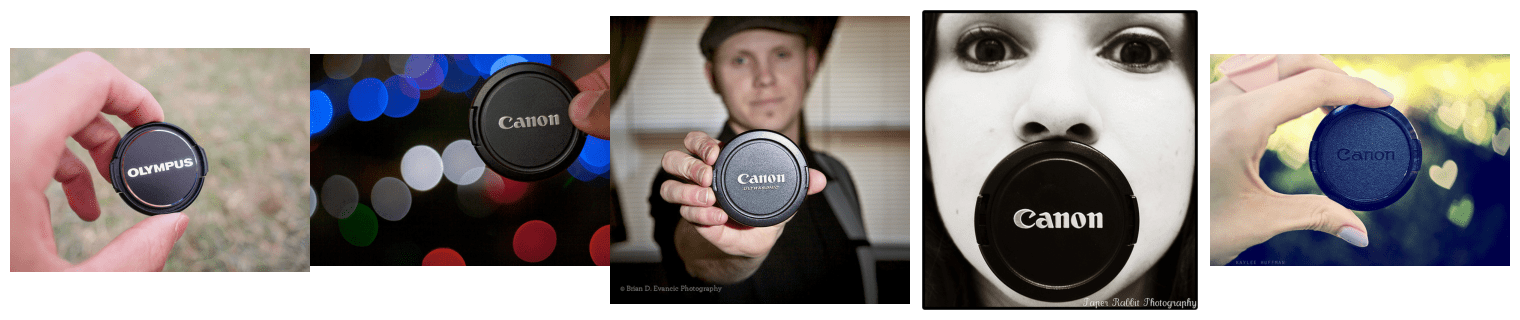}
    \caption{Canon logo text on a circular black surface}
    \label{fig:sub1}
\end{subfigure}
\begin{subfigure}{.49\linewidth}
    \centering
    \includegraphics[width=\linewidth]{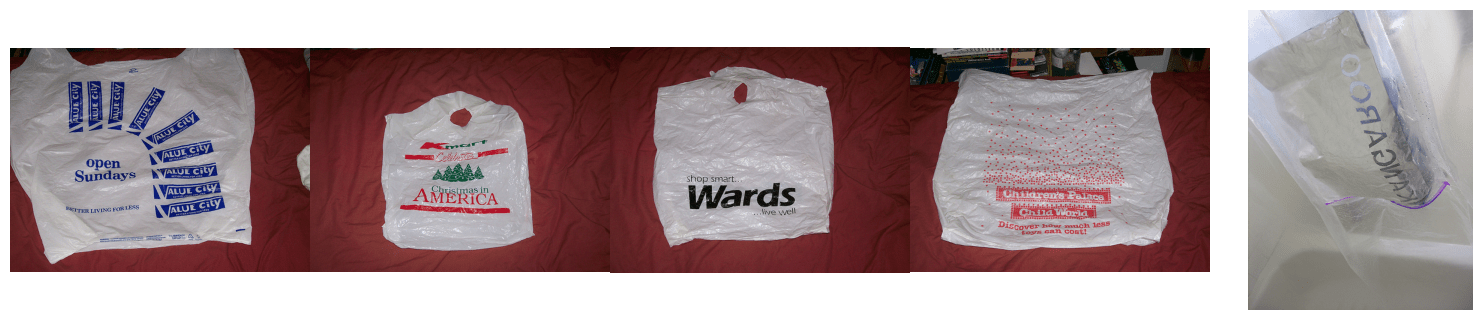}
    \caption{Printed text and logos on white plastic bags}
    \label{fig:sub1}
\end{subfigure}
\begin{subfigure}{.49\linewidth}
    \centering
    \includegraphics[width=\linewidth]{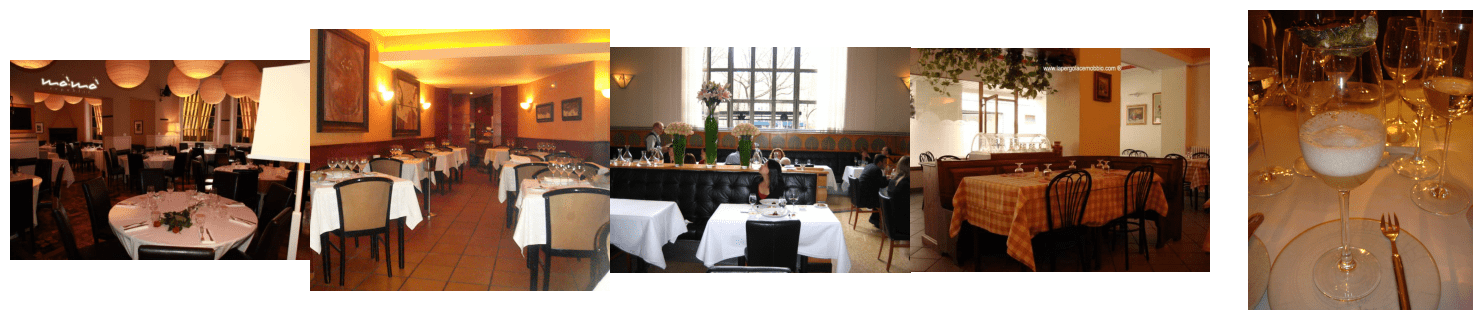}
    \caption{Set dining tables with tablecloths and chairs}
    \label{fig:sub1}
\end{subfigure}
\begin{subfigure}{.49\linewidth}
    \centering
    \includegraphics[width=\linewidth]{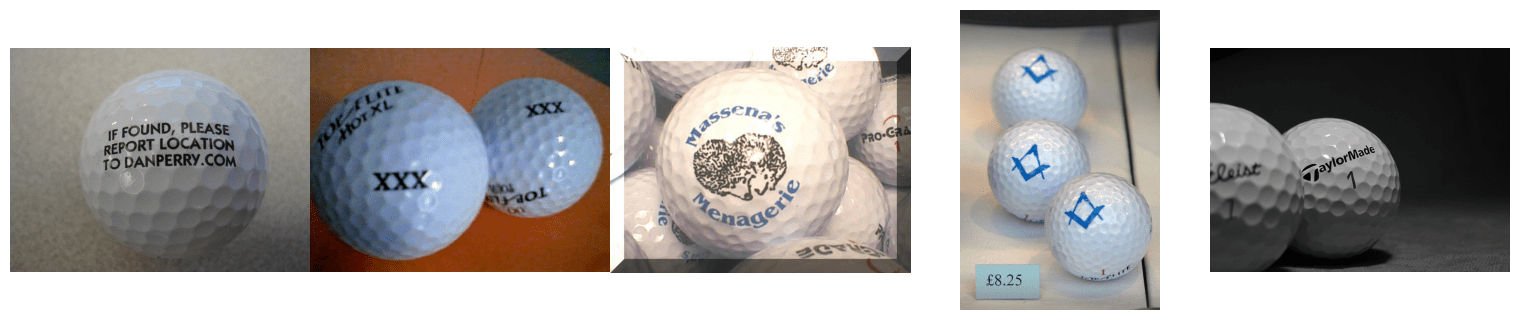}
    \caption{Dimpled white spherical surface with printed text}
    \label{fig:sub1}
\end{subfigure}
\begin{subfigure}{.49\linewidth}
    \centering
    \includegraphics[width=\linewidth]{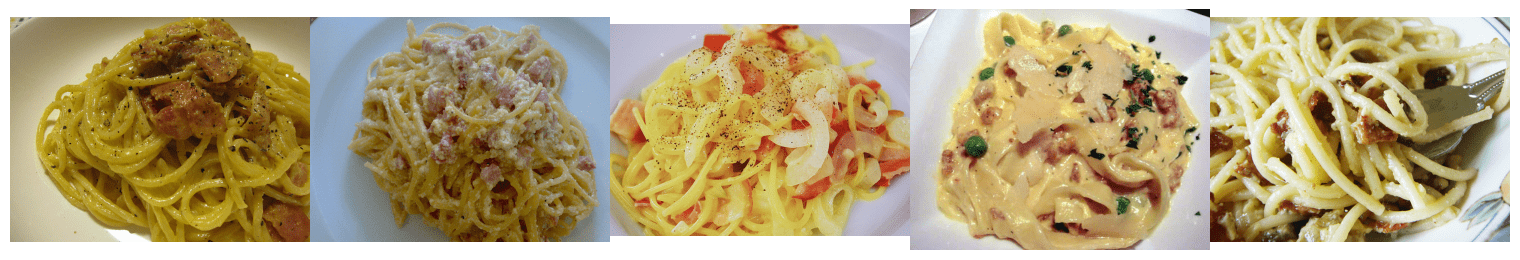}
    \caption{Creamy pasta with visible bacon or pancetta pieces and black pepper}
    \label{fig:sub1}
\end{subfigure}
\begin{subfigure}{.49\linewidth}
    \centering
    \includegraphics[width=\linewidth]{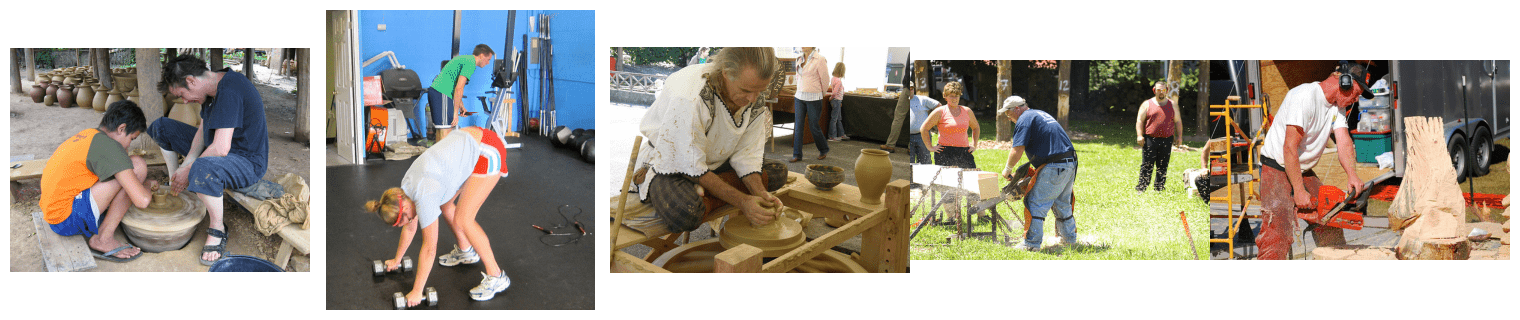}
    \caption{Person bending forward wearing a light-colored shirt}
    \label{fig:sub1}
\end{subfigure}
\caption{Top-5 activating images for ImageNet concepts}
\label{fig:TopActivatingImageNet}
\end{figure*}

\section{More examples of explanations}
\label{sec:localexpl}
In this section, we provide additional examples of local explanations of our M-CBMs at NCC=5. The explanations are shown in Figures~\ref{fig:LocalCUB}, \ref{fig:LocalISIC}, and \ref{fig:LocalImagenet} respectively from the CUB, ISIC2018, and ImageNet test sets.

\begin{figure*}[h!]
\begin{subfigure}{1\linewidth}
    \includegraphics[width=0.63\linewidth]{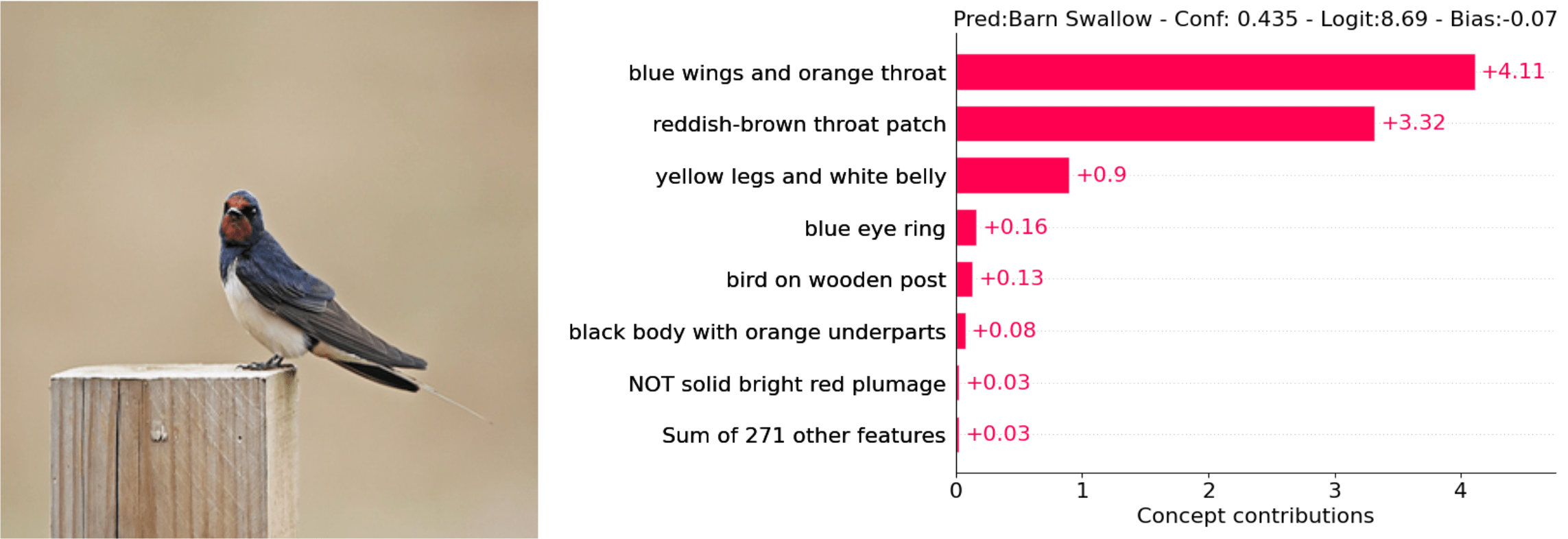}
    \caption{Correctly predicted \textit{Barn Swallow}
    }
    \label{fig:local1}
\end{subfigure}
\hfill
\begin{subfigure}{1\linewidth}
    \includegraphics[width=0.68\linewidth]{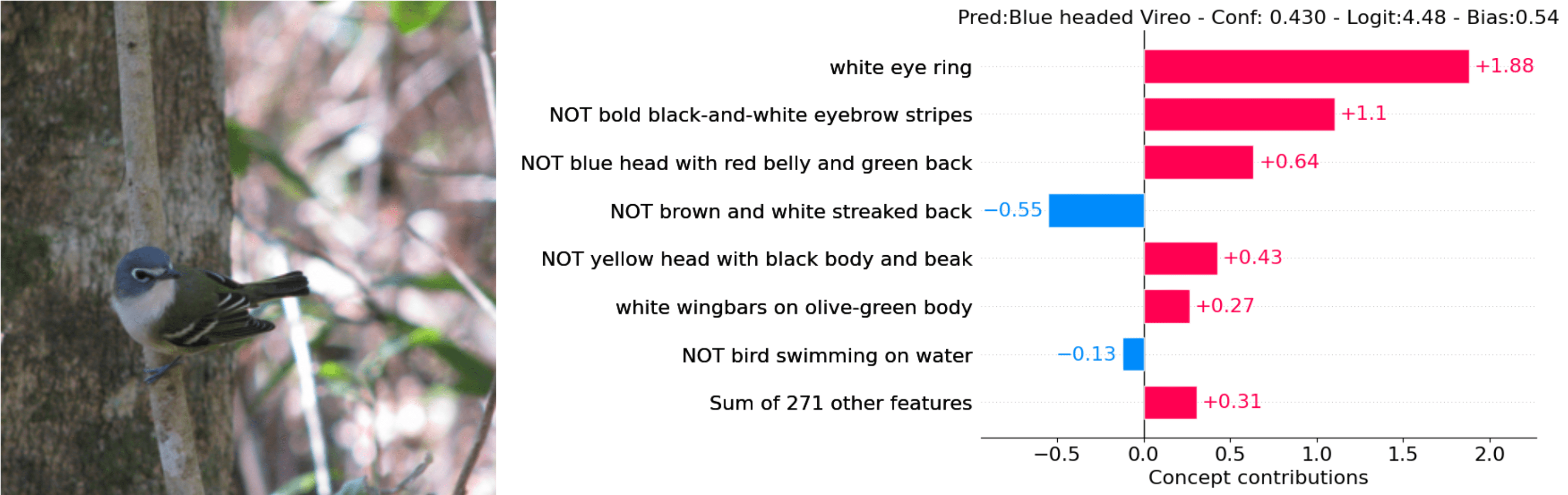}
    \caption{Correctly predicted \textit{Blue headed Vireo} with significant negative reasoning.
    }
    \label{fig:local1}
\end{subfigure}
\hfill
\begin{subfigure}{1\linewidth}
    \includegraphics[width=0.67\linewidth]{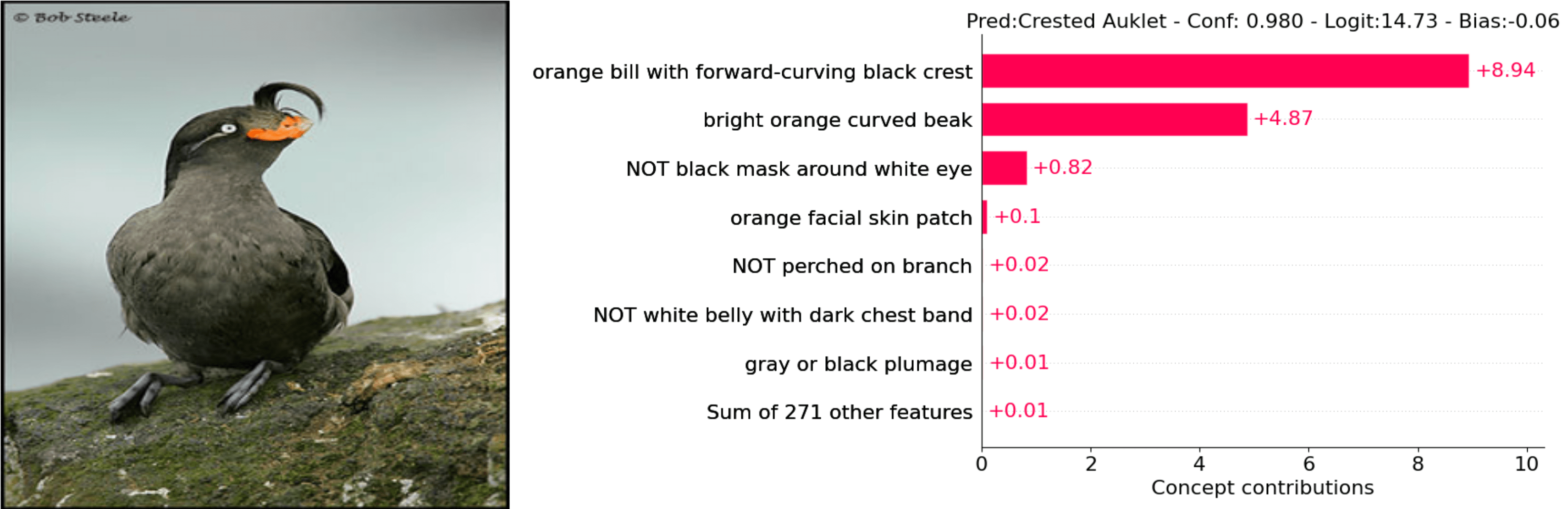}
    \caption{Correctly predicted \textit{Crested Auklet}
    }
    \label{fig:local1}
\end{subfigure}
\hfill
\begin{subfigure}{1\linewidth}
    \includegraphics[width=0.66\linewidth]{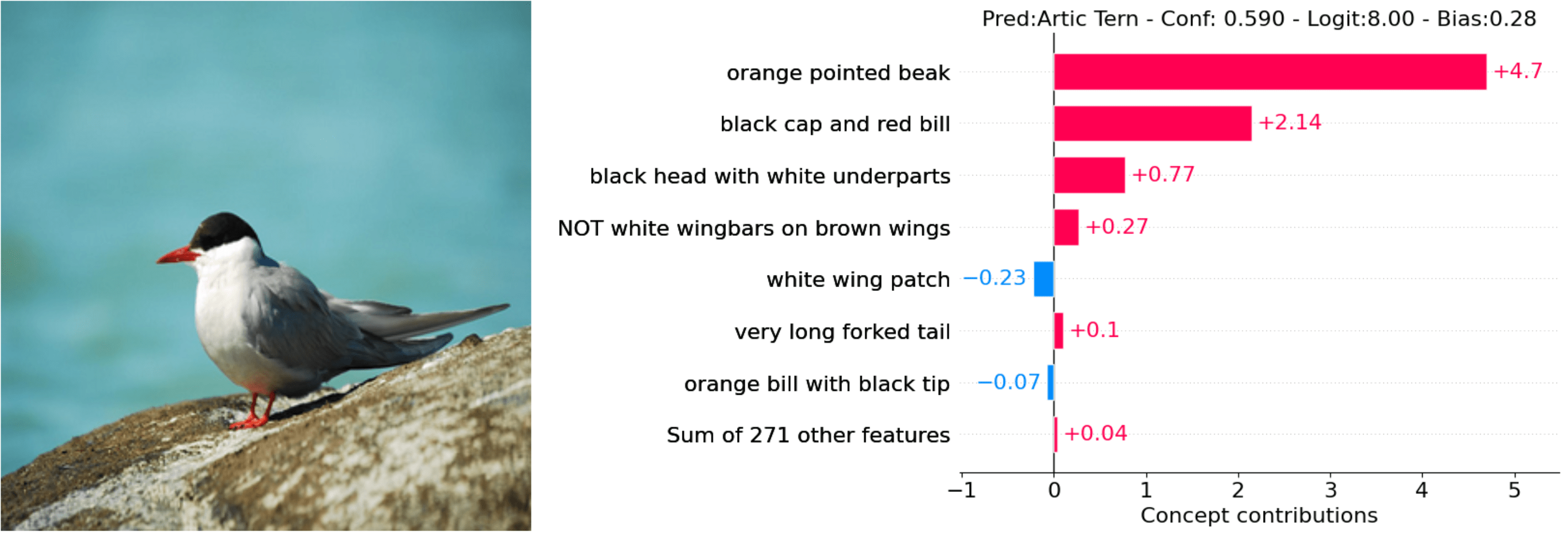}
    \caption{Correctly predicted \textit{Arctic Tern}
    }
    \label{fig:local1}
\end{subfigure}
\hfill
\begin{subfigure}{1\linewidth}
    \includegraphics[width=0.84\linewidth]{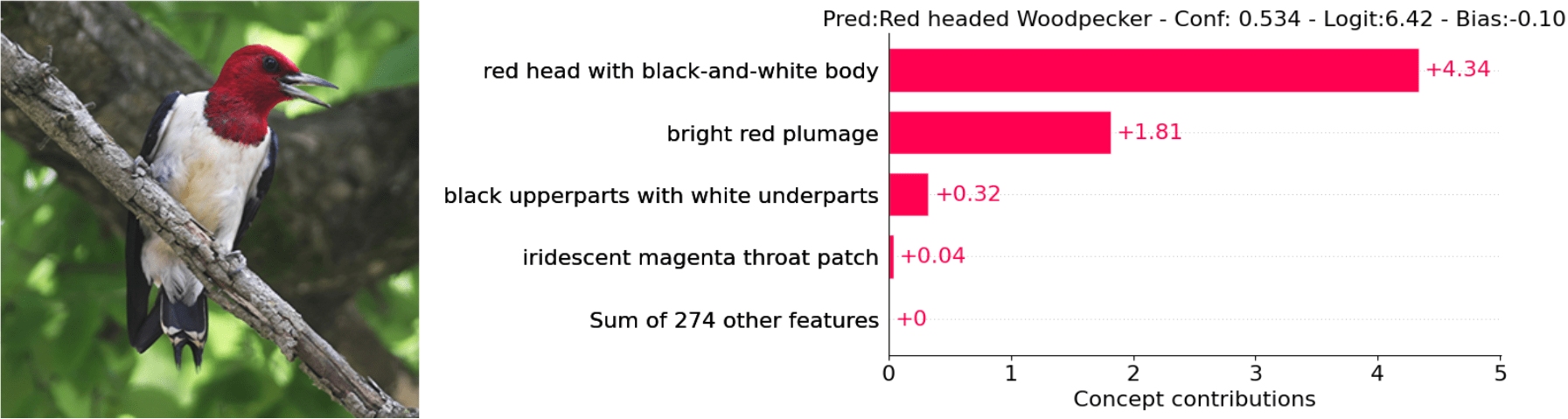}
    \caption{Correctly predicted \textit{Red headed Woodpecker}
    }
    \label{fig:local1}
\end{subfigure}
\hfill
\begin{subfigure}{1\linewidth}
    \includegraphics[width=0.73\linewidth]{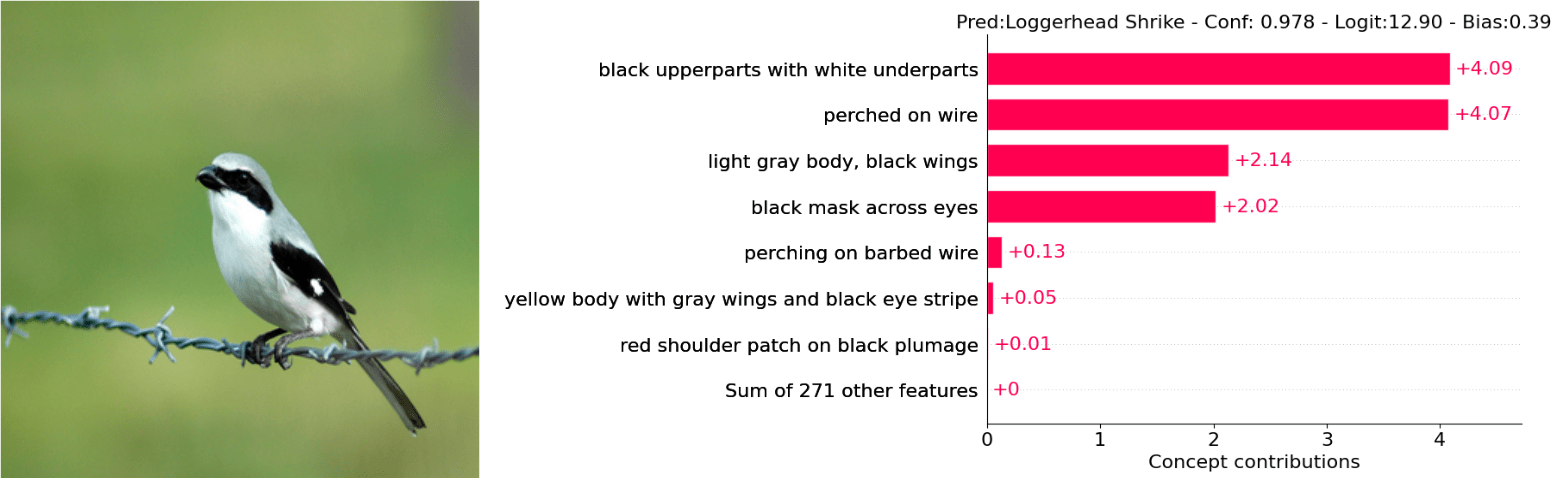}
    \caption{Correctly predicted \textit{Loggerhead Shrike}
    }
    \label{fig:local1}
\end{subfigure}
\caption{Examples of local explanations from CUB test set.}
\label{fig:LocalCUB}
\end{figure*}

\begin{figure*}[h!]
\begin{subfigure}{1\linewidth}
    \includegraphics[width=0.81\linewidth]{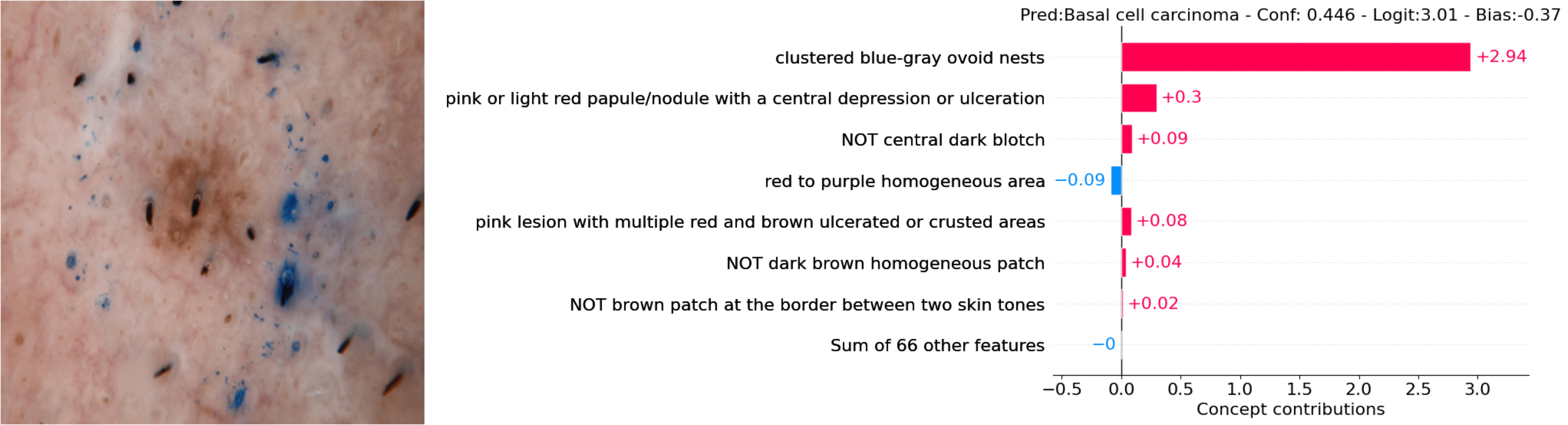}
    \caption{\textit{Melanocytic nevus} wrongly predicted as \textit{Basal Cell Carcinoma}.
    }
    \label{fig:local1}
\end{subfigure}
\hfill
\begin{subfigure}{1\linewidth}
    \includegraphics[width=1\linewidth]{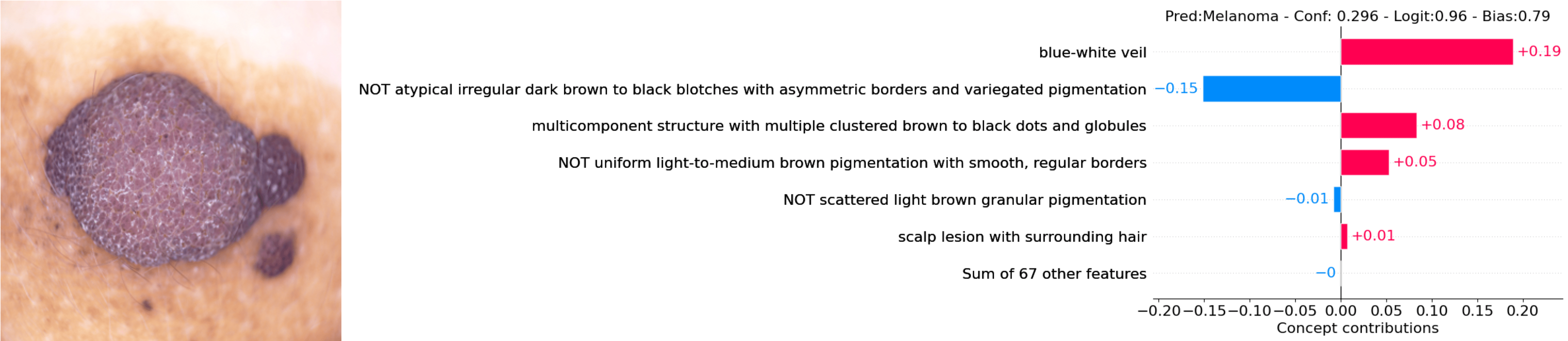}
    \caption{\textit{Melanocytic nevus} wrongly predicted as \textit{Melanoma}.
    }
    \label{fig:local1}
\end{subfigure}
\hfill
\begin{subfigure}{1\linewidth}
    \includegraphics[width=0.93\linewidth]{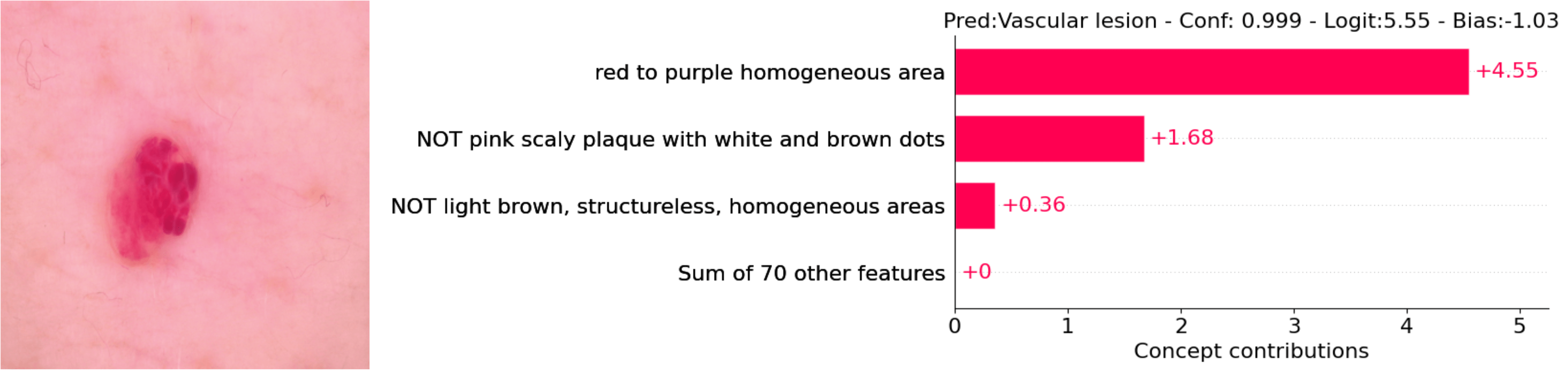}
    \caption{Correctly predicted \textit{Vascular lesion}
    }
    \label{fig:local1}
\end{subfigure}
\hfill
\begin{subfigure}{1\linewidth}
    \includegraphics[width=1\linewidth]{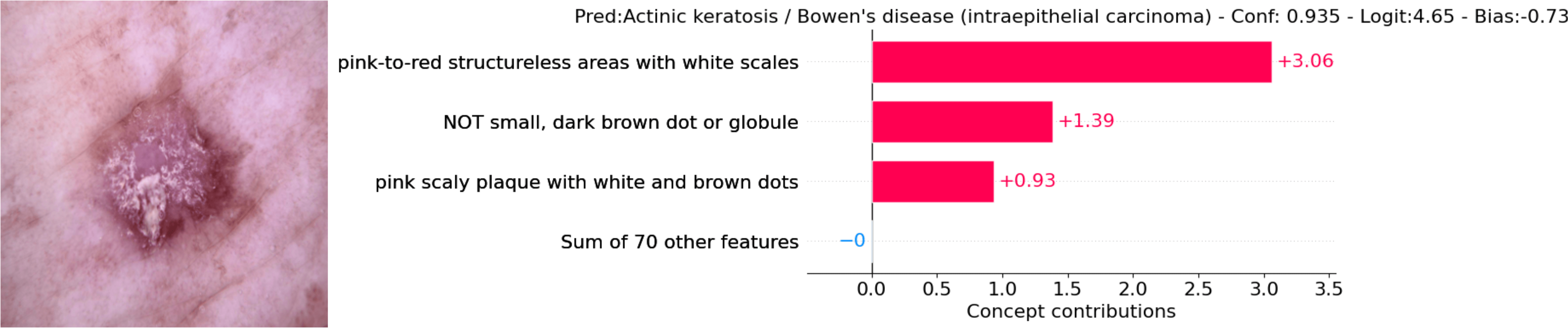}
    \caption{Correctly predicted \textit{Actinic keratosis / Bowen's disease (intraepithelial carcinoma)}
    }
    \label{fig:local1}
\end{subfigure}
\hfill
\begin{subfigure}{1\linewidth}
    \includegraphics[width=0.79\linewidth]{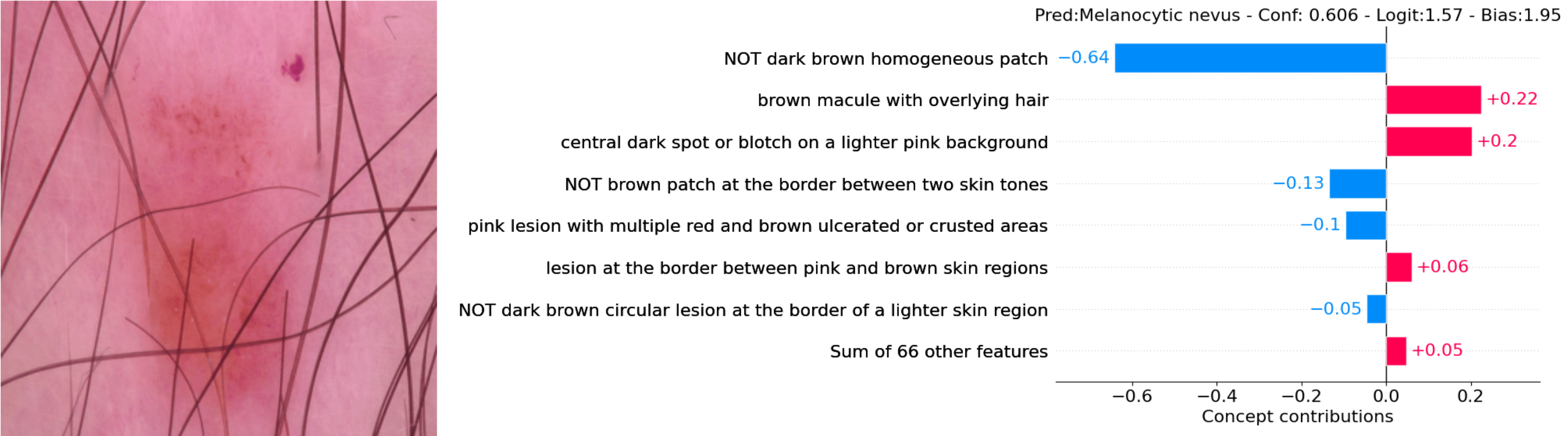}
    \caption{Correctly predicted \textit{Melanocytic nevus}
    }
    \label{fig:local1}
\end{subfigure}
\hfill
\begin{subfigure}{1\linewidth}
    \includegraphics[width=0.94\linewidth]{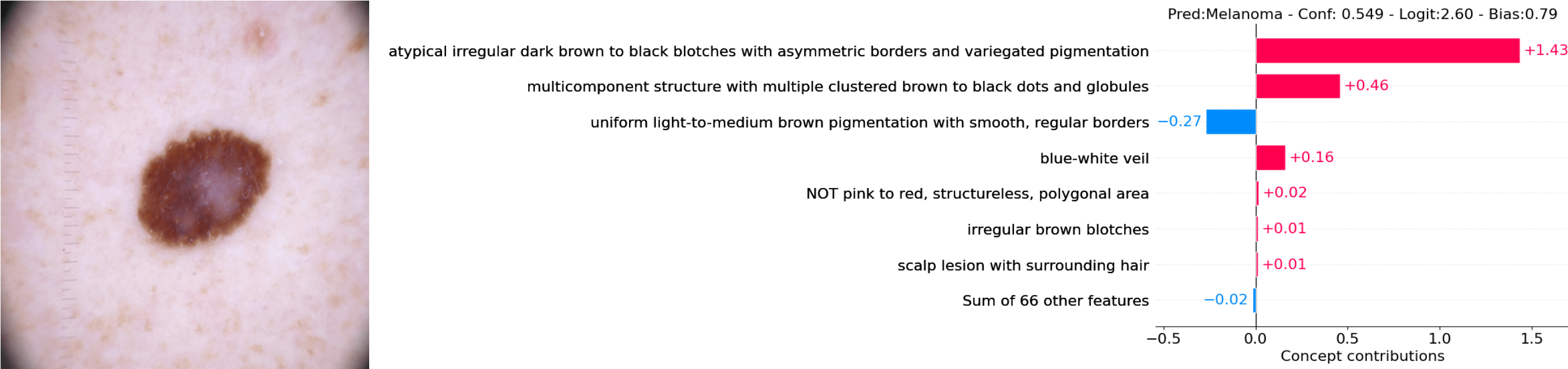}
    \caption{Correctly predicted \textit{Melanoma}
    }
    \label{fig:local1}
\end{subfigure}
\caption{Examples of local explanations from ISIC2018 test set.}
\label{fig:LocalISIC}
\end{figure*}

\begin{figure*}[h!]
\begin{subfigure}{0.79\linewidth}
    \includegraphics[width=1\linewidth]{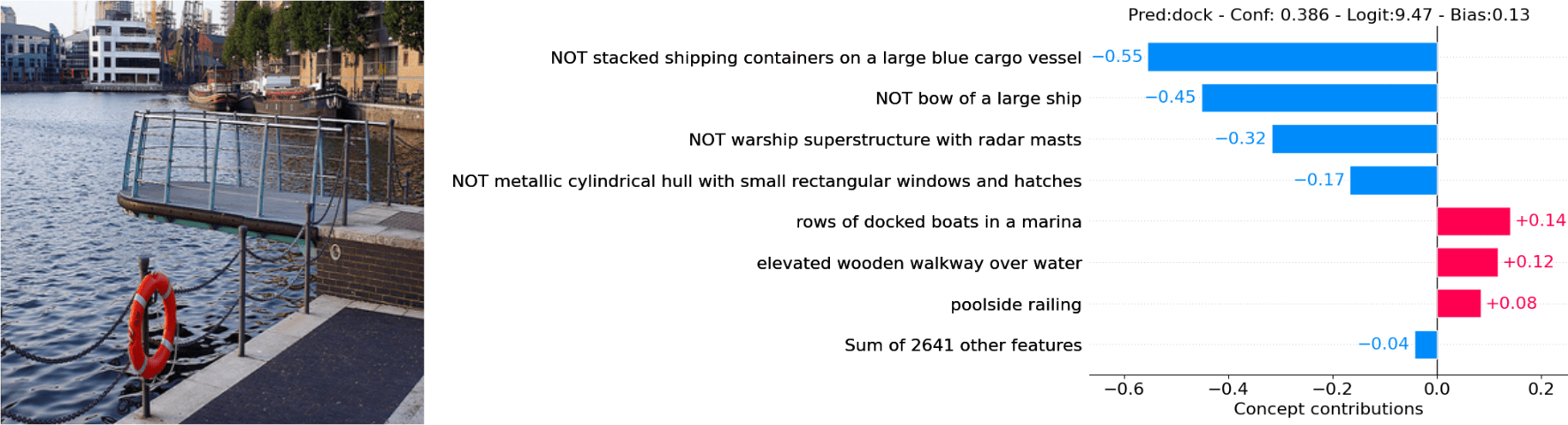}
    \caption{Correctly predicted \textit{Dock}
    }
    \label{fig:local1}
\end{subfigure}
\hfill
\begin{subfigure}{0.68\linewidth}
    \includegraphics[width=1\linewidth]{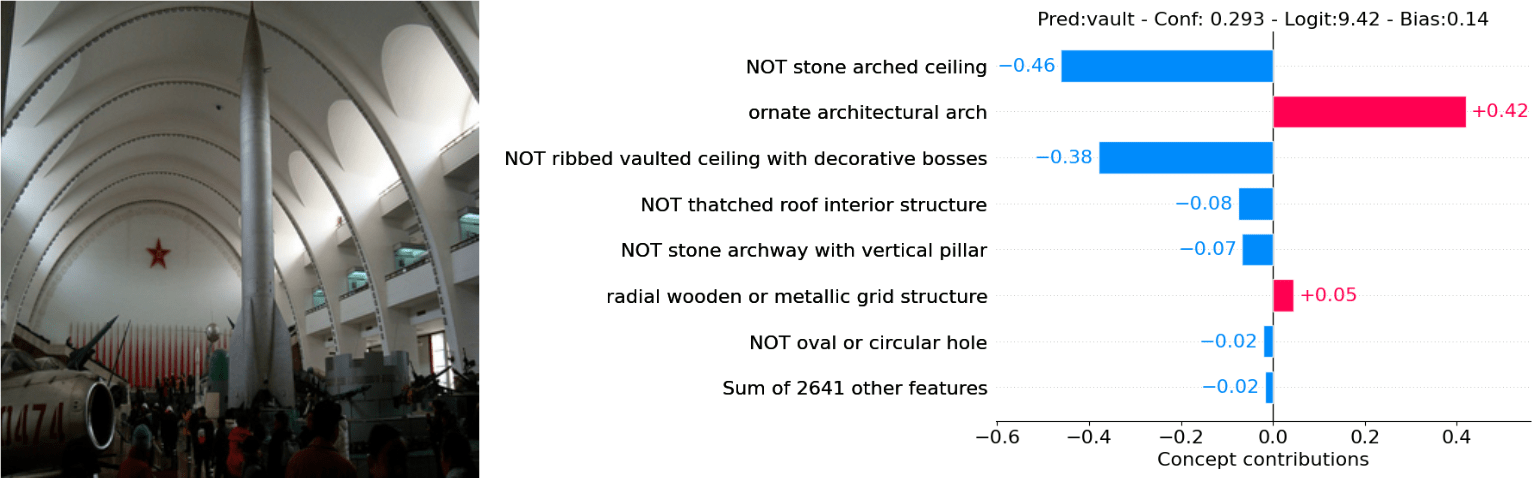}
    \caption{\textit{Projectile} wrongly predicted as \textit{Vault}.
    }
    \label{fig:local1}
\end{subfigure}
\hfill
\begin{subfigure}{0.78\linewidth}
    \includegraphics[width=1\linewidth]{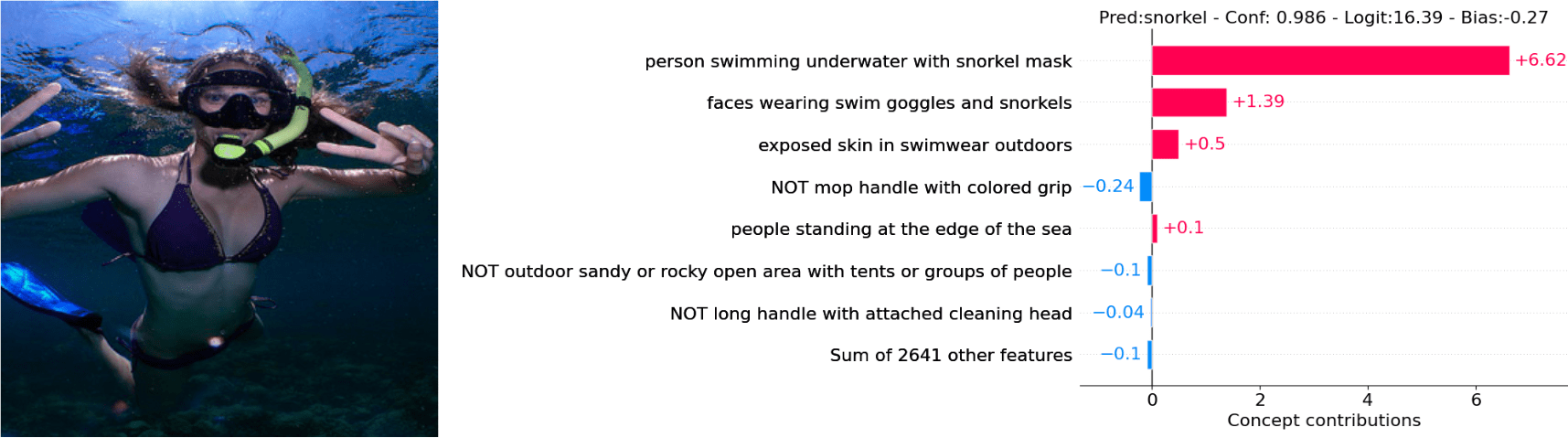}
    \caption{Correctly predicted \textit{Snorkel}
    }
    \label{fig:local1}
\end{subfigure}
\hfill
\begin{subfigure}{0.88\linewidth}
    \includegraphics[width=1\linewidth]{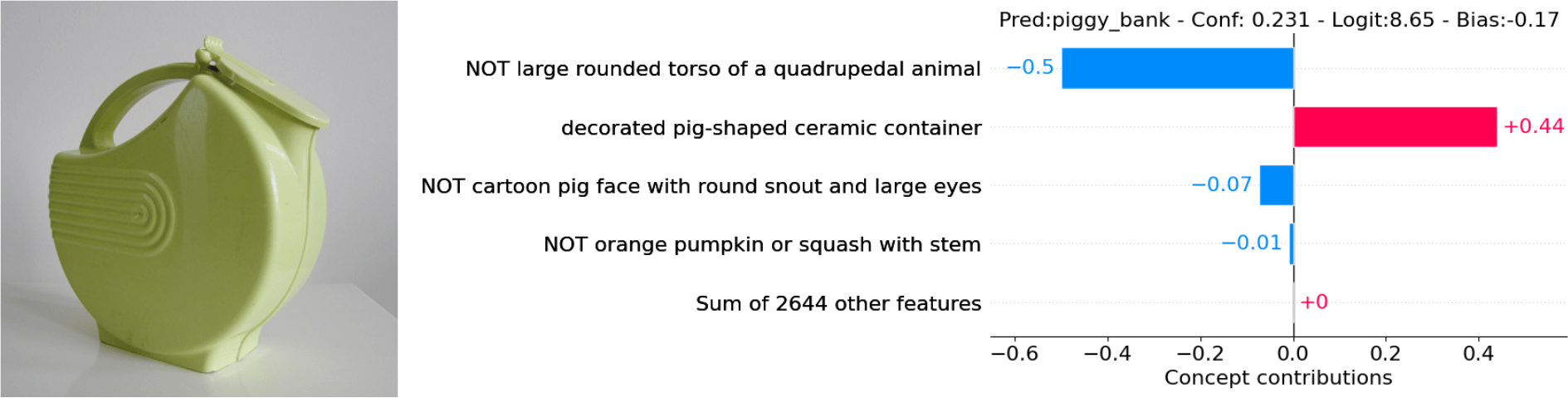}
    \caption{\textit{Pitcher} wrongly predicted as \textit{Piggy bank}.
    }
    \label{fig:local1}
\end{subfigure}
\hfill
\begin{subfigure}{0.83\linewidth}
    \includegraphics[width=1\linewidth]{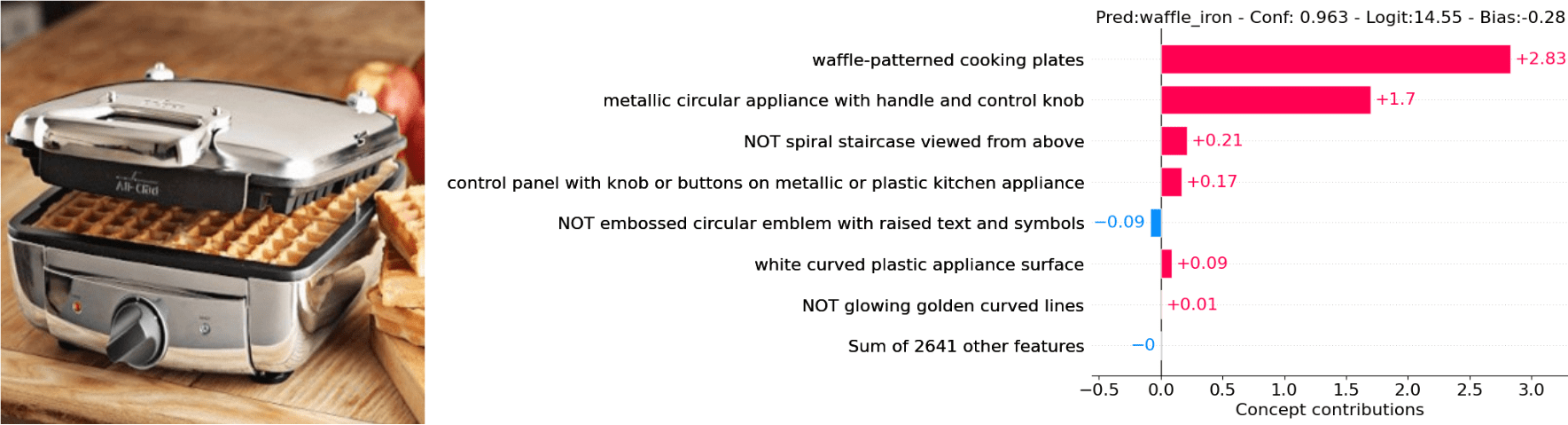}
    \caption{Correctly predicted \textit{Waffle iron}
    }
    \label{fig:local1}
\end{subfigure}
\hfill
\begin{subfigure}{0.88\linewidth}
    \includegraphics[width=1\linewidth]{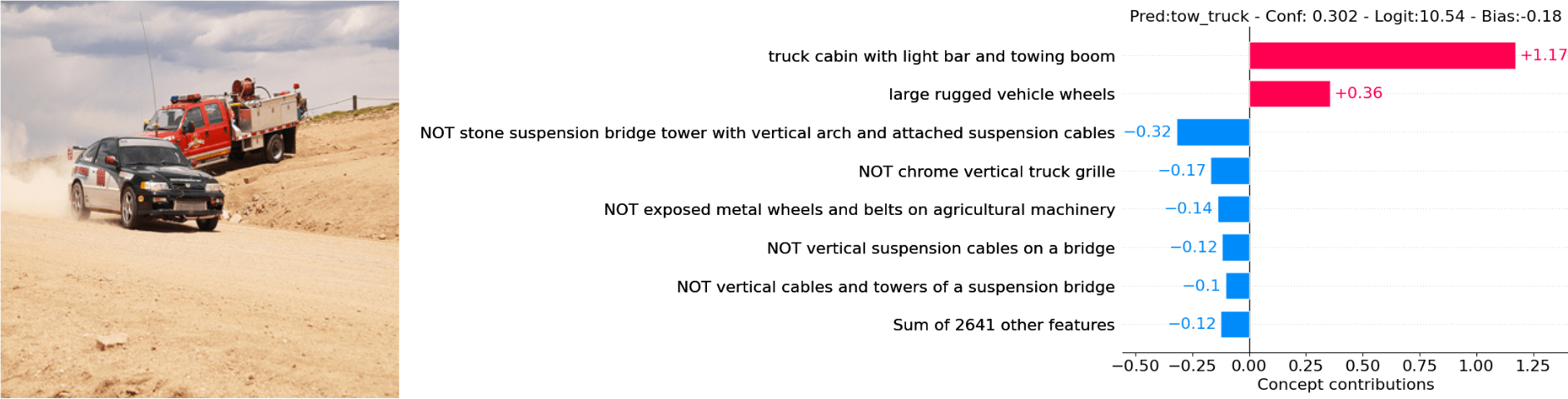}
    \caption{\textit{Racer} wrongly predicted as \textit{Tow truck}.
    }
    \label{fig:local1}
\end{subfigure}
\caption{Examples of local explanations from ImageNet test set.}
\label{fig:LocalImagenet}
\end{figure*}

\clearpage
\section{Additional experiments on dataset annotation}
\label{app:annotation}
\paragraph{On the effect of the reference set for annotation quality.}
Our annotation pipeline provides the MLLM with a reference set of active examples for the target concept before asking it to annotate new samples. The goal is to visually ground the concept, which can be especially important when the textual label alone may not be precise enough (e.g., long tail, rufous crown, or small nevus). To quantitatively assess how much this reference set contributes to annotation quality, we use the human-annotated concept labels available for the CUB dataset and compare the MLLM’s performance with and without providing the reference images.
For each concept, we run two parallel annotations over the same set of randomly selected 25 images:
(i) one where the model is first shown 25 positive examples of the concept, and
(ii) one where the model sees only the concept name and the images to annotate.
Ground-truth concept labels from CUB allow us to compute accuracy in each condition. 
Of the 312 CUB attributes, we exclude 19 concepts that contain fewer than 50 positive examples, leaving 293 concepts for evaluation for a total of 7325 images to be annotated.
Across these concepts, providing the reference grid achieves a measurable improvement as accuracy increases from 65.91\% (without reference) to 69.46\% (with reference images). This shows that the MLLM effectively leverages the visual examples for better annotation.
However, it is important to note that these numbers may not be representative of the absolute annotation quality, given that CUB concept annotations are known to be very noisy. As noted by \citet{koh20}, each attribute was annotated by a single non-expert crowdworker, and many concepts are fine-grained or subjective (e.g., red belly vs rufous belly). While such inconsistencies limit the maximum attainable accuracy, the relative improvement provides evidence in favor of the reference set being helpful in improving the quality of the annotations.

\paragraph{Grid vs single-image for dataset annotation.}
In the main annotation pipeline, we present the MLLM with a $5\times5$ grid of 25 images at once, asking it to output a list of binary predictions corresponding to each image. This batch style strategy is critical for keeping annotation costs manageable, especially for large datasets such as ImageNet. However, it is natural to ask whether the MLLM would produce higher-quality annotations if instead queried with one image at a time.
To investigate this, we repeat the experiment described in the previous section, but modify the prompting so that each image is shown individually. For every concept, the model receives the same visual reference examples, but is presented with each of the 25 test images in separate calls. We compare the resulting accuracy with the grid-based setup.

The single image approach has an accuracy of 74.00\%, compared to 69.46\% for the grid. This indicates that the MLLM indeed benefits from focusing on one image at a time, likely because it reduces cross-image interference and allows the model to allocate all attention to the visual details of a single example. In principle, this also suggests that prompting granularity is an important design dimension in MLLM-based annotation systems. However, the practical implications of this improvement are highly limited. The single-image method increases the number of model calls by a factor of 25, as each of the 25 images in the original batch now requires its own MLLM request. As an example, for ImageNet this would increase annotation time from days to months and the API cost from USD 370 to USD 9250. As a result, while the single-image approach achieves better annotations, it is currently impractical for most applications.
Nevertheless, these findings are encouraging, as they indicate that as MLLMs become more cost-efficient, single-image prompting would be a straightforward way to improve results further.

\paragraph{Robustness of MLLM annotation under reference set poisoning.}
Providing a set of reference images for a target concept generally improves annotation quality, but a potential risk is that if this reference set is noisy (i.e., containing some images that represent different features than the concept), the MLLM might inadvertently internalize these spurious visual features and associate them with the concept. To test whether our annotation pipeline is susceptible to this effect, we design a controlled poisoning experiment in which the reference images for a concept are deliberately contaminated with images from a different concept. We then also contaminate the batch of images to be annotated with images from this secondary concept and measure whether these poisoned images are predicted as positive more often than typical negative examples. This setup provides a direct quantitative assessment of whether the MLLM forms spurious visual associations from the reference examples or remains faithful to the intended concept semantics. The experiment is carried out on the CUB dataset, where ground-truth concept annotations allow measurement of false positive behavior.

For each target concept $A$, we construct two types of negative examples: (i) \textit{standard negatives} (images that do not contain $A$) and (ii) \textit{poisoned negatives} (images that do not contain $A$ but do contain a different concept $B$). For every concept $A$, we randomly select a concept $B$ and contaminate its reference set by replacing 5 out of the 25 reference images with poisoned negatives. Then, during annotation, we also replace 1 standard negative image per annotation batch with a poisoned negative. The remaining pipeline is kept unchanged. We repeat this procedure for all 293 concepts with sufficiently many positive samples and annotate 25 images per concept, resulting in 293 MLLM annotation calls and a total of 7325 images. 

After annotation, we compute the False Positive Rate (FPR) separately for poisoned negatives and for standard negatives, quantify the difference using Cohen’s $h$ effect size, and test whether the two proportions differ significantly using a two-proportion $z$-test. Results are reported in Table \ref{tab:poison_robustness}. The results reveal only a small difference between the FPR of poisoned and normal negatives: $0.232$ vs $0.198$. The corresponding effect size, Cohen’s $h = 0.084$, is well below the conventional threshold for a “small” effect ($h = 0.2$), indicating that the magnitude of this difference is tiny and negligible in practical terms. Consistently, the two-proportion $z$-test does not find a statistically significant difference between the two FPRs ($p$-value $= 0.157$), meaning that the observed effect is statistically indistinguishable from random variation.
Taken together, these findings suggest that the MLLM does not meaningfully internalize the spurious visual features introduced through reference-set poisoning. Even when 20\% of the reference examples are deliberately contaminated with a conflicting concept, the model's annotation behavior remains largely stable, indicating that the annotation is primarily guided by the semantic description of the target concept rather than incidental correlations present in the reference examples.

\begin{table}[h]
\caption{Results of MLLM-based concept annotation under poisoning of the reference set on the CUB dataset using human-annotated concepts as ground truth. 
We report the false positive rates (FPR) for poisoned vs normal negative examples, 
the two-proportion $z$-test, and the effect size (Cohen's $h$).}
\label{tab:poison_robustness}
\centering
\scalebox{1}{
\begin{tabular}{l|c}
\toprule \midrule
\multicolumn{1}{l|}{Metric} & \multicolumn{1}{c}{Value} \\
\midrule \midrule
FPR (poisoned)                      & 0.232 \\
FPR (normal)                        & 0.198 \\
Two-proportion $z$-test ($p$-value) & 0.157 \\
Cohen's $h$ effect size             & 0.084 \\
\midrule
\bottomrule
\end{tabular}
}
\end{table}

\begin{table}
\caption{Accuracy results at NEC=5 and NEC=avg. }
\centering
\scalebox{0.9}{
\begin{tabular}{l|cc|cc|cc|cc}
\toprule \midrule
\multicolumn{1}{l|}{Dataset} & \multicolumn{2}{c|}{CUB} & \multicolumn{4}{c|}{ISIC2018} & \multicolumn{2}{c}{ImageNet} \\
\midrule
\multicolumn{1}{l|}{Metrics}   & \multicolumn{2}{c|}{Accuracy} & \multicolumn{2}{c|}{Accuracy} & \multicolumn{2}{c|}{Balanced Accuracy} & \multicolumn{2}{c}{Accuracy} \\
\midrule
Sparsity  & NEC=5 & NEC=avg & NEC=5 & NEC=avg & NEC=5 & NEC=avg & NEC=5 & NEC=avg \\
\midrule
M-CBM                & 72.54\% & 73.70\% & 69.38\% & 73.47\% & 68.97\% & 70.45\% & 71.23\% & 73.00\% \\
\midrule
\bottomrule
\end{tabular}
}
\label{tab:MainAccResults_nec}
\end{table}

\section{Accuracies under NEC}
In this section, we report M-CBM accuracies when controlling for NEC instead of NCC. For context, NEC measures the average number of non-zero classifier weights per class (e.g., NEC=5 means that, on average, 5 concepts per class have non-zero weights), while NCC is defined in detail in Section~\ref{sec:ncc} and intuitively measures how many concepts, on average, are needed to explain predictions.
NCC can be viewed as a generalization of NEC, as when the coverage level is set to $\tau = 1$, NCC reduces to NEC (see Section~\ref{proofnecncc} for proof). Because we use a relatively high coverage level $\tau = 0.95$, the resulting accuracies are not dramatically different (see Table \ref{tab:MainAccResults_nec}). Intuitively, NCC differs from NEC in that it measures sparsity at the decision-level using concept contributions (weights multiplied by activations), rather than just weights. This relaxes the hard cap that NEC implicitly imposes on the effective concept vocabulary. Indeed, with NCC$_\tau=5$, the model may assign non-zero weights to many concepts, as long as only about 5 concepts are required, on average, to explain each prediction at coverage level $\tau$. For instance, M-CBM at NCC$_{0.95}=5$ has 271 concepts with non-zero weights on CUB, 47 on ISIC2018, and 2572 on ImageNet, while at NEC=5 it has 253 on CUB, 29 on ISIC2018, and 2343 on ImageNet. In Figure \ref{fig:TopActivatingImages_NECvsNCC}, we provide some examples of concepts that are used by the NCC=5 model but not by the NEC=5 model.

\begin{figure*}[ht!]
\centering
\begin{subfigure}{.49\linewidth}
    \centering
    \includegraphics[width=\linewidth]{figures/visualizing_cbl_neurons/isic2018/irregular_brown_blotches.png}
    \caption{Irregular brown blotches}
    \label{fig:sub1}
\end{subfigure}
\begin{subfigure}{.49\linewidth}
    \centering
    \includegraphics[width=\linewidth]{figures/visualizing_cbl_neurons/isic2018/clustered_pink-to-red_vascular_network.png}
    \caption{Clustered pink-to-red vascular network}
    \label{fig:sub1}
\end{subfigure}
\begin{subfigure}{.49\linewidth}
    \centering
    \includegraphics[width=\linewidth]{figures/visualizing_cbl_neurons/isic2018/scalp_lesion_with_surrounding_hair.png}
    \caption{Scalp lesion with surrounding hair}
    \label{fig:sub1}
\end{subfigure}
\begin{subfigure}{.49\linewidth}
    \centering
    \includegraphics[width=\linewidth]{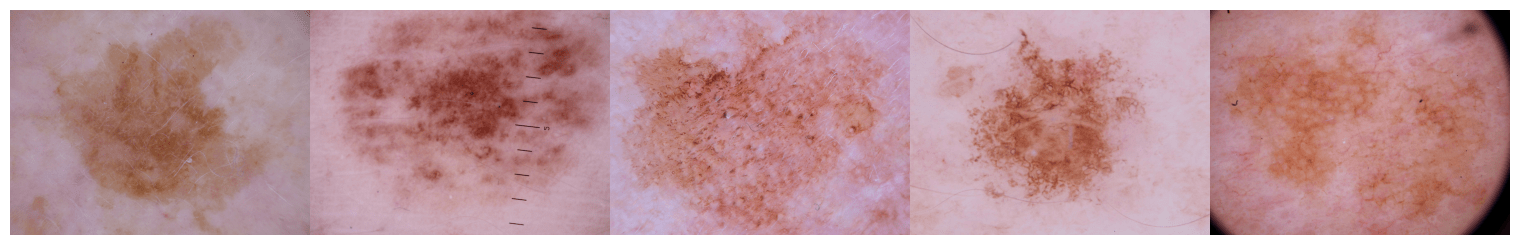}
    \caption{Scattered light brown granular pigmentation}
    \label{fig:sub1}
\end{subfigure}
\begin{subfigure}{.49\linewidth}
    \centering
    \includegraphics[width=\linewidth]{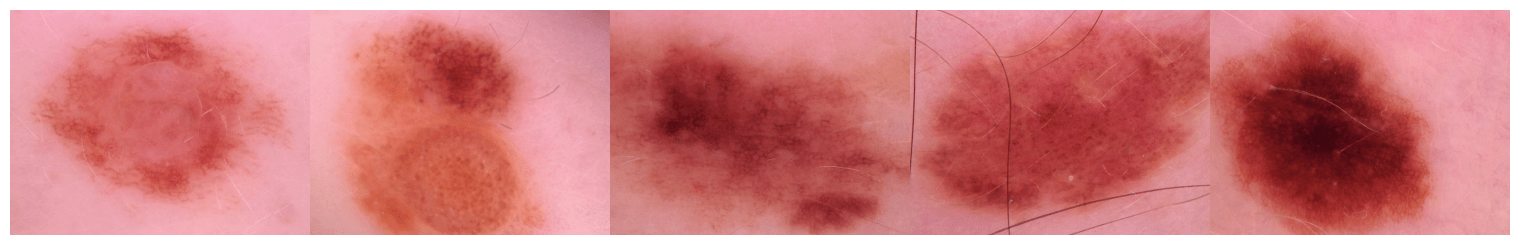}
    \caption{Diffuse reddish pigmentation with structureless pattern}
    \label{fig:sub1}
\end{subfigure}
\begin{subfigure}{.49\linewidth}
    \centering
    \includegraphics[width=\linewidth]{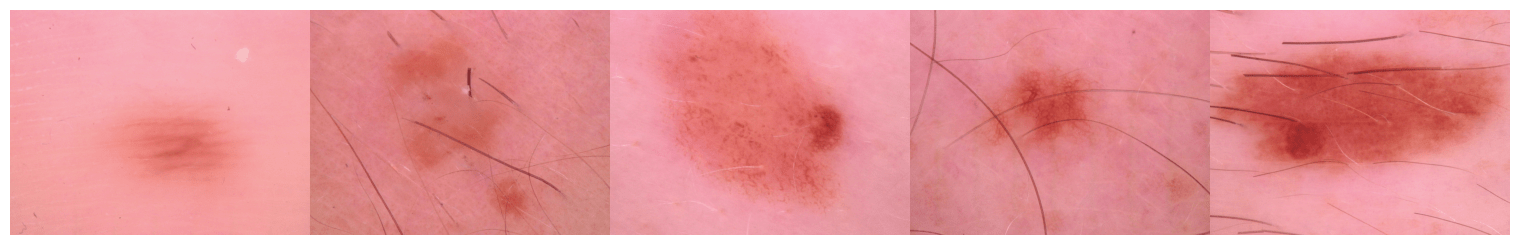}
    \caption{Pink skin with fine hair and erythematous (reddish) background}
    \label{fig:sub1}
\end{subfigure}
\begin{subfigure}{.49\linewidth}
    \centering
    \includegraphics[width=\linewidth]{figures/visualizing_cbl_neurons/cub/birds_on_water.png}
    \caption{Birds on water}
    \label{fig:sub1}
\end{subfigure}
\begin{subfigure}{.49\linewidth}
    \centering
    \includegraphics[width=\linewidth]{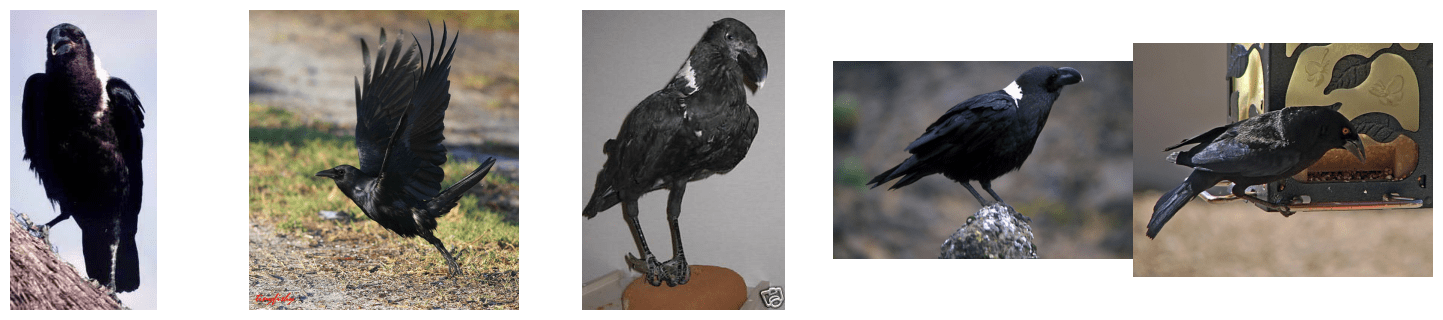}
    \caption{Black plumage}
    \label{fig:sub1}
\end{subfigure}
\begin{subfigure}{.49\linewidth}
    \centering
    \includegraphics[width=\linewidth]{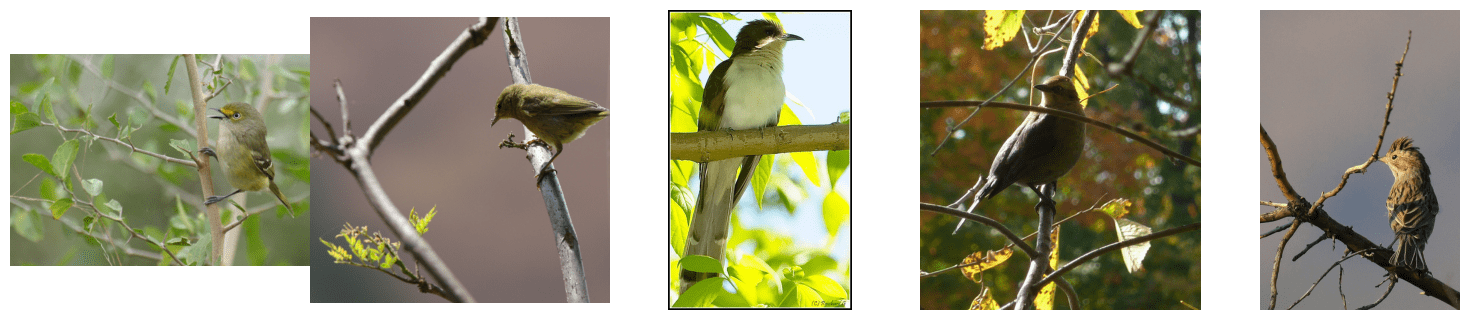}
    \caption{Perched on branch}
    \label{fig:sub1}
\end{subfigure}
\begin{subfigure}{.49\linewidth}
    \centering
    \includegraphics[width=\linewidth]{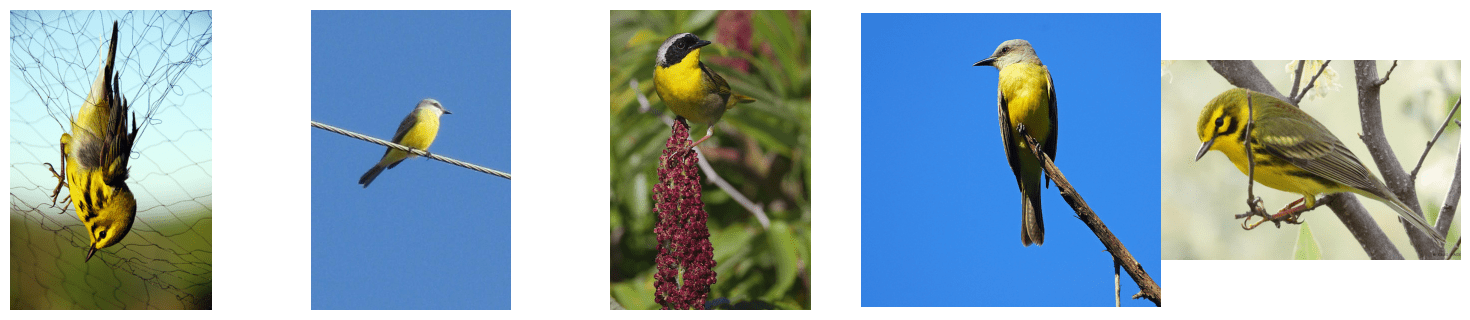}
    \caption{Yellow belly or breast}
    \label{fig:sub1}
\end{subfigure}
\begin{subfigure}{.49\linewidth}
    \centering
    \includegraphics[width=\linewidth]{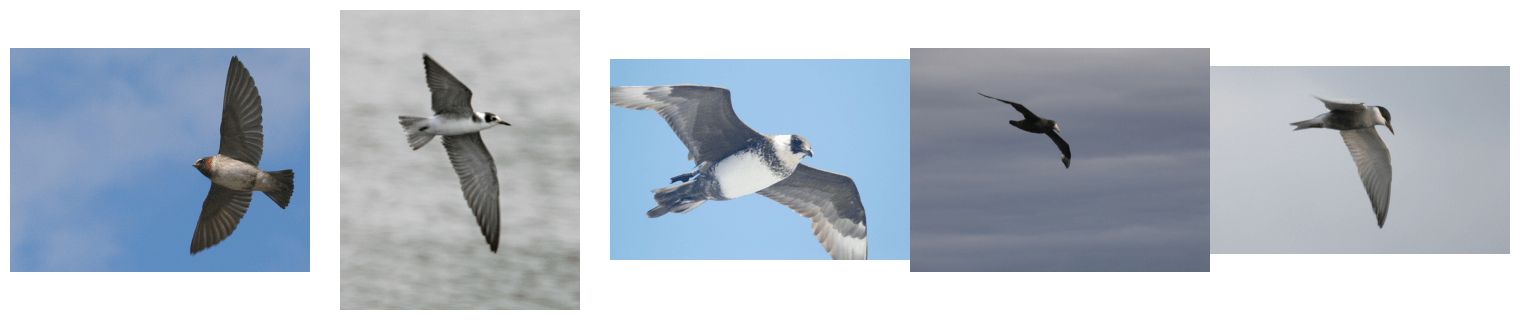}
    \caption{Long pointed wings in flight}
    \label{fig:sub1}
\end{subfigure}
\begin{subfigure}{.49\linewidth}
    \centering
    \includegraphics[width=\linewidth]{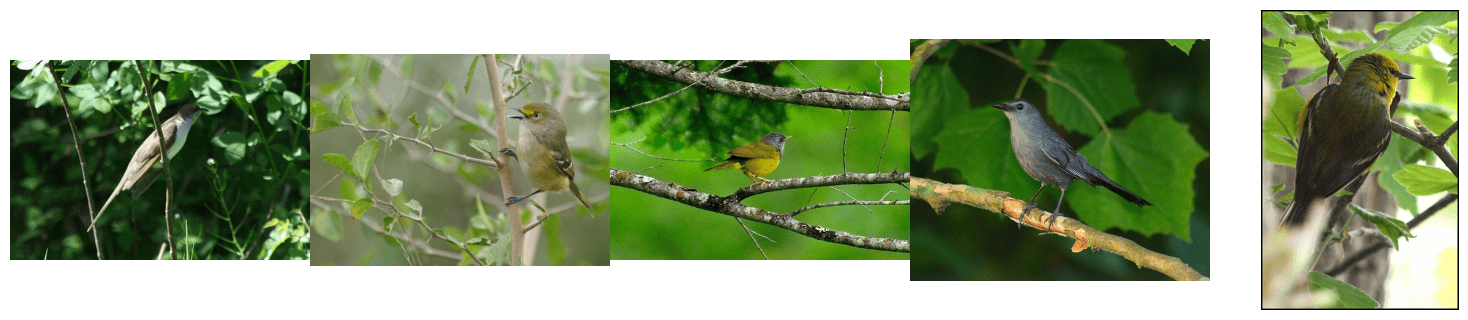}
    \caption{Green leafy background}
    \label{fig:sub1}
\end{subfigure}
\begin{subfigure}{.49\linewidth}
    \centering
    \includegraphics[width=\linewidth]{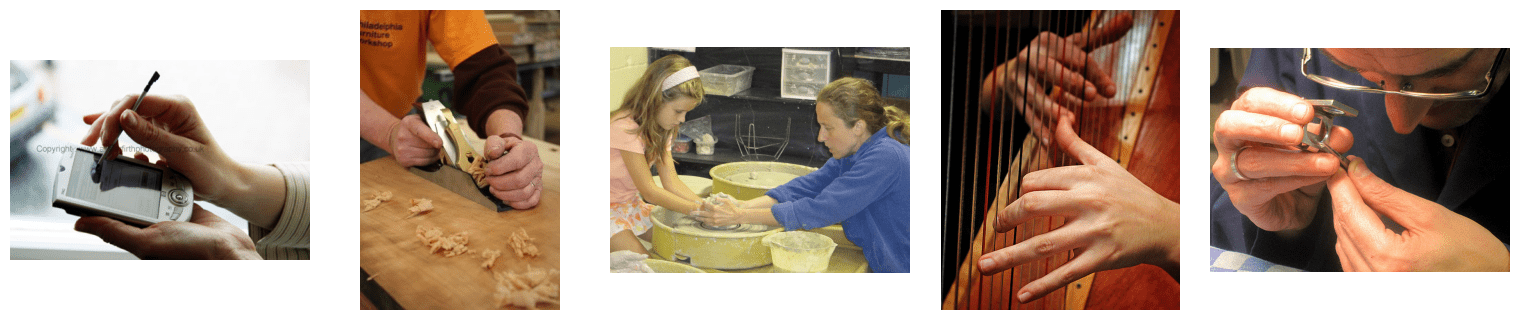}
    \caption{Hands manipulating objects}
    \label{fig:sub1}
\end{subfigure}
\begin{subfigure}{.49\linewidth}
    \centering
    \includegraphics[width=\linewidth]{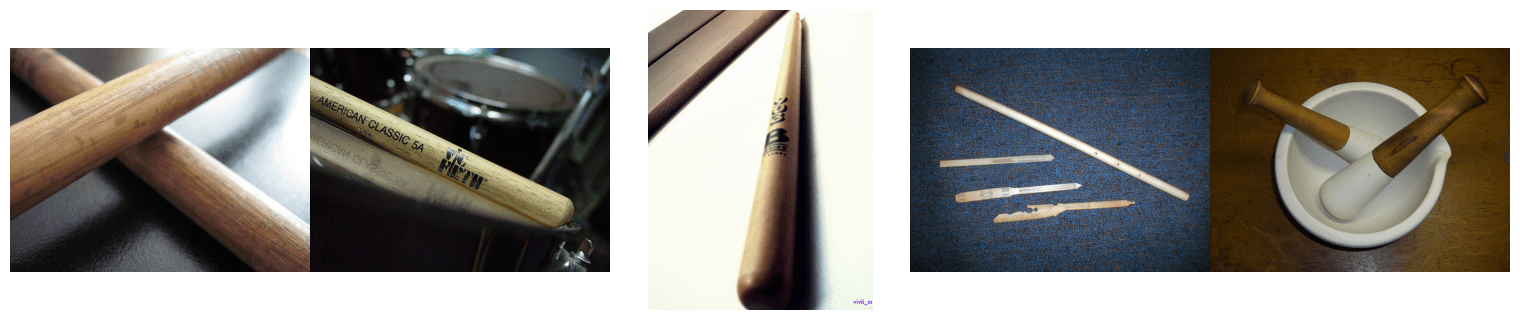}
    \caption{Wooden cylindrical stick}
    \label{fig:sub1}
\end{subfigure}
\begin{subfigure}{.49\linewidth}
    \centering
    \includegraphics[width=\linewidth]{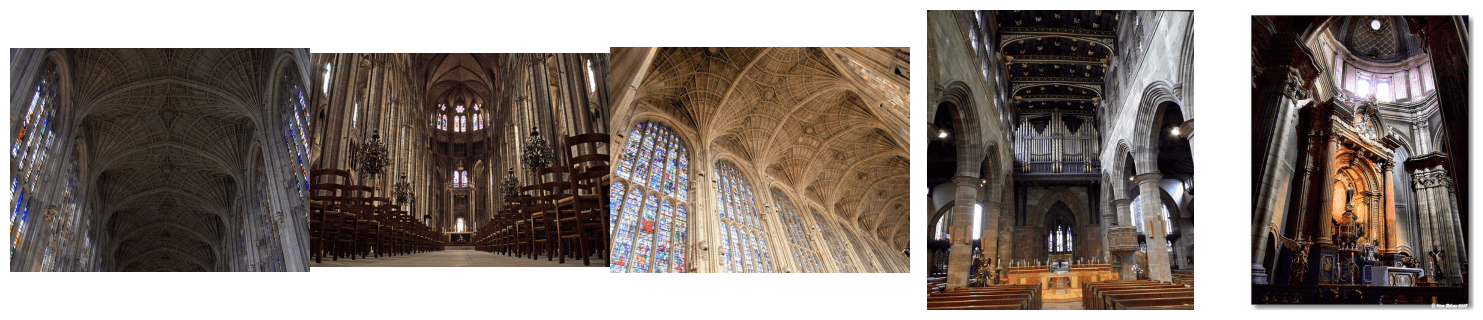}
    \caption{Gothic ribbed vaulted ceiling}
    \label{fig:sub1}
\end{subfigure}
\begin{subfigure}{.49\linewidth}
    \centering
    \includegraphics[width=\linewidth]{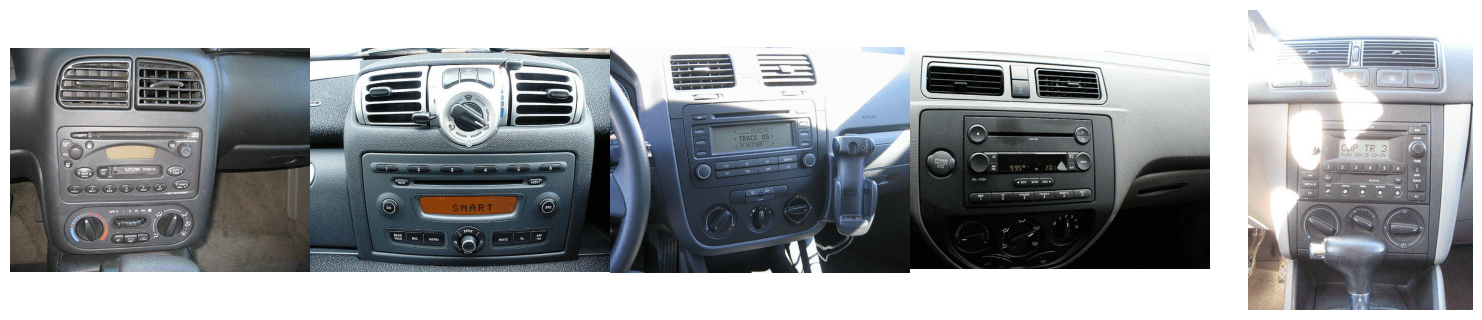}
    \caption{Car dashboard audio control panel}
    \label{fig:sub1}
\end{subfigure}
\begin{subfigure}{.49\linewidth}
    \centering
    \includegraphics[width=\linewidth]{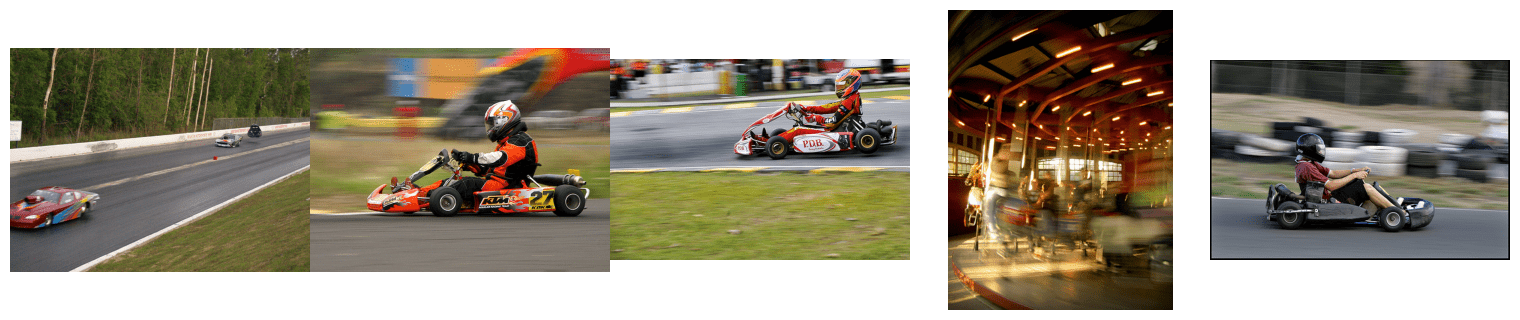}
    \caption{Motion blur of fast-moving vehicles}
    \label{fig:sub1}
\end{subfigure}
\begin{subfigure}{.49\linewidth}
    \centering
    \includegraphics[width=\linewidth]{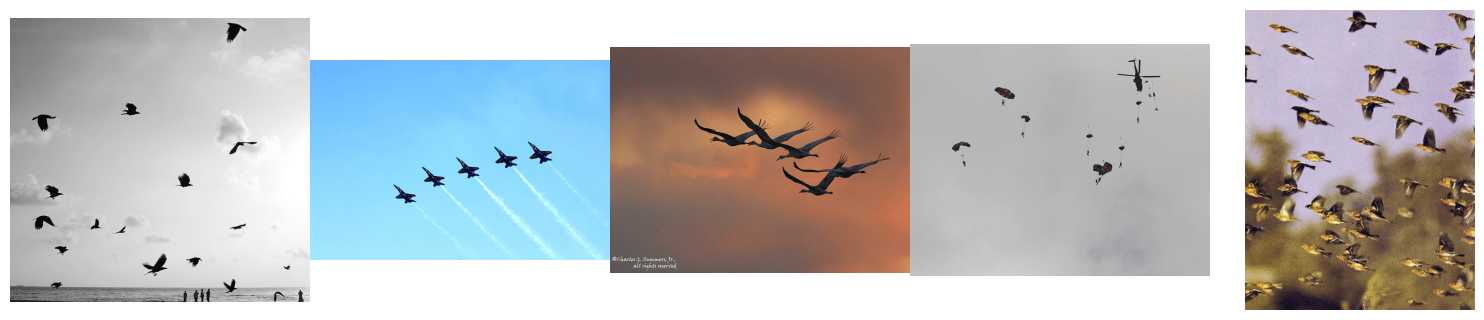}
    \caption{Formation of multiple flying objects against the sky}
    \label{fig:sub1}
\end{subfigure}
\caption{Top-5 activating images for CBL neurons that have all zero weights at NEC=5, but not at NCC=5 for all datasets.}
\label{fig:TopActivatingImages_NECvsNCC}
\end{figure*}

\subsection{Relationship between NEC and NCC}
\label{proofnecncc}

We now formalize the connection between NEC and NCC at $\tau = 1$.

\begin{proposition}
Let
\[
\mathrm{NEC}(\mW_F)
=
\frac{1}{C} \sum_{r=1}^C \sum_{k=1}^K \mathbf{1}\big\{[\mW_F]_{k,r} \neq 0\big\},
\]
and let $\mathrm{NCC}_\tau$ be defined as in Section~\ref{sec:ncc}. Assume that whenever $[\mW_F]_{k,r} \neq 0$, the corresponding concept logit $[g(\va^{(i)})]_k$ is non-zero for all images $i$. This assumption is reasonable, since concept logits are continuous, $z$-normalized values, so the probability of encountering an exact zero is negligible at machine precision. Then
\[
\mathrm{NCC}_1 = \mathrm{NEC}(\mW_F).
\]
\end{proposition}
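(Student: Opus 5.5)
The plan is to prove the identity pointwise, for each image $i$ and class $r$, and then average. Fix $i$ and $r$, and let $S_r=\{k:[\mW_F]_{k,r}\neq 0\}$ with $n_r=|S_r|$. The goal is to show that the inner minimum in the definition of $\mathrm{NCC}_1$, call it $\kappa^{(i)}_r$, equals $n_r$ for every $i$. Once this holds, $\mathrm{NCC}_1=\frac{1}{|\sD|C}\sum_i\sum_r n_r=\frac{1}{C}\sum_r n_r=\mathrm{NEC}(\mW_F)$, since $n_r$ does not depend on $i$.

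The first step is to characterize which contributions are nonzero. By definition $u^{(i)}_{k,r}=|[g(\va^{(i)})]_k|\cdot|[\mW_F]_{k,r}|$. If $k\notin S_r$ then $u^{(i)}_{k,r}=0$. If $k\in S_r$, the assumption of the proposition gives $[g(\va^{(i)})]_k\neq 0$, so $u^{(i)}_{k,r}>0$. Hence exactly $n_r$ of the $K$ contributions are strictly positive and the rest vanish. In the sorted order $u^{(i)}_{(1),r}\ge\dots\ge u^{(i)}_{(K),r}$, the first $n_r$ entries are positive and the remaining ones are zero.

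The second step evaluates the minimum. Write $U=\sum_k u^{(i)}_{k,r}=\sum_{s=1}^{n_r}u^{(i)}_{(s),r}$.
\begin{itemize}
\item For $\kappa=n_r$ the partial sum equals $U$, so $\kappa=n_r$ is feasible and $\kappa^{(i)}_r\le n_r$.
\item For $\kappa<n_r$ the partial sum equals $U-\sum_{s=\kappa+1}^{n_r}u^{(i)}_{(s),r}<U$, because at least one omitted term is strictly positive. So no $\kappa<n_r$ satisfies the inequality with $\tau=1$.
\end{itemize}
Therefore $\kappa^{(i)}_r=n_r$. In the edge case $n_r=0$ we have $U=0$, so $\kappa=0$ is feasible (the empty sum is $0\ge 0$) and the minimum is $0=n_r$, consistent with the general claim.

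The only delicate point is the strict positivity used to rule out $\kappa<n_r$. This is exactly where the assumption on nonzero concept logits is needed. Without it, a class could have a nonzero weight whose contribution vanishes on some image, giving $\mathrm{NCC}_1<\mathrm{NEC}$. I would note this in the proof, since it shows that in general only the inequality $\mathrm{NCC}_1\le\mathrm{NEC}$ holds. Ties in the sorting cause no issue, because the argument only uses the count of positive entries, not their order.
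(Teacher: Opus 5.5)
Your proposal is correct and follows essentially the same route as the paper: under the nonzero-logit assumption, exactly $n_r$ contributions are strictly positive, so the inner minimum equals $n_r$ for every image, and averaging gives $\mathrm{NEC}$. Your explicit argument that no $\kappa<n_r$ is feasible, together with the $n_r=0$ edge case, spells out a step the paper's proof states only briefly.
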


\begin{proof}
By definition in Section~\ref{sec:ncc},
\[
u^{(i)}_{k,r}
\;=\;
\big|[g(\va^{(i)})]_k \,[\mW_F]_{k,r}\big|
\;\ge 0.
\]
If $[\mW_F]_{k,r} = 0$, then $u^{(i)}_{k,r} = 0$ for all $i$. Under our assumption, if $[\mW_F]_{k,r} \neq 0$, then $[g(\va^{(i)})]_k \neq 0$ for all $i$, hence $u^{(i)}_{k,r} > 0$ for all $i$. Thus, for fixed $r$,
\[
u^{(i)}_{k,r} > 0
\quad\Longleftrightarrow\quad
[\mW_F]_{k,r} \neq 0
\quad\text{for all } i.
\]

For each class $r$, define
\[
m_r \;=\; \sum_{k=1}^K \mathbf{1}\big\{[\mW_F]_{k,r} \neq 0\big\},
\]
the number of concepts with non-zero weight for class $r$. For any image $i$, exactly $m_r$ of the $u^{(i)}_{k,r}$ are strictly positive and the remaining $K - m_r$ contributions are zero.

Let $u^{(i)}_{(1),r} \ge \dots \ge u^{(i)}_{(K),r}$ be the sorted contributions. Sorting moves the $m_r$ positive terms to the front, so
\[
\sum_{k=1}^K u^{(i)}_{k,r}
=
\sum_{s=1}^{m_r} u^{(i)}_{(s),r}
\]
and $u^{(i)}_{(s),r} = 0$ for all $s > m_r$.

By the definition of $\kappa^{(i)}_r(1)$ in Section~\ref{sec:ncc}, it is the smallest number of top-ranked contributions whose sum matches the total contribution $\sum_{k=1}^K u^{(i)}_{k,r}$. Since this total is exactly the sum of the $m_r$ strictly positive terms and all remaining terms are zero, it follows that
\[
\kappa^{(i)}_r(1) = m_r
\quad\text{for all } i,r.
\]

Substituting into the definition of $\mathrm{NCC}_1$,
\[
\mathrm{NCC}_1
= \frac{1}{|\sD|\,C} \sum_{i,r} \kappa^{(i)}_r(1)
= \frac{1}{|\sD|\,C} \sum_{i,r} m_r
= \frac{1}{C} \sum_{r=1}^C m_r
\]
Using the definition of $m_r$,
\[
\mathrm{NCC}_1
= \frac{1}{C} \sum_{r=1}^C \sum_{k=1}^K \mathbf{1}\big\{[\mW_F]_{k,r} \neq 0\big\}
= \mathrm{NEC}(\mW_F),
\]
which completes the proof.
\end{proof}

\section{Visualizing SAE features}
In Figures~\ref{fig:TopActivatingCUB_sae}, \ref{fig:TopActivatingISIC_sae}, and \ref{fig:TopActivatingImageNet_sae} we show the top–5 activating images for SAE neurons on CUB, ISIC2018, and ImageNet, respectively. We also provide their respective description given by the MLLM. These visualizations qualitatively assess whether the MLLM descriptions align with the neuron behavior. We also overlay the activating images with their respective saliency map to show where that neuron was looking.

\begin{figure*}[ht!]
\centering
\begin{subfigure}{.49\linewidth}
    \centering
    \includegraphics[width=\linewidth]{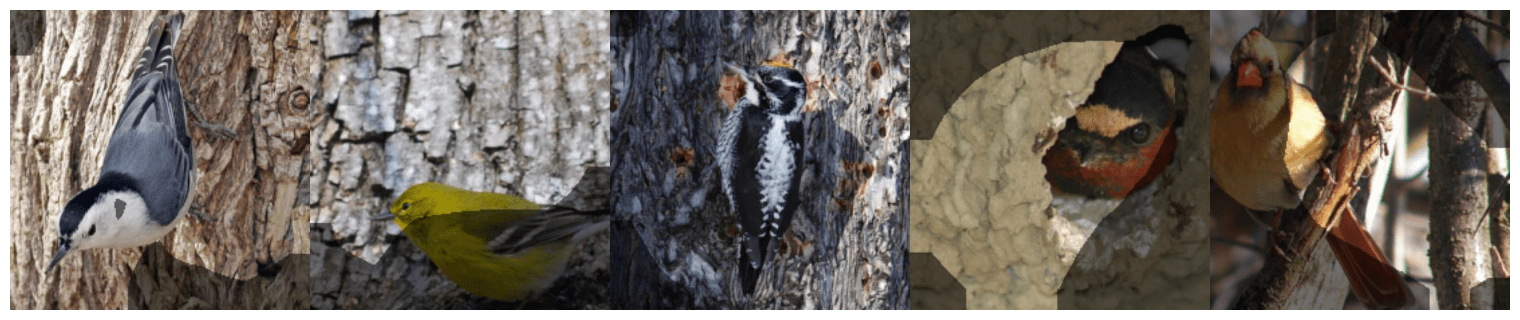}
    \caption{Bird on tree bark}
    \label{fig:sub1}
\end{subfigure}
\begin{subfigure}{.49\linewidth}
    \centering
    \includegraphics[width=\linewidth]{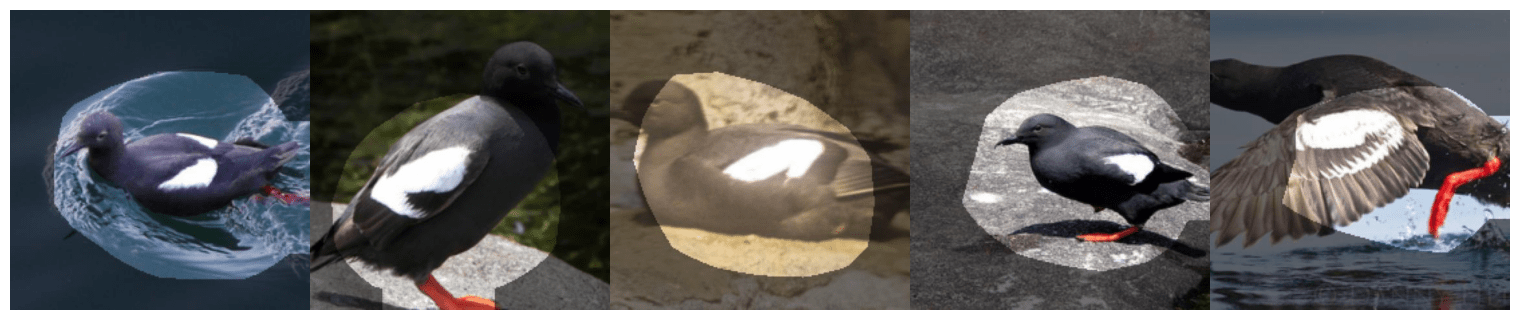}
    \caption{White wing patch on dark bird}
    \label{fig:sub1}
\end{subfigure}
\begin{subfigure}{.49\linewidth}
    \centering
    \includegraphics[width=\linewidth]{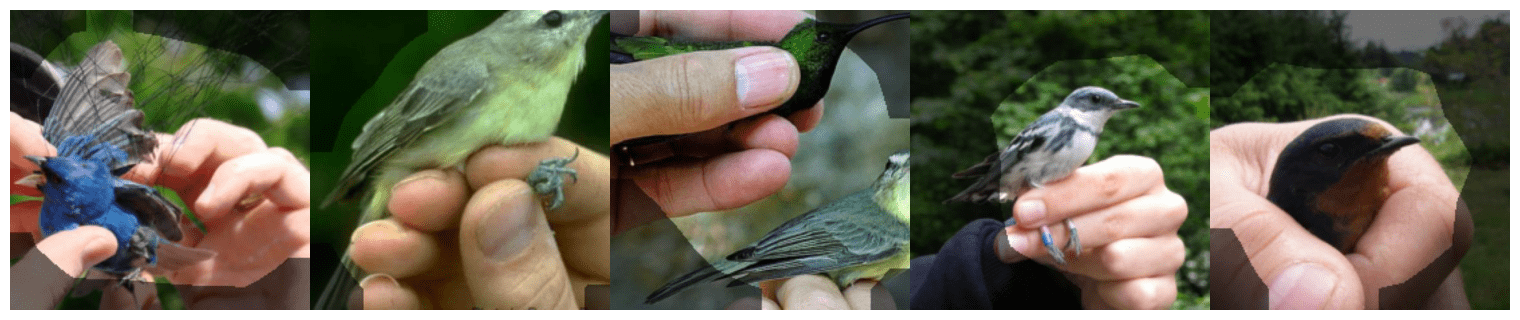}
    \caption{Bird held in human hand}
    \label{fig:sub1}
\end{subfigure}
\begin{subfigure}{.49\linewidth}
    \centering
    \includegraphics[width=\linewidth]{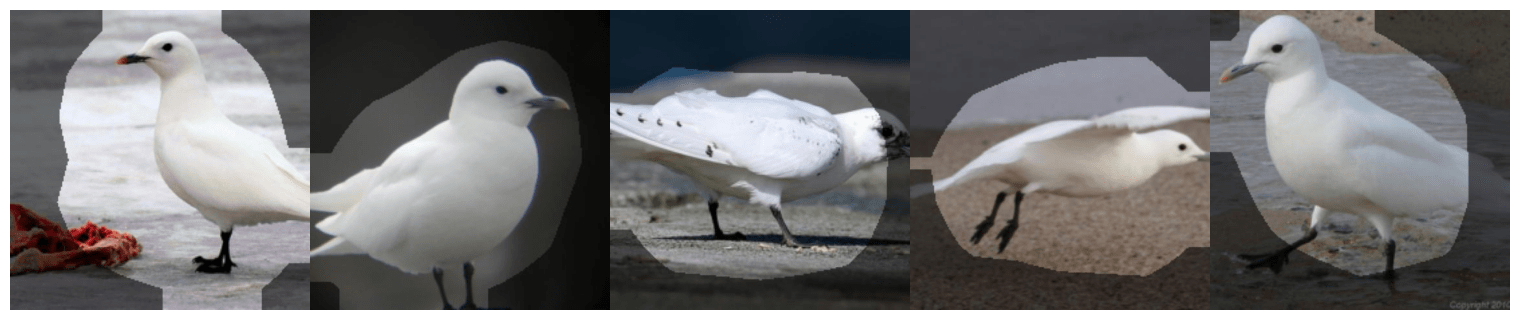}
    \caption{All-white plumage gull}
    \label{fig:sub1}
\end{subfigure}
\begin{subfigure}{.49\linewidth}
    \centering
    \includegraphics[width=\linewidth]{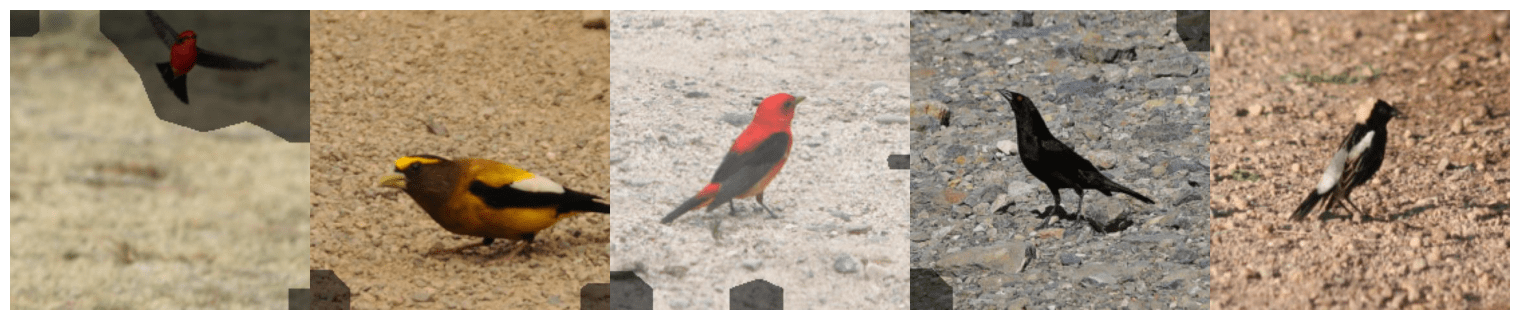}
    \caption{Bird on ground or rocky surface}
    \label{fig:sub1}
\end{subfigure}
\begin{subfigure}{.49\linewidth}
    \centering
    \includegraphics[width=\linewidth]{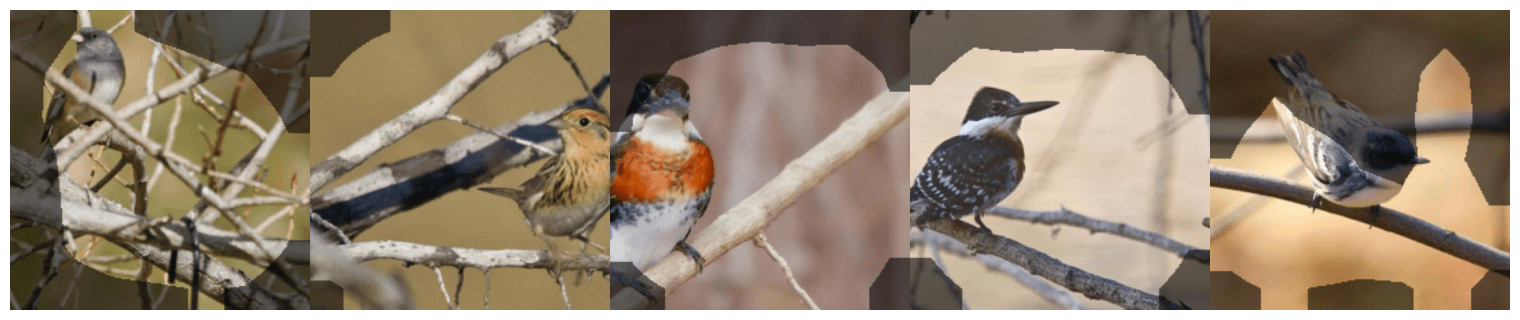}
    \caption{Small bird perched on a bare branch}
    \label{fig:sub1}
\end{subfigure}
\begin{subfigure}{.49\linewidth}
    \centering
    \includegraphics[width=\linewidth]{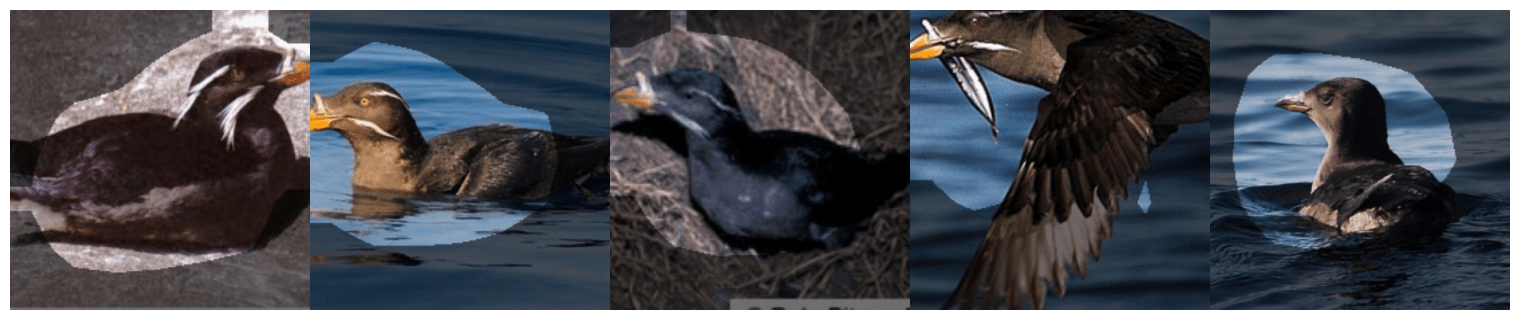}
    \caption{Orange bill with white facial plumes}
    \label{fig:sub1}
\end{subfigure}
\begin{subfigure}{.49\linewidth}
    \centering
    \includegraphics[width=\linewidth]{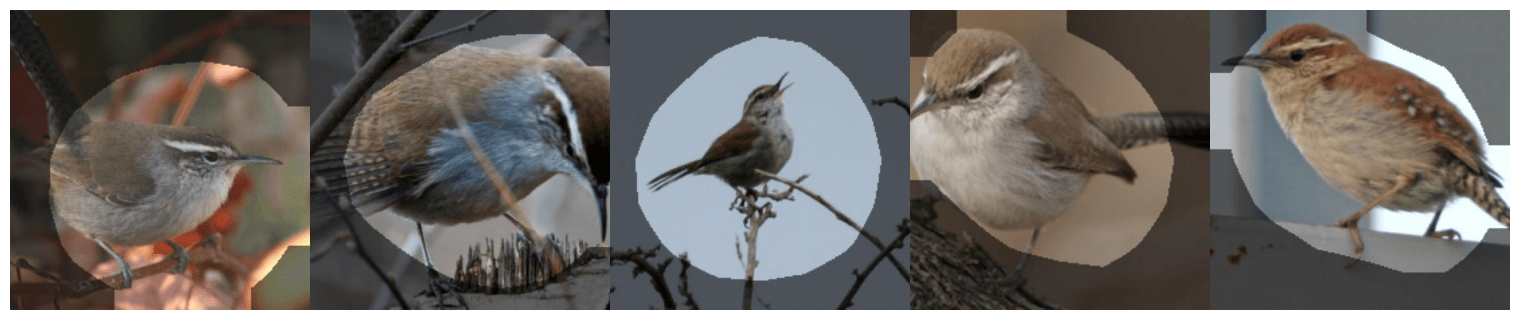}
    \caption{White eyebrow stripe on a brown bird}
    \label{fig:sub1}
\end{subfigure}
\begin{subfigure}{.49\linewidth}
    \centering
    \includegraphics[width=\linewidth]{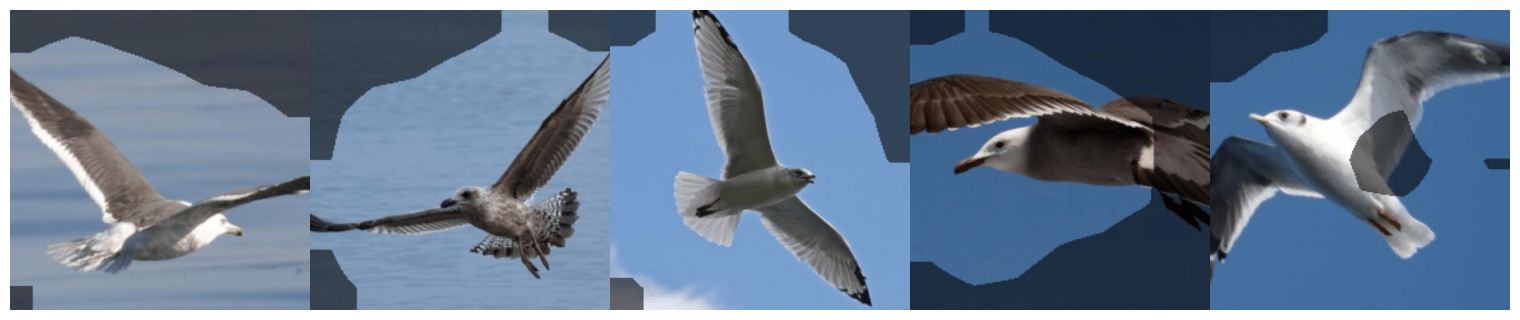}
    \caption{Outstretched gull-like wings in flight}
    \label{fig:sub1}
\end{subfigure}
\begin{subfigure}{.49\linewidth}
    \centering
    \includegraphics[width=\linewidth]{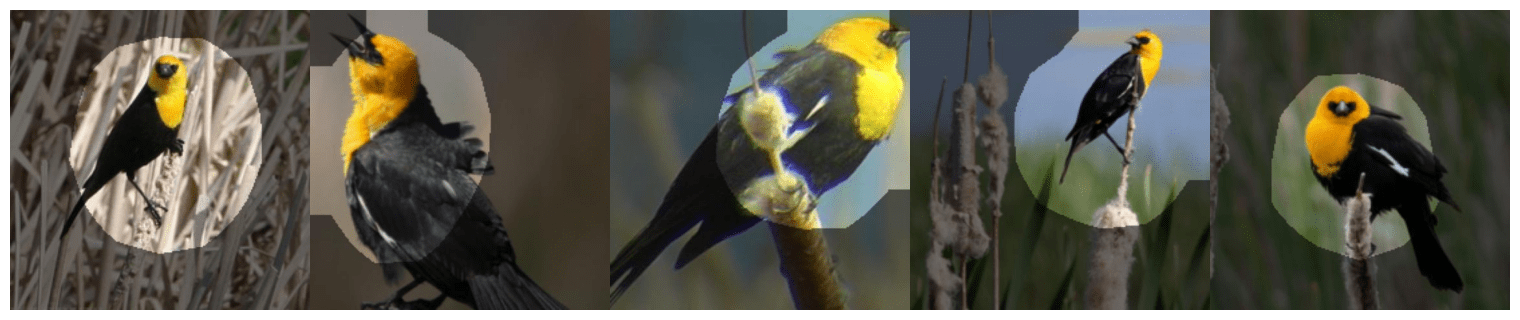}
    \caption{Bright yellow head and chest with black body}
    \label{fig:sub1}
\end{subfigure}
\begin{subfigure}{.49\linewidth}
    \centering
    \includegraphics[width=\linewidth]{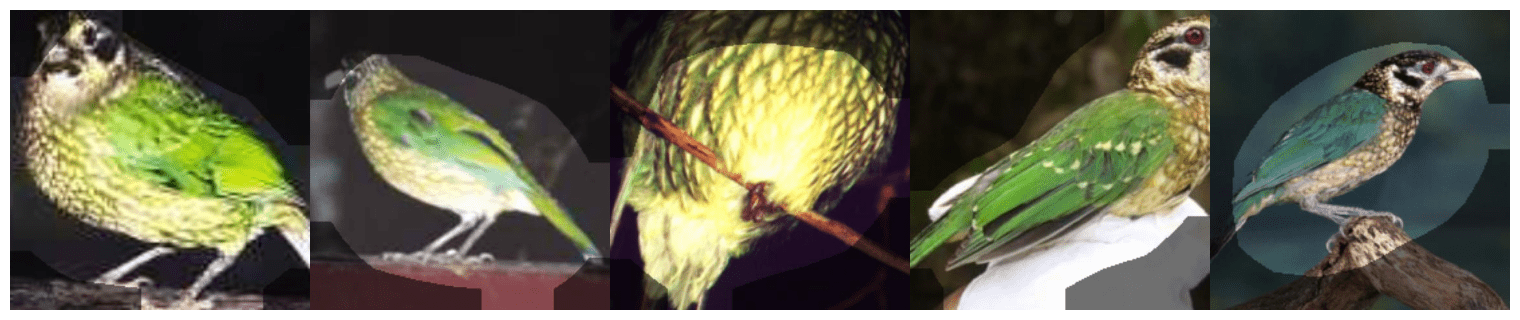}
    \caption{Green back with pale yellow spotted underparts}
    \label{fig:sub1}
\end{subfigure}
\begin{subfigure}{.49\linewidth}
    \centering
    \includegraphics[width=\linewidth]{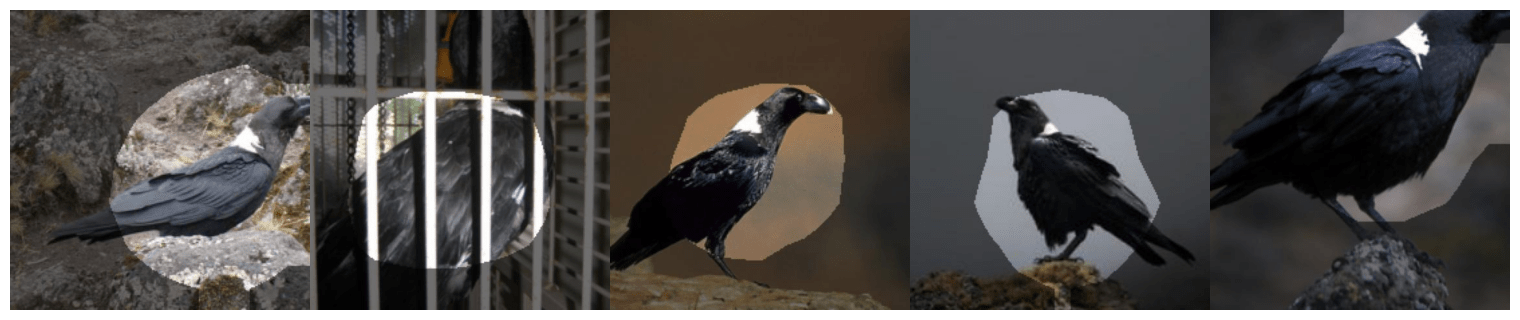}
    \caption{White patch on the back of a black bird's neck}
    \label{fig:sub1}
\end{subfigure}
\begin{subfigure}{.49\linewidth}
    \centering
    \includegraphics[width=\linewidth]{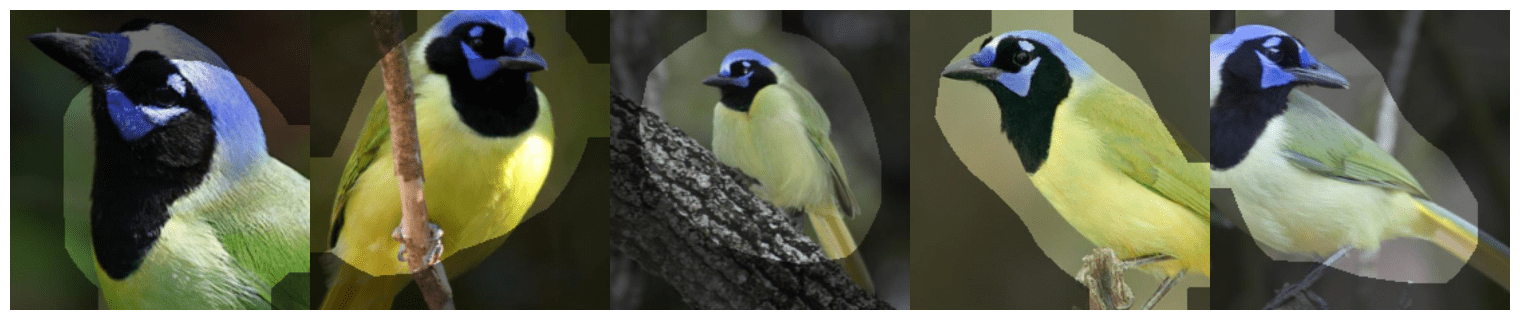}
    \caption{Bright green body with black face and blue crown}
    \label{fig:sub1}
\end{subfigure}
\begin{subfigure}{.49\linewidth}
    \centering
    \includegraphics[width=\linewidth]{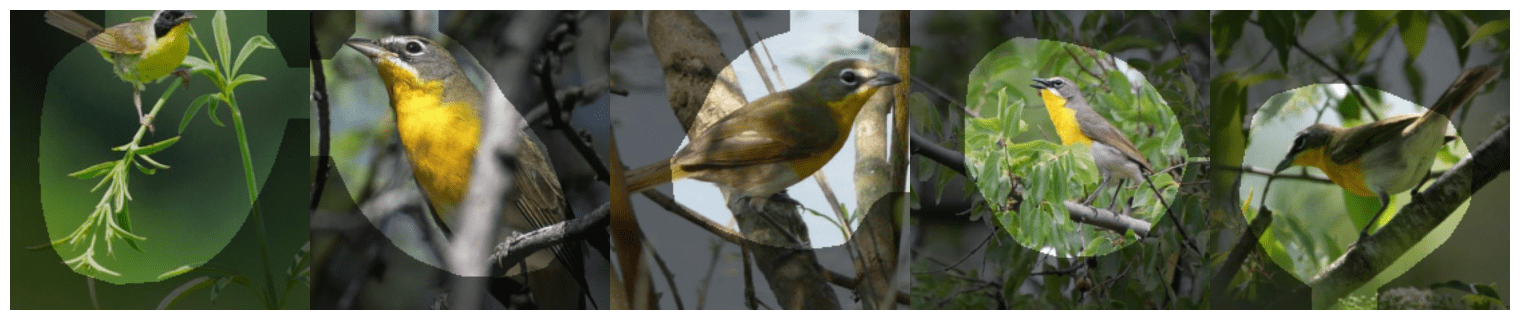}
    \caption{Bright yellow throat and upper chest on a songbird}
    \label{fig:sub1}
\end{subfigure}
\begin{subfigure}{.49\linewidth}
    \centering
    \includegraphics[width=\linewidth]{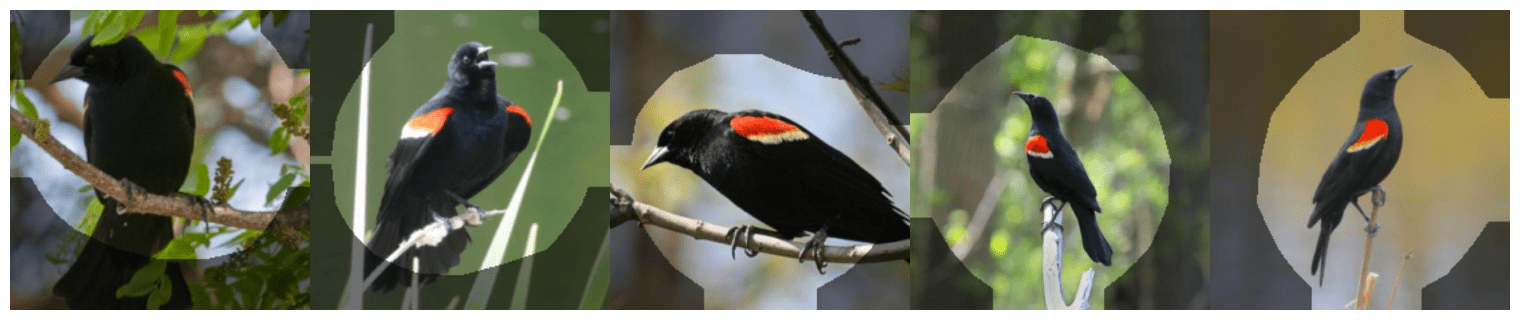}
    \caption{Black bird with a bright red and yellow shoulder patch}
    \label{fig:sub1}
\end{subfigure}
\begin{subfigure}{.49\linewidth}
    \centering
    \includegraphics[width=\linewidth]{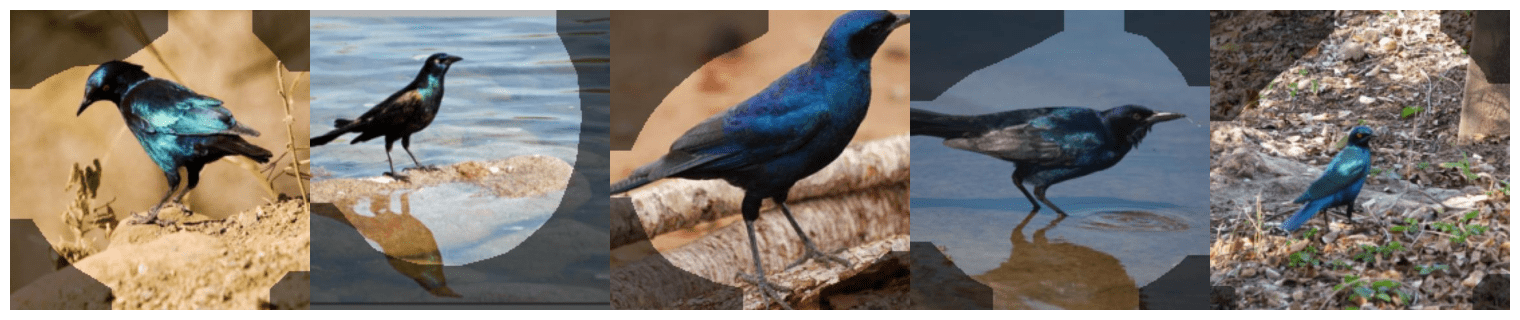}
    \caption{Iridescent blue or green plumage on a medium-sized bird}
    \label{fig:sub1}
\end{subfigure}
\begin{subfigure}{.49\linewidth}
    \centering
    \includegraphics[width=\linewidth]{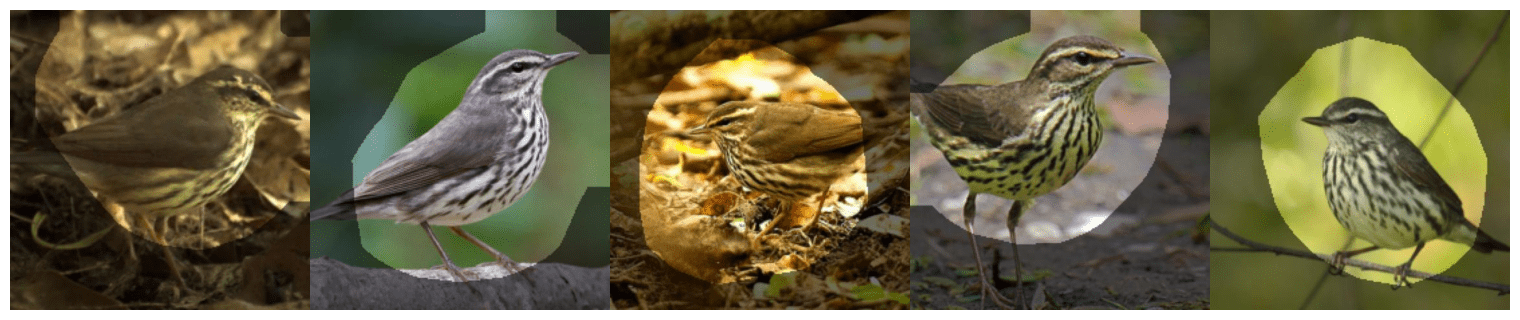}
    \caption{Bold dark streaks on pale underparts}
    \label{fig:sub1}
\end{subfigure}
\begin{subfigure}{.49\linewidth}
    \centering
    \includegraphics[width=\linewidth]{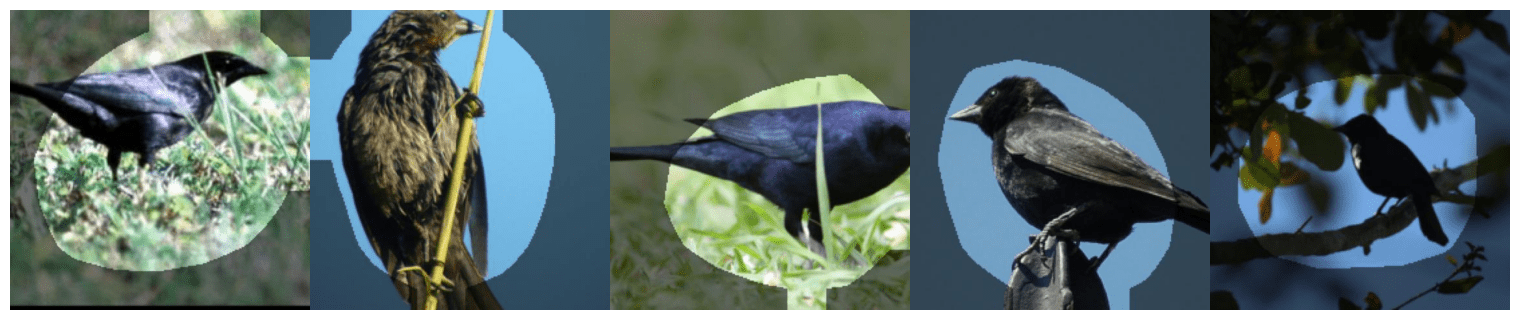}
    \caption{Glossy black plumage}
    \label{fig:sub1}
\end{subfigure}
\caption{Top-5 activating images for SAE features in CUB}
\label{fig:TopActivatingCUB_sae}
\end{figure*}

\begin{figure*}[ht!]
\centering
\begin{subfigure}{.49\linewidth}
    \centering
    \includegraphics[width=\linewidth]{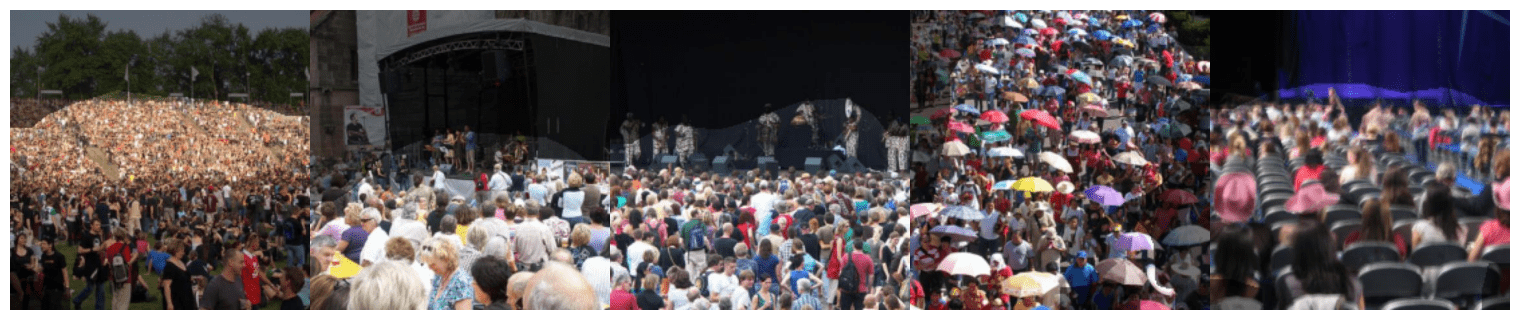}
    \caption{Large crowd at an event}
    \label{fig:sub1}
\end{subfigure}
\begin{subfigure}{.49\linewidth}
    \centering
    \includegraphics[width=\linewidth]{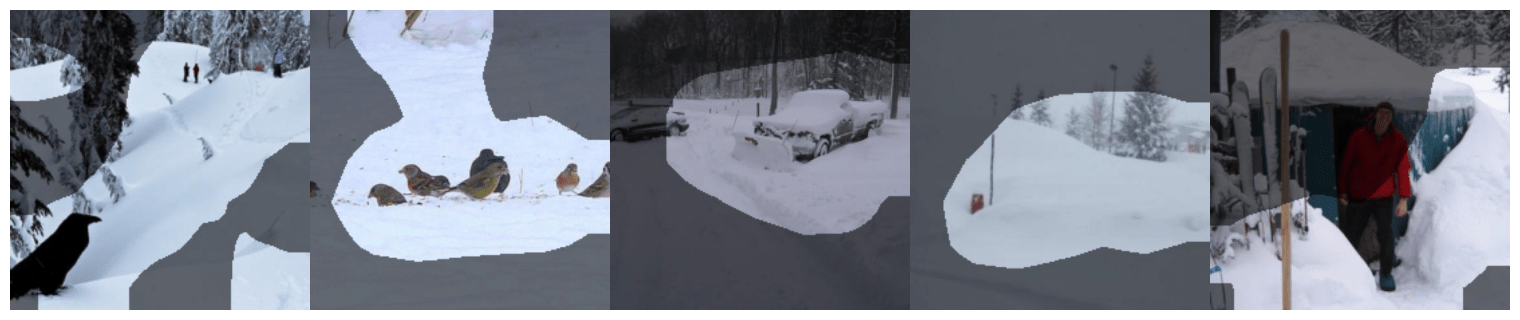}
    \caption{Deep snow}
    \label{fig:sub1}
\end{subfigure}
\begin{subfigure}{.49\linewidth}
    \centering
    \includegraphics[width=\linewidth]{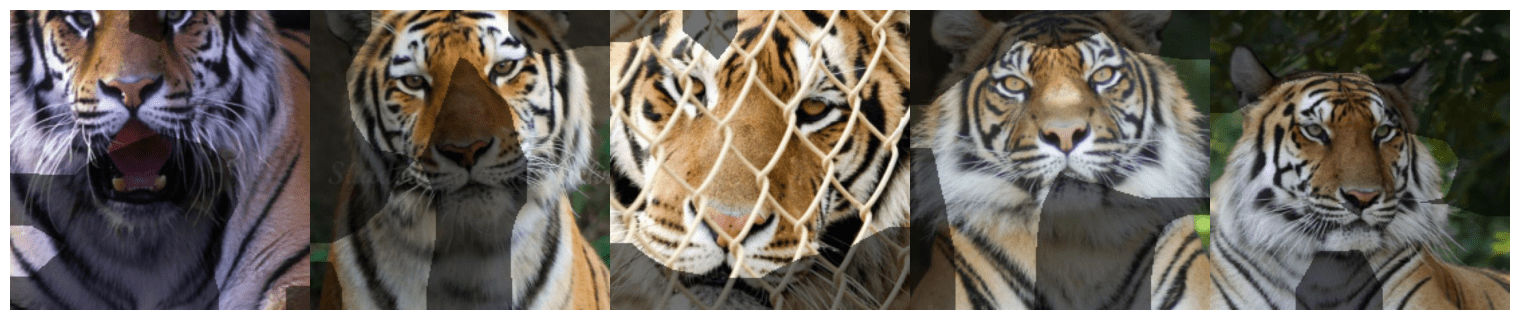}
    \caption{Tiger nose and muzzle}
    \label{fig:sub1}
\end{subfigure}
\begin{subfigure}{.49\linewidth}
    \centering
    \includegraphics[width=\linewidth]{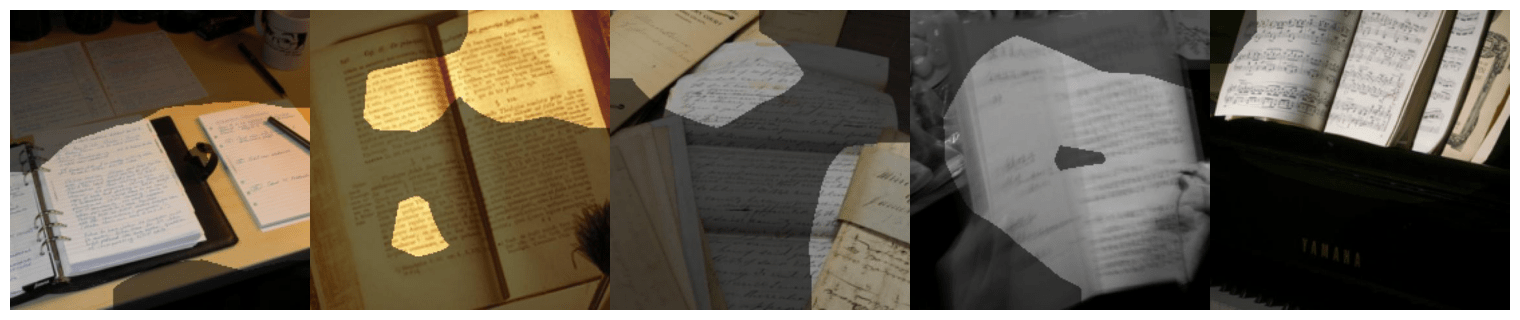}
    \caption{Handwritten or printed text on paper}
    \label{fig:sub1}
\end{subfigure}
\begin{subfigure}{.49\linewidth}
    \centering
    \includegraphics[width=\linewidth]{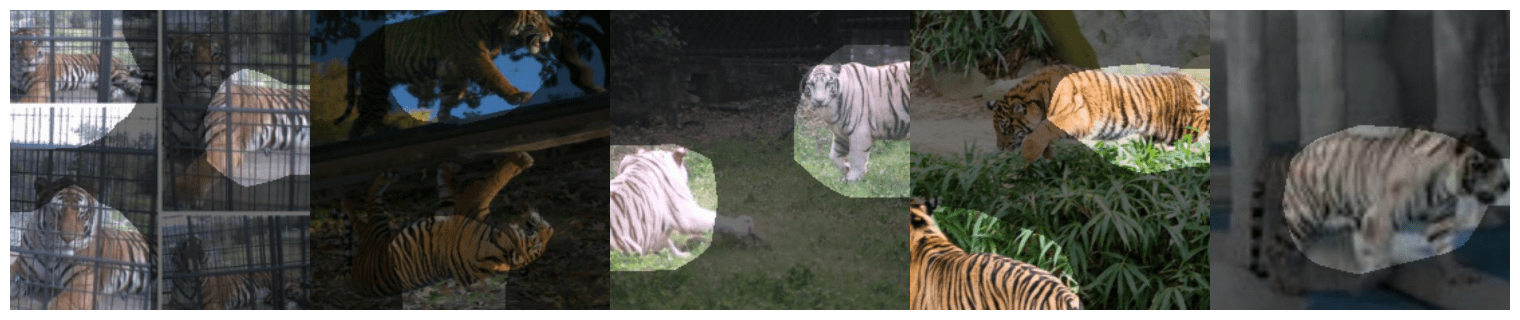}
    \caption{Tiger stripes}
    \label{fig:sub1}
\end{subfigure}
\begin{subfigure}{.49\linewidth}
    \centering
    \includegraphics[width=\linewidth]{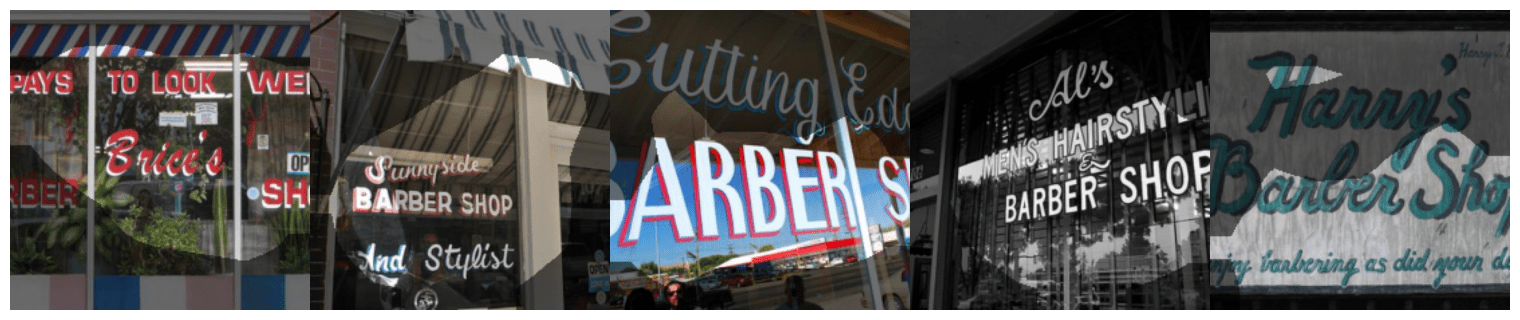}
    \caption{Barbershop window signage}
    \label{fig:sub1}
\end{subfigure}
\begin{subfigure}{.49\linewidth}
    \centering
    \includegraphics[width=\linewidth]{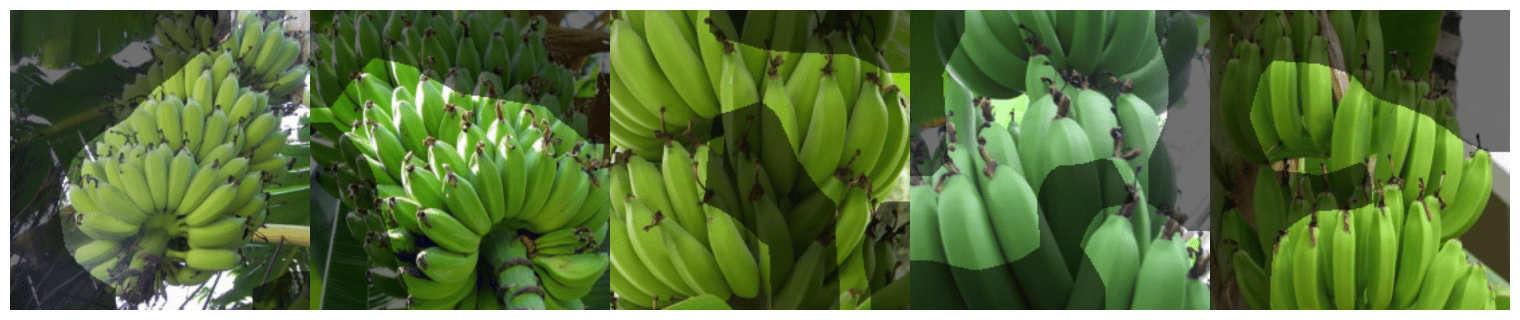}
    \caption{Cluster of unripe green bananas}
    \label{fig:sub1}
\end{subfigure}
\begin{subfigure}{.49\linewidth}
    \centering
    \includegraphics[width=\linewidth]{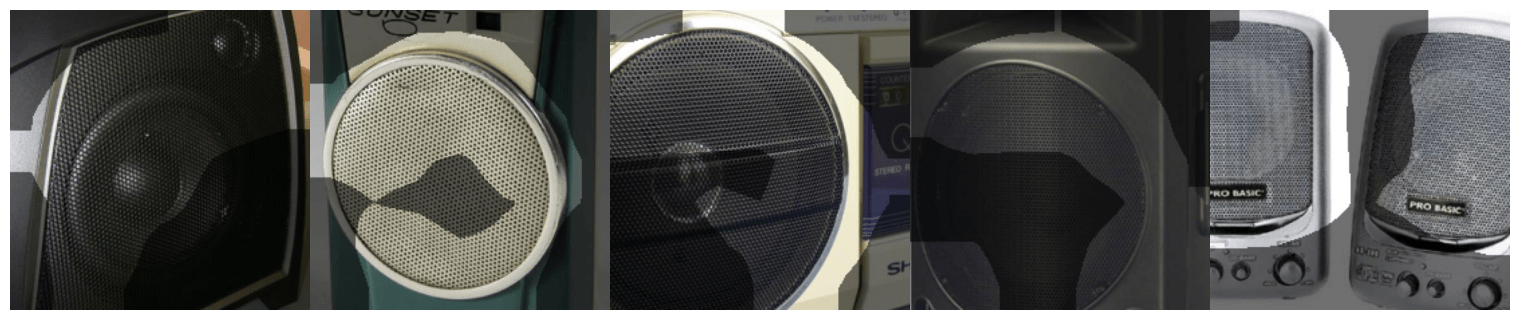}
    \caption{Circular speaker grille}
    \label{fig:sub1}
\end{subfigure}
\begin{subfigure}{.49\linewidth}
    \centering
    \includegraphics[width=\linewidth]{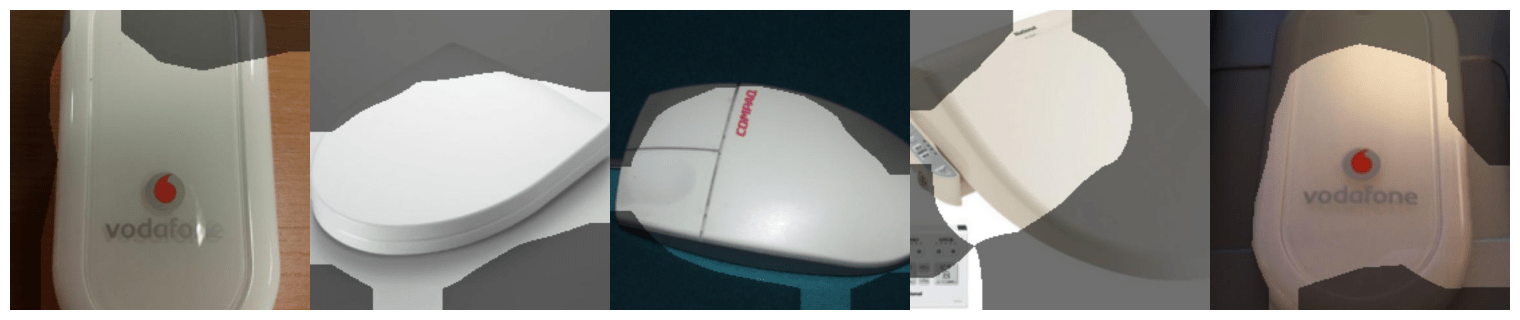}
    \caption{White rounded plastic surface}
    \label{fig:sub1}
\end{subfigure}
\begin{subfigure}{.49\linewidth}
    \centering
    \includegraphics[width=\linewidth]{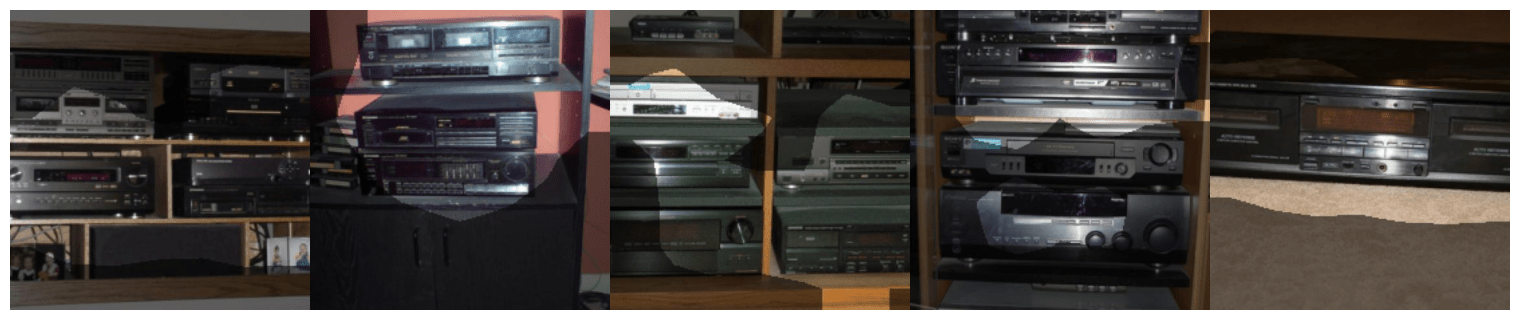}
    \caption{Stacked audio equipment front panels}
    \label{fig:sub1}
\end{subfigure}
\begin{subfigure}{.49\linewidth}
    \centering
    \includegraphics[width=\linewidth]{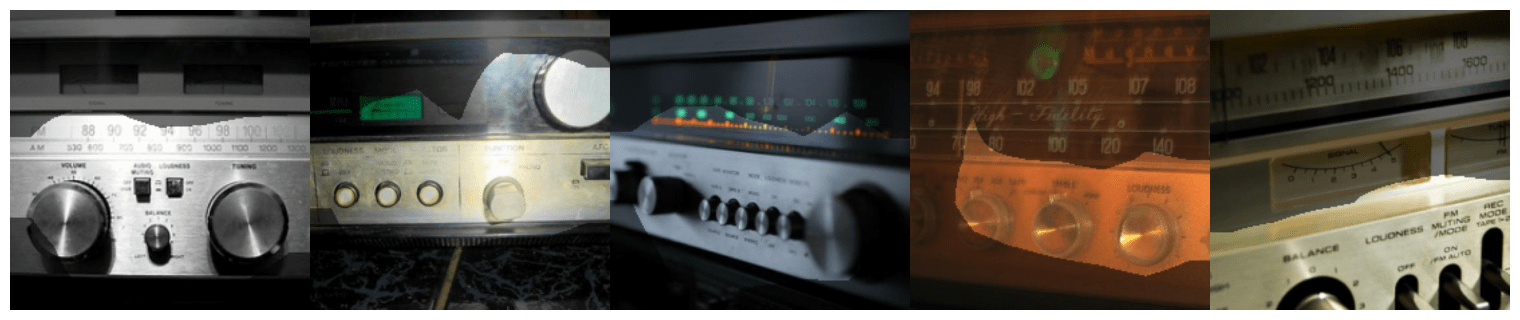}
    \caption{Analog control knobs and frequency dials}
    \label{fig:sub1}
\end{subfigure}
\begin{subfigure}{.49\linewidth}
    \centering
    \includegraphics[width=\linewidth]{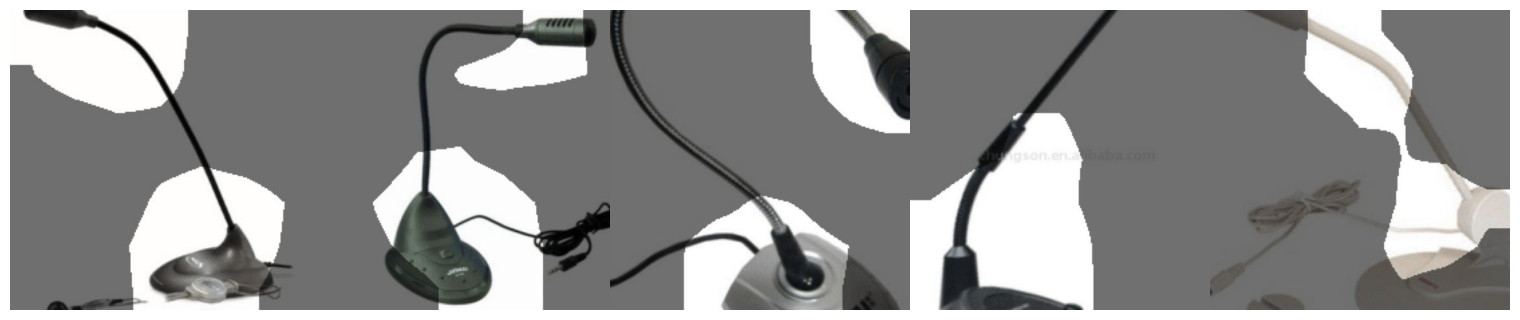}
    \caption{Gooseneck with attached cylindrical element}
    \label{fig:sub1}
\end{subfigure}
\begin{subfigure}{.49\linewidth}
    \centering
    \includegraphics[width=\linewidth]{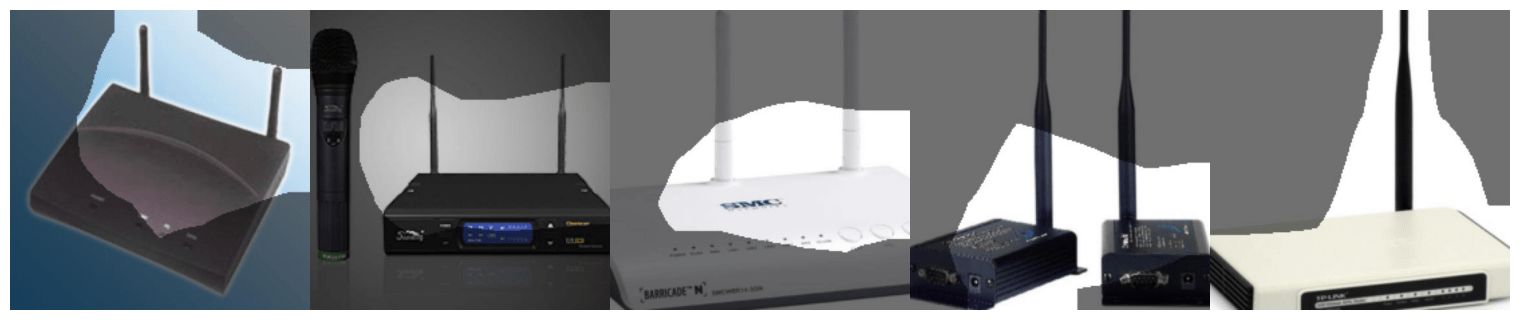}
    \caption{Dual vertical antennas on a rectangular electronic device}
    \label{fig:sub1}
\end{subfigure}
\begin{subfigure}{.49\linewidth}
    \centering
    \includegraphics[width=\linewidth]{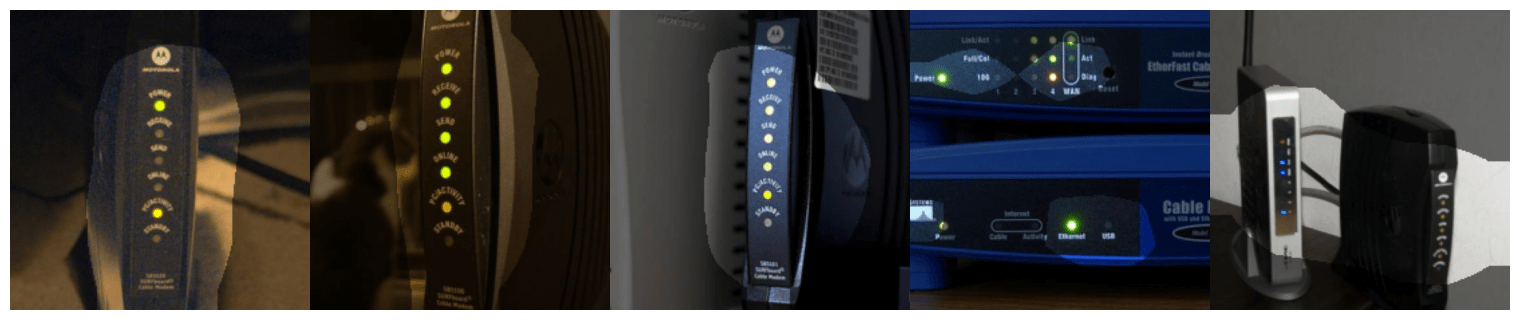}
    \caption{Vertical row of illuminated status indicator lights on electronic device}
    \label{fig:sub1}
\end{subfigure}
\begin{subfigure}{.49\linewidth}
    \centering
    \includegraphics[width=\linewidth]{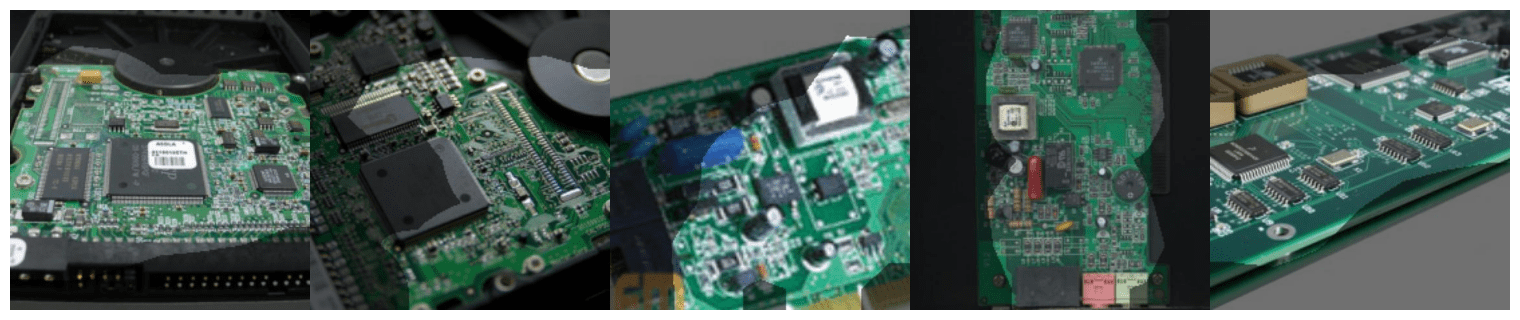}
    \caption{Green printed circuit board with electronic components}
    \label{fig:sub1}
\end{subfigure}
\begin{subfigure}{.49\linewidth}
    \centering
    \includegraphics[width=\linewidth]{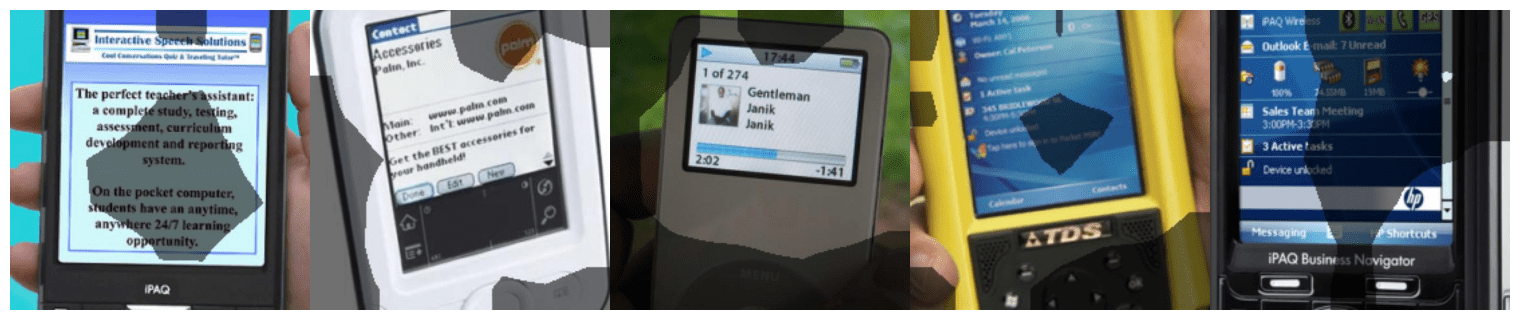}
    \caption{Handheld electronic device screen displaying text or interface}
    \label{fig:sub1}
\end{subfigure}
\begin{subfigure}{.49\linewidth}
    \centering
    \includegraphics[width=\linewidth]{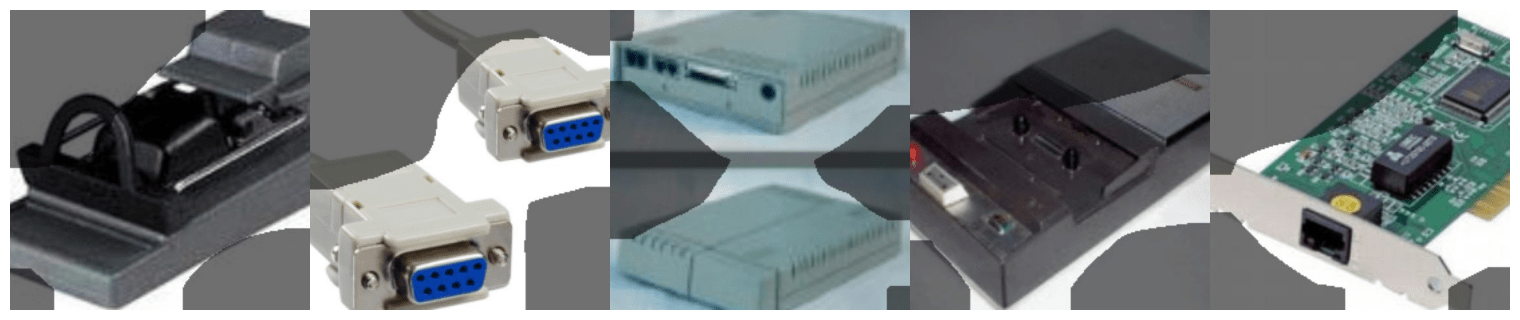}
    \caption{Rectangular electronic device with visible ports or switches}
    \label{fig:sub1}
\end{subfigure}
\begin{subfigure}{.49\linewidth}
    \centering
    \includegraphics[width=\linewidth]{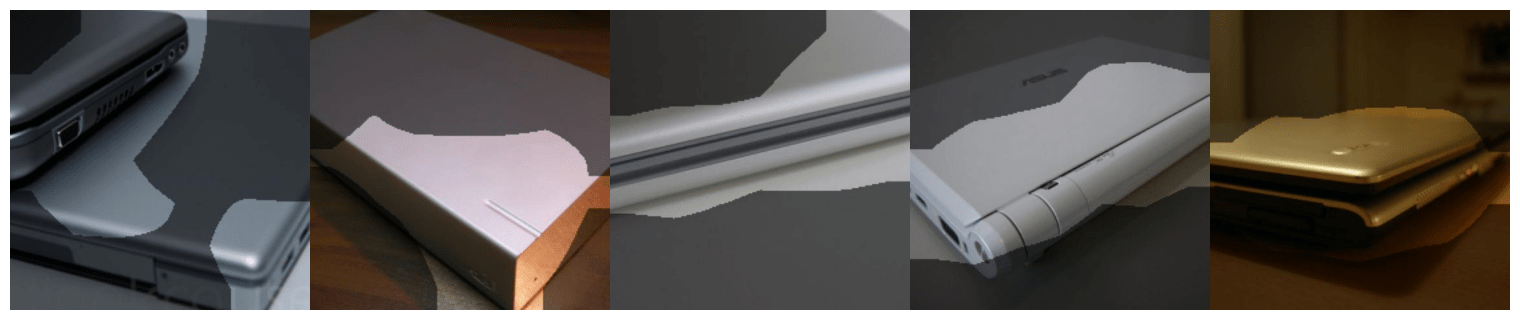}
    \caption{Closed metallic or plastic rectangular device with rounded edges}
    \label{fig:sub1}
\end{subfigure}
\caption{Top-5 activating images for SAE features in ImageNet}
\label{fig:TopActivatingImageNet_sae}
\end{figure*}

\clearpage

\begin{figure*}[ht!]
\centering
\begin{subfigure}{.49\linewidth}
    \centering
    \includegraphics[width=\linewidth]{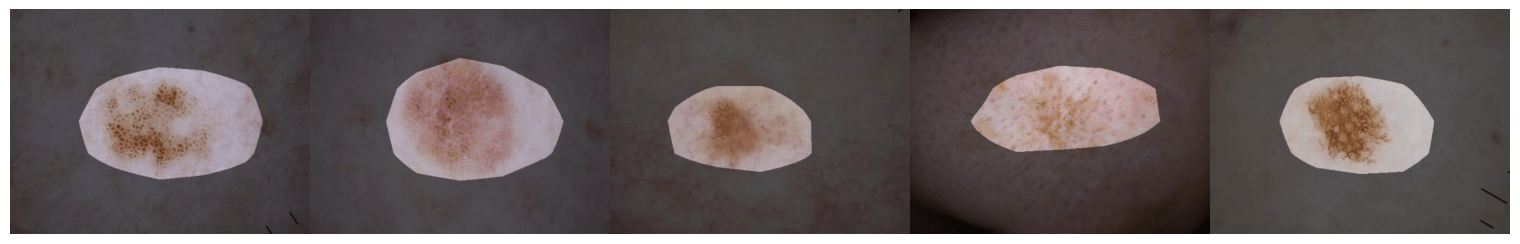}
    \caption{Brown globules}
    \label{fig:sub1}
\end{subfigure}
\begin{subfigure}{.49\linewidth}
    \centering
    \includegraphics[width=\linewidth]{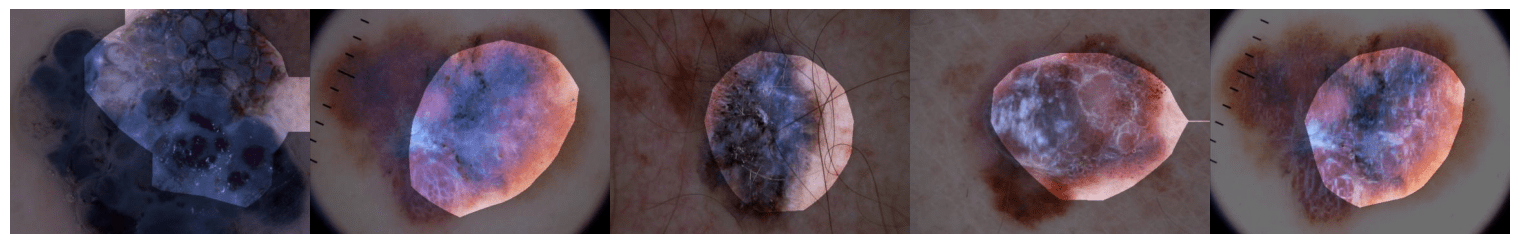}
    \caption{Blue-white veil}
    \label{fig:sub1}
\end{subfigure}
\begin{subfigure}{.49\linewidth}
    \centering
    \includegraphics[width=\linewidth]{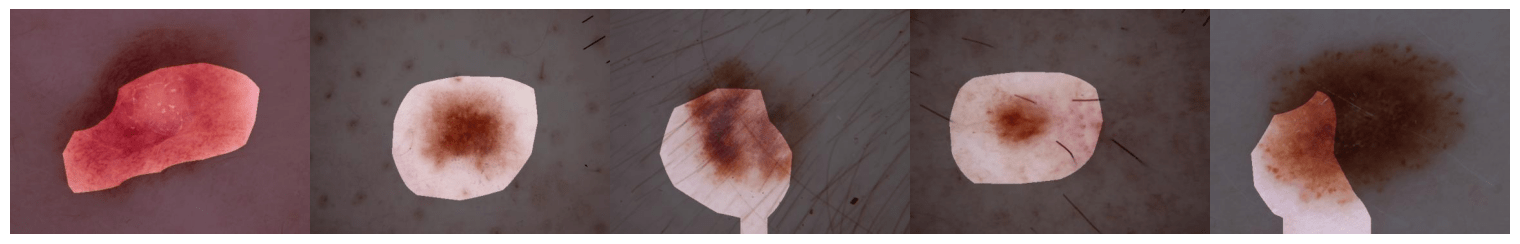}
    \caption{Peripheral pigment network}
    \label{fig:sub1}
\end{subfigure}
\begin{subfigure}{.49\linewidth}
    \centering
    \includegraphics[width=\linewidth]{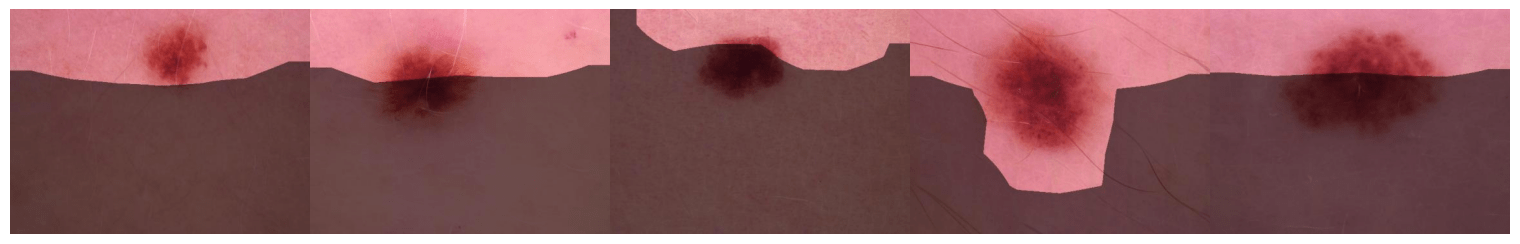}
    \caption{Dark brown homogeneous patch}
    \label{fig:sub1}
\end{subfigure}
\begin{subfigure}{.49\linewidth}
    \centering
    \includegraphics[width=\linewidth]{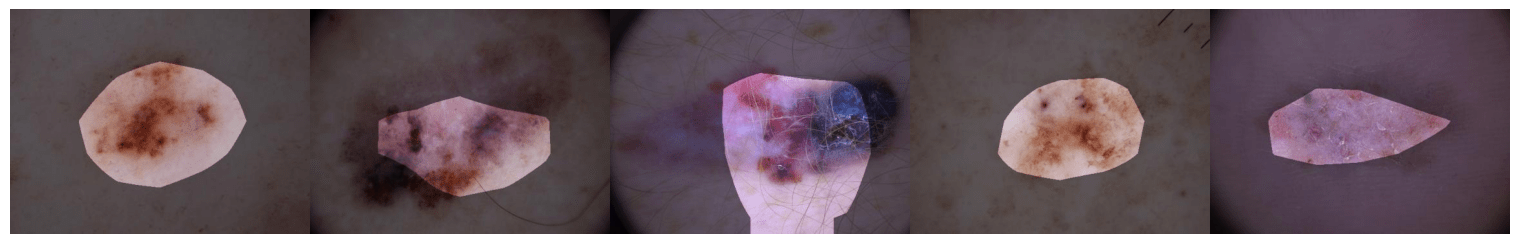}
    \caption{Asymmetric patch with variegated pigmentation}
    \label{fig:sub1}
\end{subfigure}
\begin{subfigure}{.49\linewidth}
    \centering
    \includegraphics[width=\linewidth]{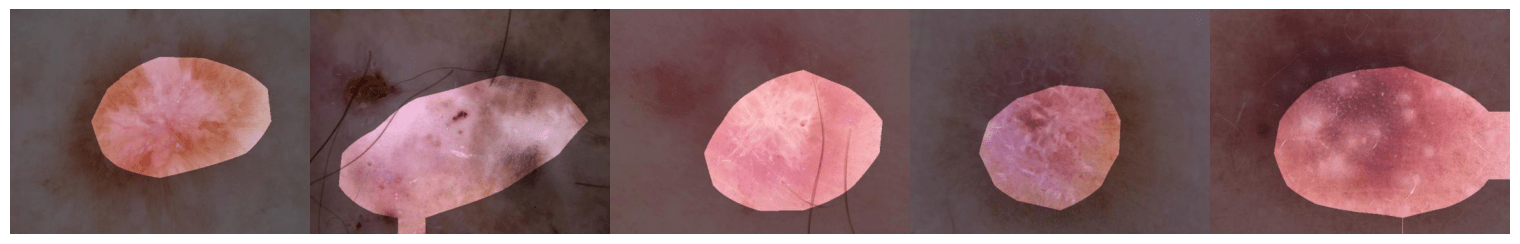}
    \caption{Pink-white structureless area}
    \label{fig:sub1}
\end{subfigure}
\begin{subfigure}{.49\linewidth}
    \centering
    \includegraphics[width=\linewidth]{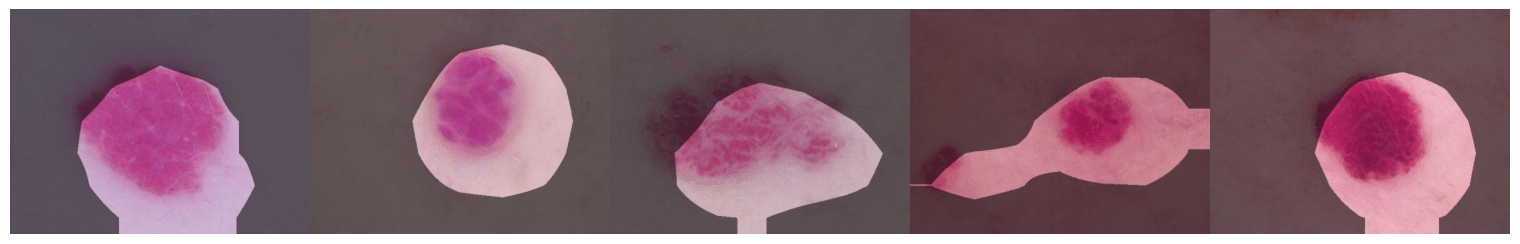}
    \caption{Red to purple homogeneous area}
    \label{fig:sub1}
\end{subfigure}
\begin{subfigure}{.49\linewidth}
    \centering
    \includegraphics[width=\linewidth]{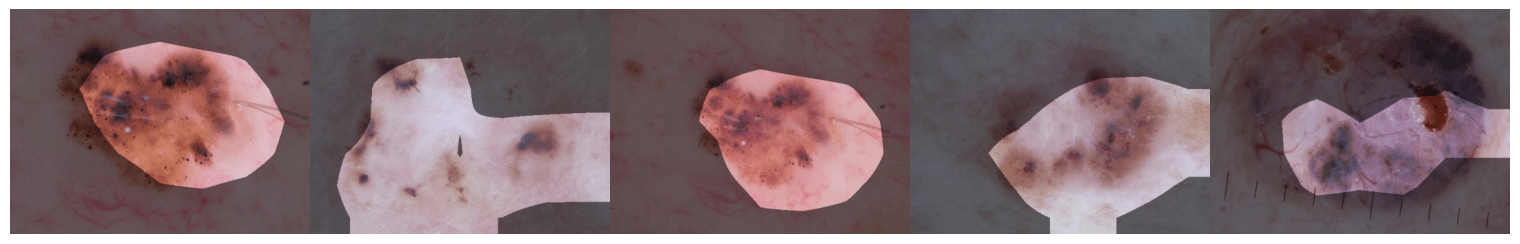}
    \caption{Clustered blue-gray ovoid nests}
    \label{fig:sub1}
\end{subfigure}
\begin{subfigure}{.49\linewidth}
    \centering
    \includegraphics[width=\linewidth]{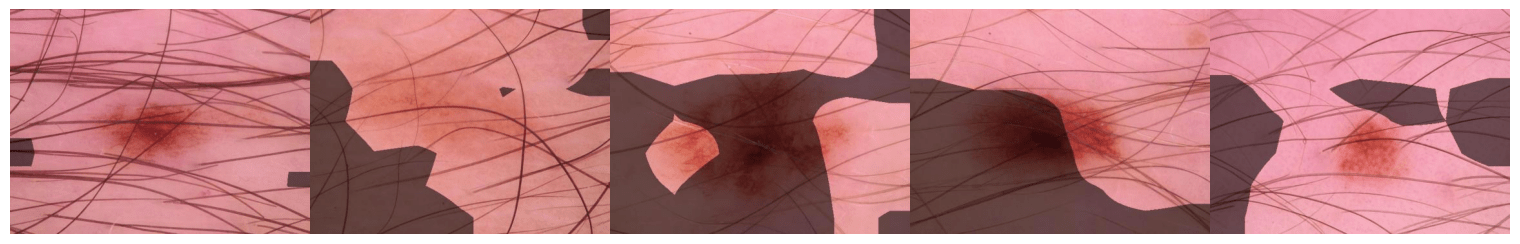}
    \caption{Brown macule with overlying hair}
    \label{fig:sub1}
\end{subfigure}
\begin{subfigure}{.49\linewidth}
    \centering
    \includegraphics[width=\linewidth]{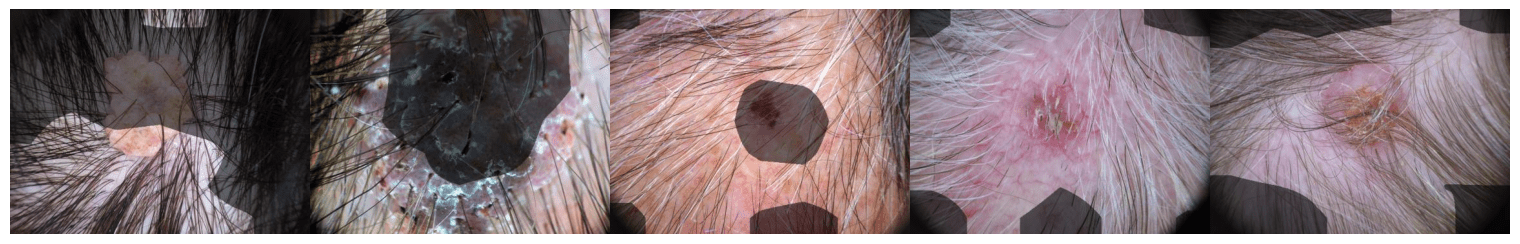}
    \caption{Scalp lesion with surrounding hair}
    \label{fig:sub1}
\end{subfigure}
\begin{subfigure}{.49\linewidth}
    \centering
    \includegraphics[width=\linewidth]{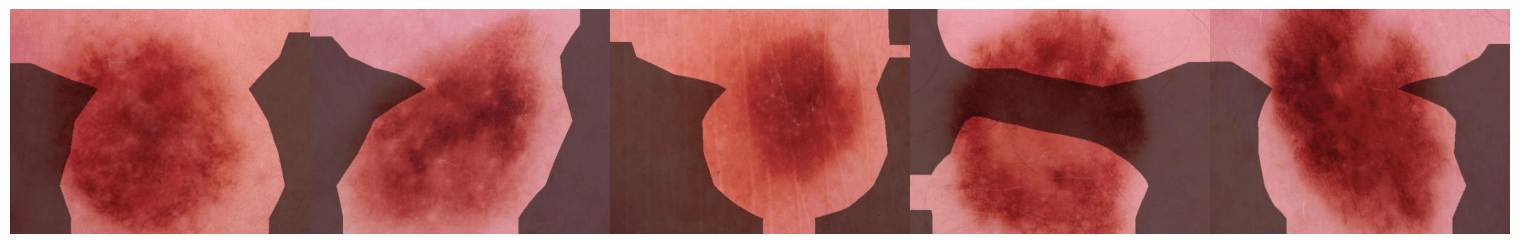}
    \caption{Diffuse homogeneous brown pigmentation}
    \label{fig:sub1}
\end{subfigure}
\begin{subfigure}{.49\linewidth}
    \centering
    \includegraphics[width=\linewidth]{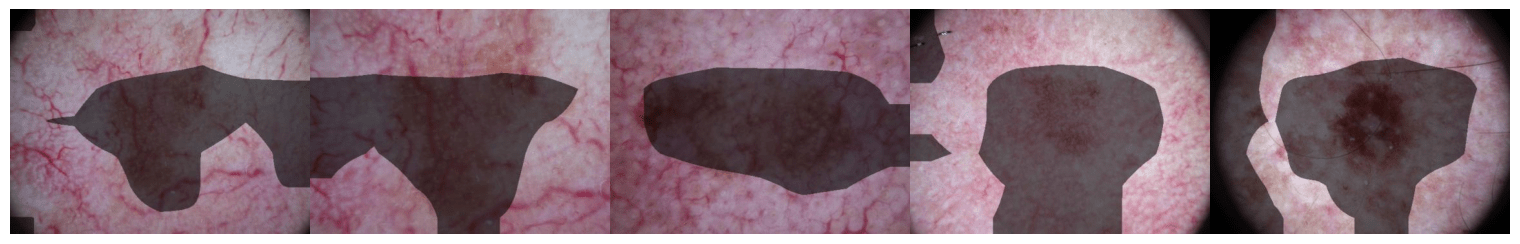}
    \caption{Clustered pink-to-red vascular network}
    \label{fig:sub1}
\end{subfigure}
\begin{subfigure}{.49\linewidth}
    \centering
    \includegraphics[width=\linewidth]{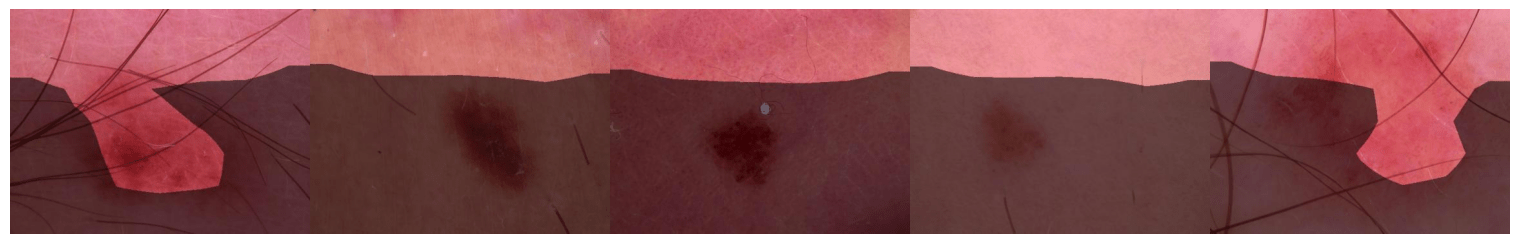}
    \caption{Brown patch at the border between two skin tones}
    \label{fig:sub1}
\end{subfigure}
\begin{subfigure}{.49\linewidth}
    \centering
    \includegraphics[width=\linewidth]{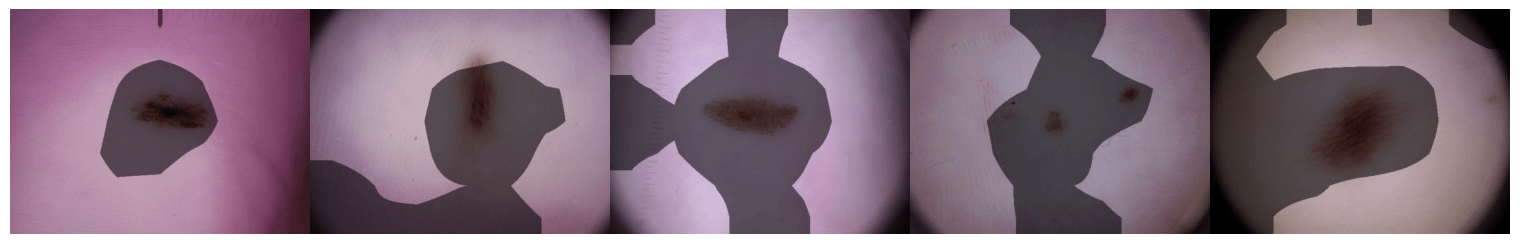}
    \caption{Parallel lines or streaks within a pigmented lesion}
    \label{fig:sub1}
\end{subfigure}
\begin{subfigure}{.49\linewidth}
    \centering
    \includegraphics[width=\linewidth]{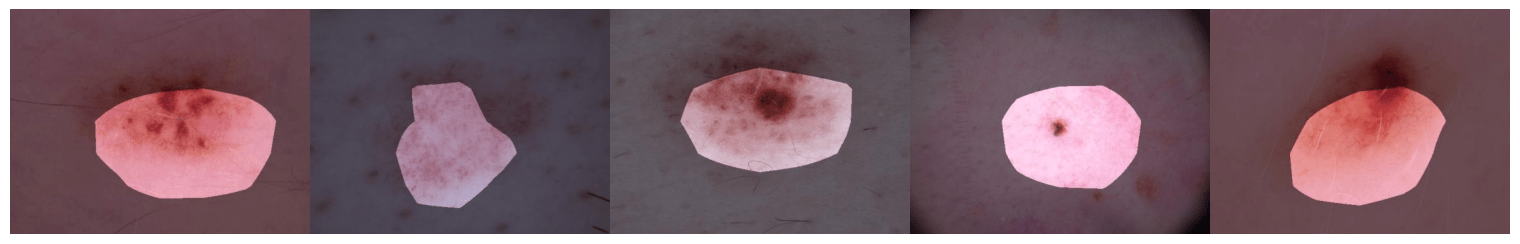}
    \caption{Central dark spot or blotch on a lighter pink background}
    \label{fig:sub1}
\end{subfigure}
\begin{subfigure}{.49\linewidth}
    \centering
    \includegraphics[width=\linewidth]{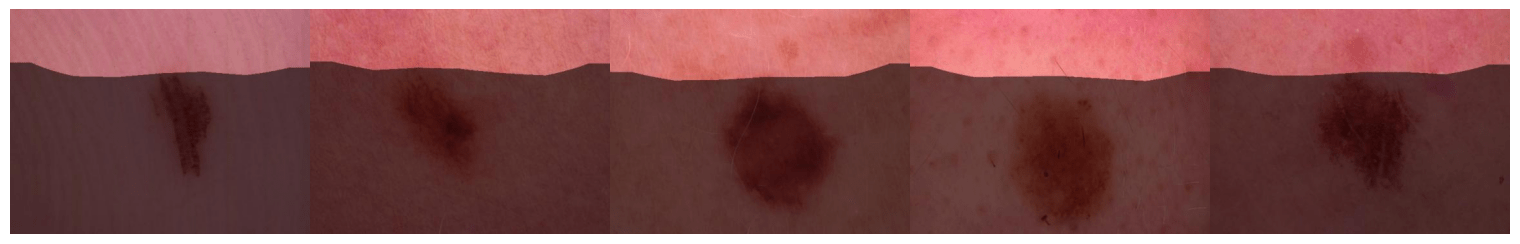}
    \caption{Dark brown circular lesion at the border of a lighter skin region}
    \label{fig:sub1}
\end{subfigure}
\begin{subfigure}{.49\linewidth}
    \centering
    \includegraphics[width=\linewidth]{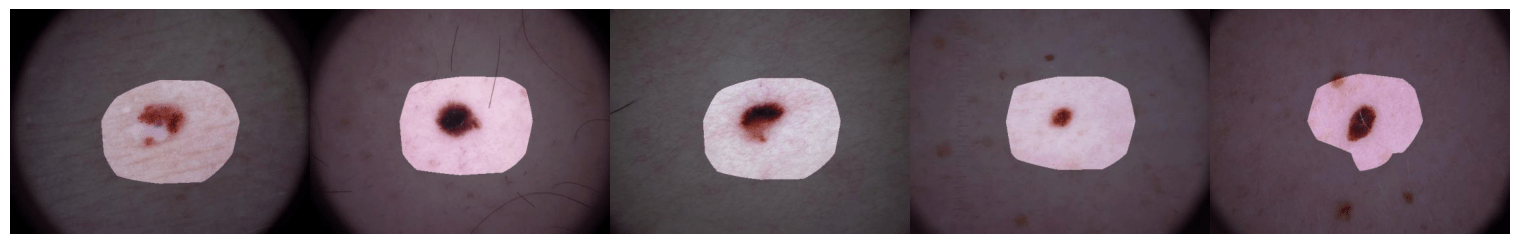}
    \caption{Small, well-circumscribed, dark brown to black central pigmented spot on light skin}
    \label{fig:sub1}
\end{subfigure}
\begin{subfigure}{.49\linewidth}
    \centering
    \includegraphics[width=\linewidth]{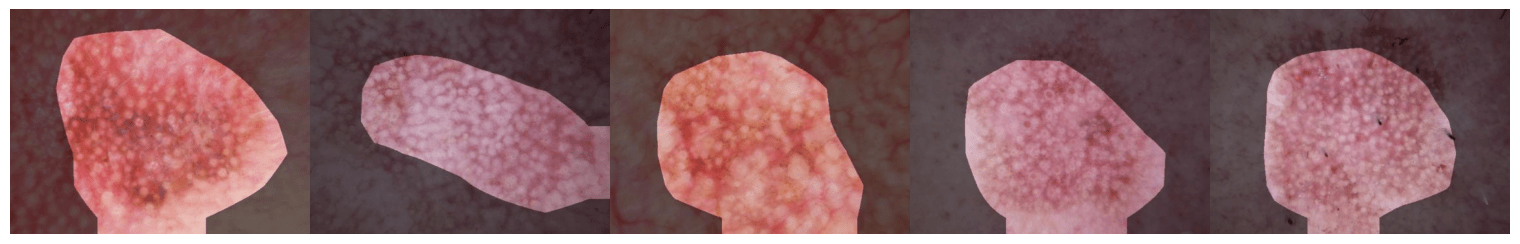}
    \caption{Clustered white-to-pink round structures (milky or keratotic dots) on a pink background}
    \label{fig:sub1}
\end{subfigure}
\begin{subfigure}{.49\linewidth}
    \centering
    \includegraphics[width=\linewidth]{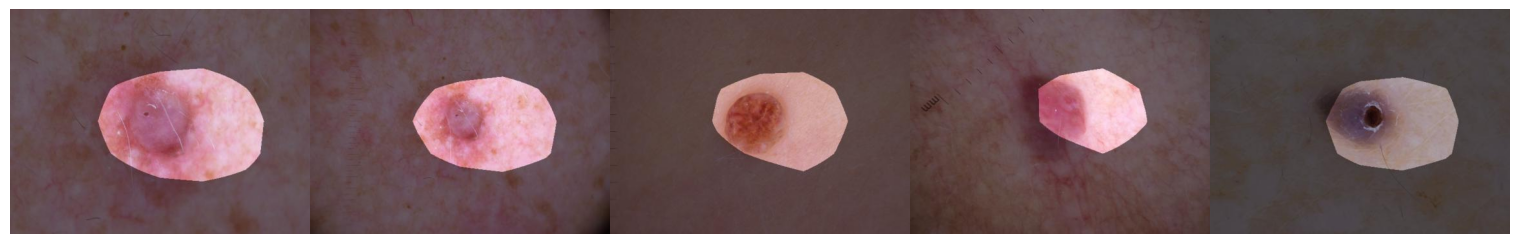}
    \caption{Pink or light red papule/nodule with a central depression or ulceration}
    \label{fig:sub1}
\end{subfigure}
\begin{subfigure}{.49\linewidth}
    \centering
    \includegraphics[width=\linewidth]{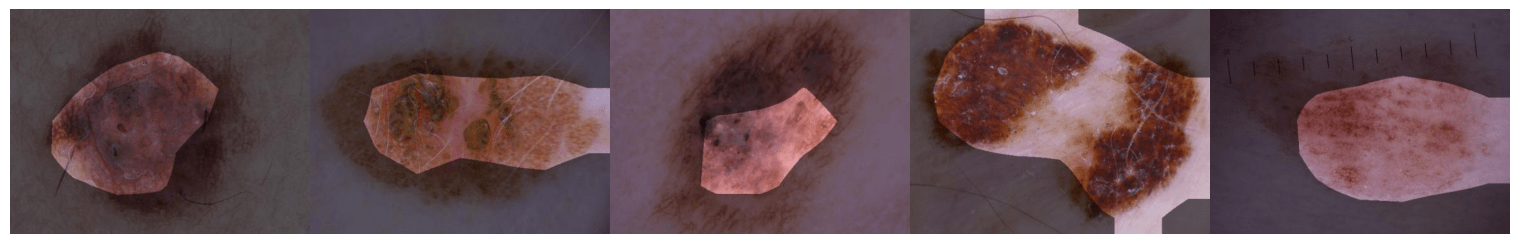}
    \caption{Pink structureless area with scattered brown dots or clods}
    \label{fig:sub1}
\end{subfigure}
\caption{Top-5 activating images for SAE features in ISIC2018}
\label{fig:TopActivatingISIC_sae}
\end{figure*}

\section{Explanations compared with baselines}
\label{sec:baselines}

\begin{figure*}[h]
    \centering
    \includegraphics[width=0.97\linewidth]{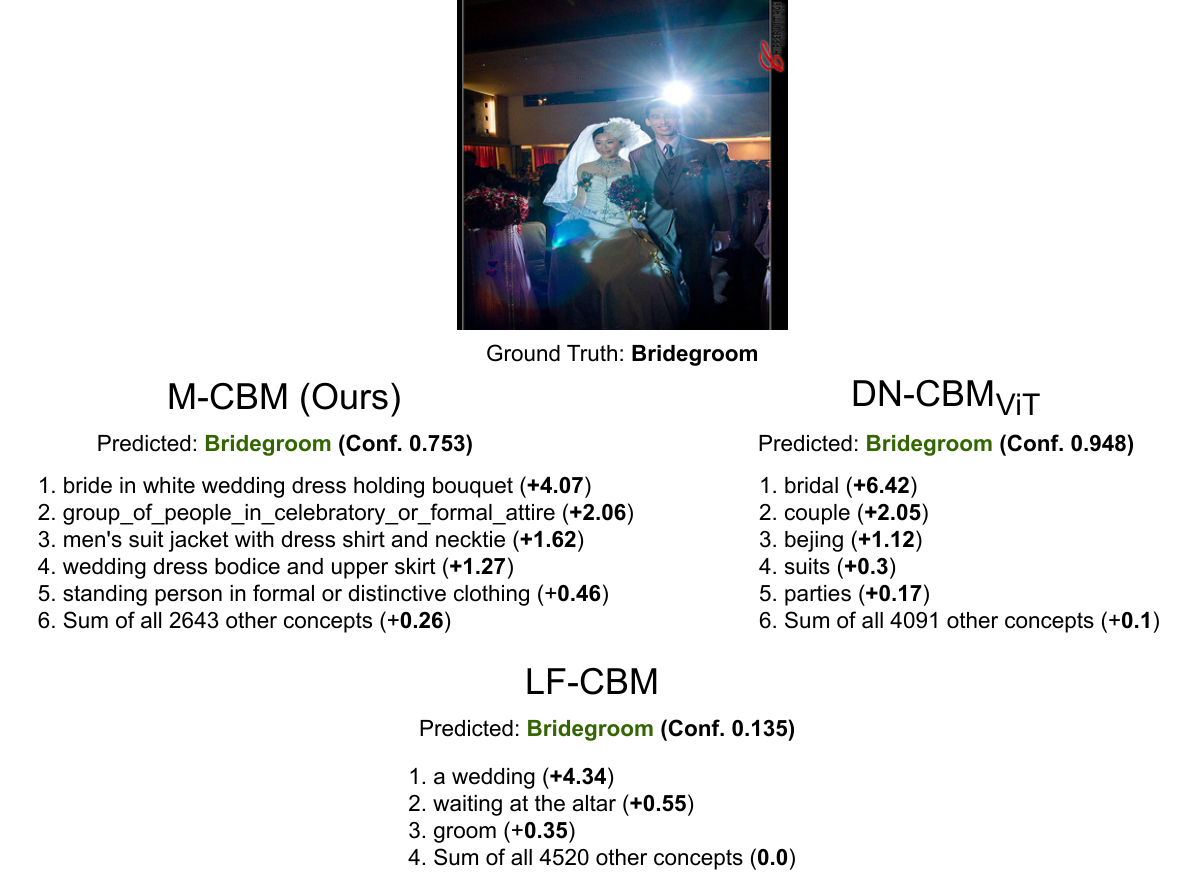}
    \caption{ImageNet Dataset
    }
    \label{fig:comparison1}
\end{figure*}

\begin{figure*}[h!]
    \centering
    \includegraphics[width=0.97\linewidth]{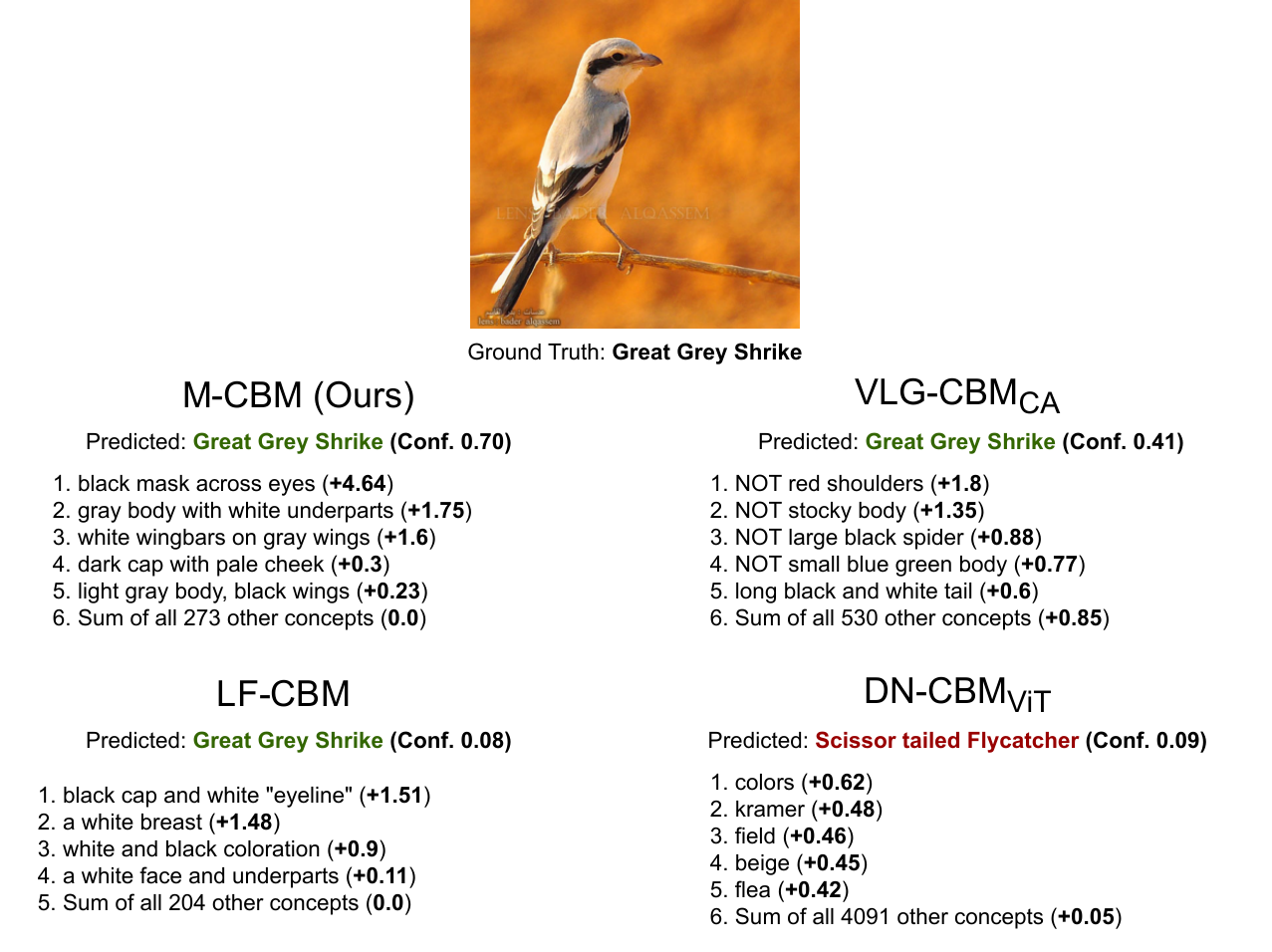}
    \caption{CUB Dataset
    }
    \label{fig:comparison2}
\end{figure*}

\begin{figure*}[ht!]
    \centering
    \includegraphics[width=0.97\linewidth]{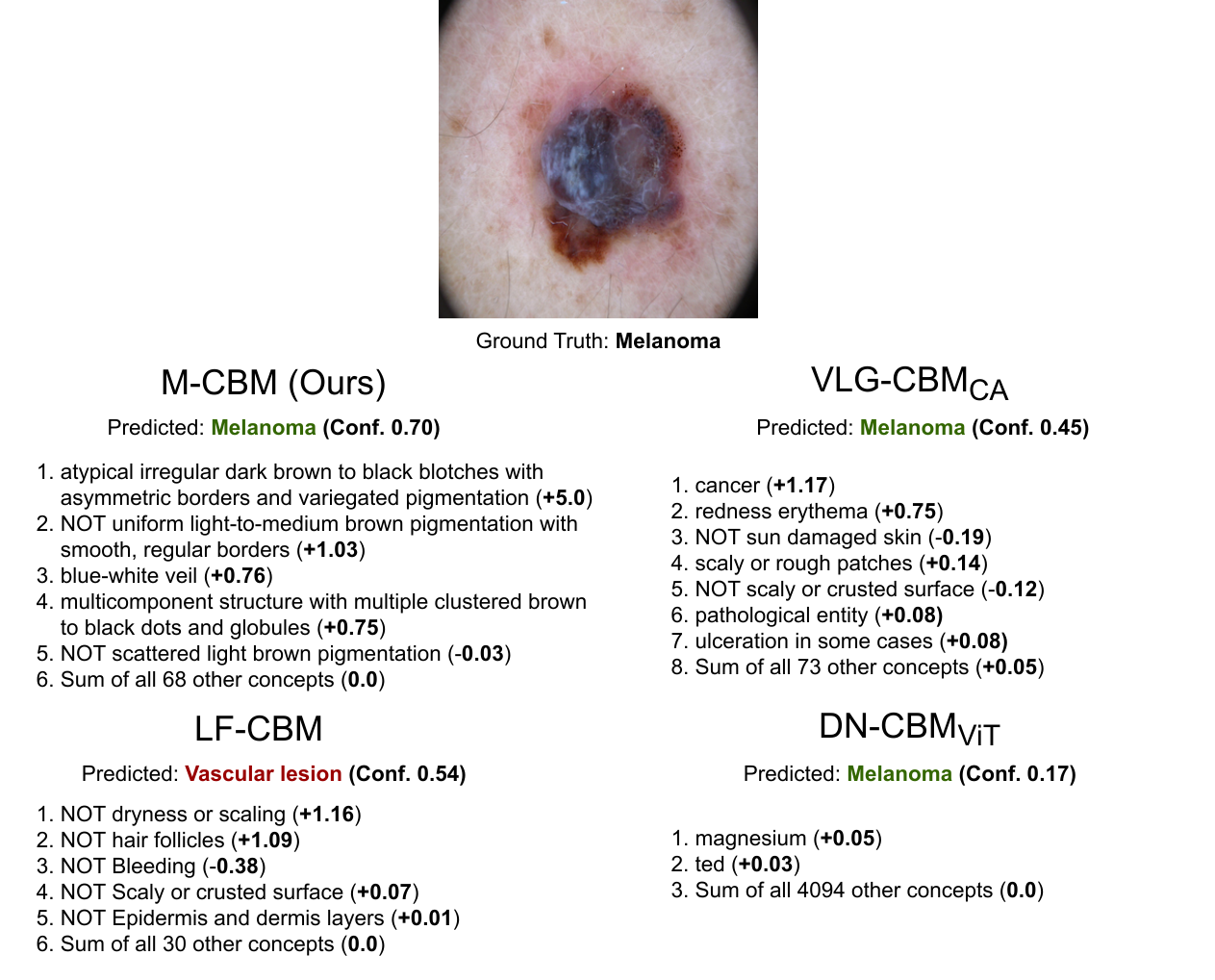}
    \caption{ISIC Dataset
    }
    \label{fig:comparison3}
\end{figure*}

\clearpage
\section{Performance with an open-source MLLM}
\label{app:open_source_mllm}
In this appendix, we evaluate how M-CBM behaves when replacing the proprietary components used for concept naming, dataset annotation, and concept merging with open-source alternatives while keeping the rest of the pipeline unchanged. For this experiment, we use \textit{InternVL3.5-241B-A28B} as the MLLM for both naming and annotation (it supports multi-image inputs, so we can preserve our prompting protocol), and we use \textit{e5-large-v2} to embed and merge concept names with the same similarity-based merging strategy as in the main paper and the same threshold (i.e., 0.98). Table~\ref{tab:MainAccResults_ncc_open} reports the downstream performance obtained with this fully open-source variant. On CUB, we obtain 71.40\% accuracy at NCC=5 and 74.02\% at NCC=avg; on ISIC2018, we obtain 67.33\% and 74.25\% accuracy, with balanced accuracy of 69.26\% and 72.20\%; and on ImageNet, we obtain 56.76\% accuracy at NCC=5 and 68.76\% at NCC=avg. Since we do not have ground-truth concept annotations, we follow the same proxy evaluation as in the main paper by annotating the test set with the same pipeline and then measuring ROC-AUC of concept predictions against these labels. Table~\ref{tab:concepts_open} reports the macro-average ROC-AUC across concepts and the average ROC-AUC over the worst 10\% concepts. On CUB, the ROC-AUC is 74.49\% (macro) and 58.14\% (worst 10\%); on ISIC2018, it is 74.36\% and 56.53\%; and on ImageNet, it is 70.97\% and 59.28\%. Overall, using the open-source MLLM leads to lower concept-prediction quality and lower downstream performance on all three datasets, but the effect is not uniform across them. On CUB and ISIC2018, the degradation is noticeable but relatively limited, whereas on ImageNet it is substantially larger. Qualitatively, we observe that InternVL3.5 tends to produce more generic and highly similar concept names, which increases the amount of merging and reduces the final concept vocabulary. For example, on CUB we obtain 195 final concepts instead of the 278 concepts produced with GPT-4.1, on ISIC2018 we obtain 38 concepts instead of 73, and on ImageNet we obtain 1768 concepts instead of 2648, again substantially fewer than with GPT-4.1. This effect appears to be particularly problematic on ImageNet, where the much larger and finer-grained set of classes seems to require more concepts and higher-quality annotations. This is also consistent with the lower proxy ROC-AUC values for concept prediction. At the same time, the same qualitative trend of lower annotation quality and fewer final concepts is also present on CUB and ISIC2018, suggesting that concept naming and annotation quality are important across all datasets, even when the downstream impact is smaller. One possible explanation for why the downstream drop is smaller on CUB and ISIC2018 is that these datasets may be more prone to information leakage, since the amount of information needed to separate classes may be substantially smaller than in ImageNet. Overall, these results make even more evident that the quality of concept naming and annotation is critical for downstream classification in M-CBM. On a positive note, since GPT-4.1 annotations are themselves far from perfect and the MLLM component is modular, stronger models may further improve concept quality and, in turn, downstream performance.

\begin{table}[t]
\caption{Accuracy results at NCC=5 and NCC=avg of M-CBM using an open-source MLLM (\textit{InternVL3.5-241B-A28B}) for naming and annotation and \textit{e5-large-v2} for concept merging.}
\centering
\scalebox{0.9}{
\begin{tabular}{l|cc|cc|cc|cc}
\toprule \midrule
\multicolumn{1}{l|}{Dataset} & \multicolumn{2}{c|}{CUB} & \multicolumn{4}{c|}{ISIC2018} & \multicolumn{2}{c}{ImageNet} \\
\midrule
\multicolumn{1}{l|}{Metrics} & \multicolumn{2}{c|}{Accuracy} & \multicolumn{2}{c|}{Accuracy} & \multicolumn{2}{c|}{Balanced Accuracy} & \multicolumn{2}{c}{Accuracy} \\
\midrule
Sparsity & NCC=5 & NCC=avg & NCC=5 & NCC=avg & NCC=5 & NCC=avg & NCC=5 & NCC=avg \\
\midrule
M-CBM & 71.40\% & 74.02\% & 67.33\% & 74.25\% & 69.26\% & 72.20\% & 56.76\% & 68.76\% \\
\midrule
\bottomrule
\end{tabular}
}
\label{tab:MainAccResults_ncc_open}
\end{table}

\begin{table}[t]
\caption{Evaluation of concept predictions on the test set for M-CBM using an open-source MLLM (\textit{InternVL3.5-241B-A28B}) for naming and annotation and \textit{e5-large-v2} for concept merging.}
\centering
\scalebox{1}{
\begin{tabular}{l|cc|cc|cc}
\toprule \midrule
\multicolumn{1}{l|}{Dataset} & \multicolumn{2}{c|}{CUB} & \multicolumn{2}{c|}{ISIC2018} & \multicolumn{2}{c}{ImageNet} \\
\midrule
\multicolumn{1}{l|}{\multirow{2}{*}{Metrics}} & \multicolumn{2}{c|}{ROC-AUC} & \multicolumn{2}{c|}{ROC-AUC} & \multicolumn{2}{c}{ROC-AUC} \\
\cmidrule[\lightrulewidth](l{0pt}r{0pt}){2-7}
 & Macro & Worst-10\% & Macro & Worst-10\% & Macro & Worst-10\% \\
\midrule \midrule
M-CBM & 74.49\% & 58.14\% & 74.36\% & 56.53\% & 70.97\% & 59.28\% \\
\midrule
\bottomrule
\end{tabular}
}
\label{tab:concepts_open}
\end{table}

\end{document}